\newtheorem{thm}{Theorem}
\newtheorem{prop}{Proposition}
\newtheorem{example}{Example}
\newtheorem{lemma}{Lemma}
\algrenewcommand\textproc{}% Used to be \textsc
\algnewcommand\algorithmicforeach{\textbf{for each}}
\definecolor{coolblack}{rgb}{0.0, 0.18, 0.39}
\definecolor{darkred}{rgb}{0.75, 0.0, 0.0}
\newcommand{\hide}[1]{}
\begin{document}

\title{Efficient Multiple Constraint Acquisition\footnote{This paper is an extended  
version of paper~\cite{mquacq} that appeared in the proceedings of CP-2018.} }
%\subtitle{Do you have a subtitle?\\ If so, write it here}

%\titlerunning{Short form of title}        % if too long for running head

\author{Dimosthenis C. Tsouros}
\author{Kostas Stergiou}

%\authorrunning{Short form of author list} % if too long for running head

\affil{
Dept. of Electrical \& Computer Engineering, \\ University of Western Macedonia,\\ Kozani, Greece\\
              \texttt{dtsouros@uowm.gr, kstergiou@uowm.gr} }

%\date{Received: date / Accepted: date}
% The correct dates will be entered by the editor
\date{}

\maketitle

\begin{abstract}
Constraint acquisition systems such as QuAcq and MultiAcq can assist non-expert users to model their problems as constraint networks by classifying (partial) examples as positive or negative. For each negative example, the former focuses on one constraint of the target network, while the latter can learn a maximum number of constraints. Two bottlenecks of the acquisition process where both these algorithms encounter problems are the large number of queries required to reach convergence, and the high cpu times needed to generate queries, especially near convergence. In this paper we propose algorithmic and heuristic methods to deal with both these issues. We first describe an algorithm, called MQuAcq, that blends the main idea of MultiAcq into QuAcq resulting in a method that learns as many constraints as MultiAcq does after a negative example, but with a lower complexity. A detailed theoretical analysis of the proposed algorithm is also presented.
%We also present a technique that boosts the performance of constraint acquisition by reducing the number of queries significantly. 
Then we turn our attention to query generation which is a significant but rather overlooked part of the acquisition process. We describe %in detail 
how query generation in a typical constraint acquisition system operates, and we propose heuristics for improving its efficiency. Experiments from various domains demonstrate that our resulting algorithm that integrates all the new techniques does not only generate considerably fewer queries than QuAcq and MultiAcq, but it is also by far faster than both of them, in average query generation time as well as in total run time, and also largely alleviates the premature convergence problem. 
%\keywords{constraint acquisition, learning, modeling}
% \PACS{PACS code1 \and PACS code2 \and more}
% \subclass{MSC code1 \and MSC code2 \and more}
\end{abstract}

\section{Introduction}
\label{intro}

Constraint programming (CP) has made significant progress over the last decades, and is now considered as one of the foremost paradigms for solving combinatorial problems. The basic assumption in CP is that the user models the problem and a solver is then used to solve it. Despite the many successful applications of CP on combinatorial problems from various domains, there are still challenges to be faced in order to make CP technology even more widely used. 

A major bottleneck in the use of CP is modeling. Expressing a combinatorial problem as a constraint network requires considerable expertise in the field \cite{freuder1999modeling}. To overcome this obstacle, several techniques have been proposed for modeling a constraint problem automatically, and nowadays automated modeling is regarded as one of the most important aspects of CP \cite{freuder1999modeling,o2010automated,freuder2014grand,lombardi2018boosting,de2018learning,freuder2018progress}. Along these lines, an area of research that has started to attract a lot of attention is that of {\em constraint acquisition} where the model of a constraint problem is acquired (i.e. learned) using a set of examples that are posted to a human user or to a software system \cite{bessiere2017constraint,de2018learning}.  

Constraint acquisition is an area where CP  
meets machine learning, as it can be formulated as a 
concept learning task.
Constraint acquisition can come in various flavours depending on factors such as whether the learner can post queries to the user dynamically, and the type of queries that can be posted and answered. In {\em passive} acquisition, examples of solutions and non-solutions are provided by the user. Based on these examples and their classification by the user as positive (solutions) or negative (non-solutions), 
the system learns a set of constraints that correctly classifies all the given examples.
A limitation of passive acquisition (and passive learning in general) is the requirement, from the user's part, to provide diverse examples of solutions and non-solution to the system, especially in problems with irregular structure. 

In contrast, in {\em active} or {\em interactive} acquisition, the learner interacts with the user dynamically while acquiring the constraint network. 
In such systems, the basic query is to ask the user to classify an example as solution or not solution. 
This "yes/no" type of question is called membership query~\cite{angluin1988queries}, 
and this is the type of query that has received the most attention in active constraint acquisition.

A state-of-the-art interactive acquisition algorithm is QuAcq \cite{bessiere2013constraint}. For each example that is classified as negative by the user, QuAcq is able to learn one constraint of the target network by setting a series of partial queries to the user. An alternative algorithm, called MultiAcq, finds all the constraints of the target network violated by a generated example that is classified as negative \cite{arcangioli2016multiple}. However, MultiAcq needs a linear number of queries in the size of the example to learn each constraint, whereas QuAcq has a logarithmic complexity. % in terms of the number of queries. 

In general, active acquisition has several advantages. First of all, it decreases the number of examples necessary to converge to the target set of constraints. In addition, it does not require the existence of diverse examples of solutions or non-solutions to the problem. This is a significant advantage especially if the problem has not already been solved. 
Another advantage is that the user does not need to be human. It might be a previous system developed to solve the problem. For example, active learning has been used to automatically acquire CSPs which model the elementary actions of a robot by asking queries to the simulator of the robot \cite{paulin2008automatic}.

However, active learning still presents computational challenges regarding the number of queries required and the cpu time needed to generate them. 
%It has been shown that the number of membership queries needed to converge can be exponentially large \cite{bessiere2012non}. 
Despite the good theoretical bound of QuAcq and QuAcq-like approaches in terms of the number of queries required to learn a network, the generation of a membership query is an NP-complete problem. Hence, it can be too time-consuming, and therefore annoying, especially if the system interacts with a human user. For example, QuAcq can take more than 30 minutes to generate a query for the model acquisition of Sudoku puzzles near convergence. 

In this work, we present methods to deal with the challenges of interactive learning. We first introduce an algorithm, called MQuAcq, that blends the main idea of MultiAcq into QuAcq, achieving a better complexity bound than MultiAcq. This algorithm uses the reasoning of QuAcq when searching for constraints to learn once a negative query is encountered, but instead of focusing on one constraint, it learns a maximum number of constraints, just like MultiAcq does. But whereas MultiAcq learns constraints of the target network in a number of queries linear in the size of the example, our proposed approach finds constraints in a logarithmic number of queries. %Experiments demonstrate that MQuAcq outperforms both QuAcq and MultiAcq. 
We make a detailed theoretical analysis of MQuAcq, proving its correctness and its complexity in terms of required queries.
We also propose an optimization on the process of locating scopes that reduces the number of queries needed to learn the target constraint network significantly. This is done by avoiding posting redundant queries to the user. 

Then we focus on the query generation process, which is an important step of constraint acquisition that has not been discussed in detail in the literature. We describe the query generation process of standard interactive acquisition systems in detail, and we propose heuristics that can be applied during query generation to boost the performance of constraint acquisition algorithms.

First, we present a heuristic that generalizes the idea of allowing partial queries to be posted to the user. Instead of using partial queries only when trying to focus on one or more constraints after a complete example has been classified as negative, we allow the generation of partial examples to be posted as partial queries to the user. As experiments demonstrate, this can reduce the time needed for the system to converge, resulting in avoidance of premature convergence and reduced total run time for the acquisition process. Next, we focus on the generation of more informative queries, i.e. queries that can help to reduce the version space faster.
We propose a variable ordering heuristic for the query generation process, aiming at generating queries with more information, and achieving to reduce the maximum cpu time needed for query generation. We then propose a value ordering heuristic that cuts down the number of queries required.

Experimental results with benchmark problems from various domains demonstrate that the integration of our methods results in an algorithm that considerably outperforms both QuAcq and MultiAcq as it generates significantly fewer queries, it is up to one order of magnitude faster in average query generation time, it is by far superior in total run time, and it largely alleviates the premature convergence problem from which both QuAcq and MultiAcq suffer.

In addition, experiments show that our proposed algorithm scales up quite well in terms of the number of queries required, while the time performance, being highly dependant on the size of the target network, can rise sharply. Also, it is shown that learning problems with expressive biases scales well with our proposed algorithm, even when using a big language to construct the bias, especially regarding the number of generated queries.

The rest of this paper is organized as follows. Related work is presented in Section~\ref{sec:rel}. Section~\ref{sec:background} introduces the necessary background on constraint acquisition. In Section~\ref{sec:quacq} we review the algorithms QuAcq and MultiAcq. Section~\ref{sec:mquacq} describes the new algorithm that we propose. In Section~\ref{findScope2} we describe the optimization to the process of locating scopes.
Section~\ref{sec:query} details the query generation process. In Section~\ref{sec:heur} we describe the proposed heuristics.
An experimental evaluation is presented at Section~\ref{sec:experiments}. In Section~\ref{sec:disc} we discuss some important aspects of MQuAcq and point to future work, while Section~\ref{sec:conclusion} concludes the paper.

\section{Related Work}
\label{sec:rel}

An early approach to passive constraint acquisition is the algorithm ConAcq.1 \cite{bessiere2004leveraging,bessiere2005sat,bessiere2017constraint}. Based on the examples of solutions and non-solutions provided by the user, the system learns a set of constraints that correctly classifies all examples given so far. A passive learning method based on inductive logic programming was proposed in \cite{lallouet2010learning}. This system uses background knowledge on the structure of the problem to learn a representation of the problem, correctly classifying the examples given. Another approach to passive learning is the ModelSeeker system \cite{beldiceanu2012model}. In this approach, the user has to provide positive examples to the system, which are then arranged as a matrix. Then the system uses the global constraints catalog to identify specific global constraints that are present in the model. ModelSeeker has been shown to be very effective in extracting a model from highly structured problems, requiring only a few positive examples to learn the model of problems such as Sudoku.

Concerning active acquisition, an early related work is the {\em matchmaker} agent which interactively asks the user to explain why a proposed example is considered as incorrect, by providing one of the constraints that are violated \cite{freuder1998suggestion}. 
An approach to interactive constraint acquisition using version spaces is presented in \cite{o2004study}. With this approach, examples are provided by the user that can be used to identify a version space of constraints and then the system attempts to generalize the user's examples by choosing a hypothesis from the current version space.
Another active learner presented in the literature is ConAcq.2 \cite{bessiere2007query,bessiere2017constraint} and its extension \cite{shchekotykhin2009argumentation}. Both these systems acquire constraint models using membership queries that are posted to the user \cite{angluin1988queries}. 

A state-of-the-art interactive acquisition algorithm, again based on membership queries, is QuAcq \cite{bessiere2013constraint}. QuAcq is able to ask the user to classify partial queries, i.e. incomplete variable assignments, which may be easier for the user to answer. Also, asking partial queries gives the system the capability to focus on the scope of a constraint that is violated and hence learn the constraint. If the answer to a membership query posted by QuAcq is positive, the system reduces the search space by removing the set of constraints violated by this example. If the answer is negative, QuAcq asks a series of partial queries to locate the scope of one of the violated constraints of the target network. 

An attempt to make QuAcq more efficient was presented by Arcangioli et. al with the MultiAcq algorithm \cite{arcangioli2016multiple}. Given that the generation of a useful membership query is not easy and that there may be several constraints that can be learned by each query, it is very likely that the system can learn more information from a negative query. So, instead of learning only one constraint, MultiAcq finds all the constraints of the target network violated by a generated example that is classified as negative. However, MultiAcq needs a linear number of queries in the size of the example to learn a constraint. On the other hand, QuAcq has a logarithmic complexity in terms of the number of queries. 

Attempts to reduce the time needed to generate a membership query were presented in \cite{addi2018time,pquacq}. Apart from membership queries, other types of queries such as recommendation and generalization ones, have also been proposed to be used in interactive constraint acquisition \cite{daoudi2016constraint,bessiere2014boosting}. The use of such queries can reduce the total number of queries needed to learn a model, but the drawback is that they require a higher level of expertise from the user's part.

Active constraint acquisition is a special case of query-directed learning, also known as {\em exact learning} \cite{BSHOUTY1995146,bshouty2018exact}. Concept learning via queries has been widely studied in the theoretical machine learning literature. There are well-known results for several classes of concepts~\cite{bshouty1996asking,bshouty2018exact}, e.g. conjunctions of Horn clauses~\cite{angluin1992learning}, $k$-term DNF~\cite{blum1995fast} (or CNF) formula, decision trees~\cite{BSHOUTY1995146} etc.

In the learning model introduced in~\cite{angluin1988queries} and used by the most approaches to exact learning, two types of queries are used. The {\em membership} query, mentioned above, that requests the user to classify a given example as positive or negative and the {\em equivalence} query, which asks the user to decide whether the given concept is equivalent to the target concept. In case of a negative answer, the user must then provide a counterexample that proves why the two concepts are not equivalent. 

As stated in~\cite{bessiere2017constraint}, in the context of constraint acquisition, posting equivalence queries to the user and expecting counterexamples to be returned is not feasible from a practical point of view as the assumption is that the user does not know the constraint network and does not have the skills to model the target concept directly. However, in the theoretical case where the user can answer equivalence queries, there exist theoretical results proving that a constraint network is learnable by equivalence queries alone~\cite{bessiere2017constraint}.
It has to be noted that constraint networks are quite complex to acquire and the results of generic concept learning algorithms cannot directly be compared to constraint acquisition algorithms. Also, operating with a bias of bounded arity constraints and without handling disjunctions of constraints, the current constraint acquisition algorithms cannot be applied to learn most of the boolean functions which have been studied in exact learning.

\section{Background}
\label{sec:background}

\subsection{Vocabulary and Constraint Networks}
\label{sec:voc}

The \textit{vocabulary} $(X, D)$ is a finite set of $n$ variables $X = \{x_1 , . . . , x_n \}$ and a set of domains $D = \{D(x_1), . . . , D(x_n)\}$, where $D(x_i) \subset \mathbb{Z}$ is the finite set of values for $x_i$. The vocabulary is the common knowledge shared by the user and the constraint acquisition system. 

A \textit{constraint} $c$ is a pair (rel($c$), scope($c$)), where scope($c$) $\subseteq X$ is the \textit{scope} of the constraint and rel($c$) is a relation between the variables in scope($c$) that specifies which of their assignments are allowed. $|scope(c)|$ is called the \textit{arity} of the constraint. We denote by $c_{ij}$ a binary constraint between variables $x_i$ and $x_j$, with $c$ being the relation. A \textit{constraint network} is a set $C$ of constraints on the vocabulary $(X, D)$. 
A constraint network that contains at most one constraint for each subset of variables (i.e. for each scope) is called a {\em normalized constraint network}.

An example $e_Y$ is an assignment on a set of variables $Y \subseteq X$. $e_Y$ is rejected by a constraint $c$ iff scope($c$) $\subseteq Y$ and the projection $e_{scope(c)}$ of $e_Y$ on the variables in scope($c$) %of the constraint
is not in rel($c$). An assignment $e_Y$ is a partial solution iff it is accepted by all the constraints $c \in C$ where $scope(c) \subseteq Y$. A complete assignment that is accepted by all the constraints in $C$ is a solution to the problem. $sol(C)$ denotes the set of solutions of $C$. A partial assignment $e_Y$ which is accepted by $C$ is not necessarily part of a complete solution.

A \textit{redundant} or \textit{implied} constraint $c \in C$ is a constraint that if removed from the constraint network, the set of solutions $sol(C)$ remains the same. In other words, if all the other constraints in $C$ are satisfied then $c$ is also satisfied.

\subsection{Interactive Learning}
\label{sec:inter}

Using terminology from machine learning, a \textit{concept} is a Boolean function over $D^X = \prod_{x_i \in X} D(x_i) $, that assigns to each example $e \in D^X$ a value in $\{0, 1\}$, or in other words classifies it as negative or positive. The target concept $f_T$ is a concept that assigns 1 to $e$ if $e$ is a solution of the problem and 0 otherwise. In constraint acquisition, the target concept is the target constraint network $C_T$, such that $sol(C_T) = \{e \in D^X \mid f_T(e) = 1\}$.  Hereafter, following the literature, we will assume that the target constraint network is normalized.

Besides the vocabulary, the learner has a \textit{language} $\Gamma$ consisting of {\em bounded arity} constraints.
The \textit{constraint bias} $B$ is a set of constraints on the vocabulary $(X, D)$, built using the constraint language $\Gamma$, from which the system can learn the target constraint network. $\kappa_B(e_Y)$ represents the set of constraints in $B$ that reject $e_Y$.

The classification question asking the user to determine if an example $e_X$ is a solution to the problem that the user has in mind is called a \textit{membership query} $ASK(e)$. The answer to a membership query is positive if $f_T(e) = 1$ and negative otherwise. A \textit{partial query} $ASK(e_Y)$, with $Y \subseteq X$, asks the user to determine if $e_Y$, which is an assignment in $D^Y$, is a partial solution or not, i.e. if it is accepted by all the constraints $c \in C$ where $scope(c) \subseteq Y$. A classified assignment $e_Y$ is labelled as positive or negative depending on the answer of the user to $ASK(e_Y)$. Following the literature, we assume that all queries are answered correctly by the user. From now on, we will sometimes use the terms query and example interchangeably.

A {\em minimal scope} in a negative example $e_Y$ is a subset of variables $S \subseteq Y$ such that ASK($e_S$) = ``no'' and for all $x_i \in S$, ASK($e_{S \setminus x_i}$) = ``yes''. Thus, a minimal scope $S$ is the scope of a violated constraint $c \in C_T$, such that there does not exist any violated constraint $c^{\prime} \in C_T$ with $scope(c) \subset S$.

To better understand the terms presented, consider the following example, which we use as a running example in the rest of the paper.

\begin{example}
\label{ex:back}

Consider a problem consisting of $8$ variables with domains $\{ 1, ... , 8 \}$. The vocabulary $(X,D)$ given to the system would be $X = \{ x_1, ..., x_8 \}$ and $D = \{D(x_1), . . . , D(x_8)\}$ with $D(x_i) = \{ 1, ... , 8 \}$. Assume that the problem the user has in mind has to satisfy the constraints $x_1 \neq x_2$, $x_1 \neq x_3$ and $x_3 \neq x_4$. Thus, the target network $C_T$ would be the set $\{ \neq_{12}, \neq_{13}, \neq_{34} \}$. Also, for simplicity assume that the language $(\Gamma)$ given to the system by the user contains only the relation $\{\neq$\}. In this case, the bias $B$ would contain the given relation for all the possible scopes. As it is a binary relation, $B = \{ \neq_{ij}$ $\mid$ $1 \leq i < 8 \land i < j \leq 8\}$. In addition, given an example $e = \{1,1,1,2,3,4,5,6\}$, $\kappa_B(e) = \{\neq_{12}, \neq_{13}, \neq_{23}\}$. The scopes $\{x_1,x_2\}$, $\{x_1,x_3\}$, $\{x_2,x_3\}$ are minimal scopes, as there is no constraint in $C_T$ with a scope $S$ being a subset of the scope of any of them. If $e$ is posted to the user to be classified as positive or negative then ASK($e$) is a complete membership query. If only a partial assignment, for instance $e_Y= \{1,1,1,2,-,-,-,-\}$ with $Y = \{x_1,x_2,x_3,x_4\}$, is posted to the user then $e_Y$ is a partial query. 

\end{example}

In interactive constraint acquisition the system generates a set $E$ of complete and partial examples, which are labelled by the user as positive or negative. A constraint network $C$ agrees with $E$ if $C$ accepts all examples labelled as positive in $E$ and rejects those labelled as negative. The \textit{learned network} $C_L$ has to agree with $E$.

A (complete or partial) query $q = e_Y$ is called \textit{irredundant} (or \textit{informative}) iff the answer to $q$ is not predictable.  That is, $q$ is irredundant iff it is not classified as positive by all the constraints in the bias $B$, which means that $\kappa_B(e_Y)$ is not empty. At the same time, $q$ should be accepted by the learned network $C_L$ otherwise it will be classified as negative. %One query can be ``more'' informative than another, by increasing the size of $\kappa_B(e_Y)$. 
Considering Example~\ref{ex:back}, the query $q$ = ASK($e$) is irredundant as it is not classified as positive by all the constraints from $B$. In this case, a query $q$ = ASK($e$), with $e = \{1,2,3,4,5,6,7,8\}$, would be redundant as it does not violate any constraint from $B$, thus it is surely positive.

The acquisition process has \textit{converged} on the learned network $C_L \subseteq B$ iff $C_L$ agrees with $E$ and for every other network $C \subseteq B$ that agrees with $E$, we have $sol(C) = sol(C_L)$. If the first condition is true ($C_L$ agrees with $E$) but the second condition has not been proved, we have \textit{premature convergence}.
%Convergence is the constraint acquisition problem we are interested in. 
If there does not exist a constraint network $C \subseteq B$ such that $C$ agrees with $E$ then the acquisition process has \textit{collapsed}. This happens when the target constraint network is not included in the bias, i.e. $C_T \nsubseteq B$.

\section{Algorithms for Interactive Constraint Acquisition}
\label{sec:quacq}

In this section we describe the state-of-the-art QuAcq and MultiAcq algorithms for interactive constraint acquisition. 

State-of-the-art constraint acquisition algorithms are based on the version space learning paradigm. Initially, the system uses the given language $\Gamma$ to construct the bias $B$, containing the ``candidate'' constraints. Then the system iteratively posts a series of membership queries to the user in order to learn the constraints of the target network. Each example posted as a query must satisfy $C_L$, i.e. the network that has already been learned so far,  
and violate at least one constraint from $B$. A query that satisfies the above criteria is called {\em informative}, as whatever the user's answer is, the version space will be pruned. In case the answer is positive, each constraint $c \in B$ that violates the posted example can be removed from $B$ (i.e. all the constraint networks containing $c$ are removed from the version space). In case the answer is negative, we know that one or more of the violated constraints are certainly in $C_T$. So, the system will search to find the scope of one or all of them, depending on the 
algorithm used. % by using partial queries. 
This is done through a function called {\em FindScope} in QuAcq. A similar function, called {\em FindAllScopes}, is used by MultiAcq.
Once a scope has been located, the function {\em FindC}~\cite{bessiere2013constraint} is used to learn the specific constraint (i.e. its relation).

\subsection{QuAcq}

QuAcq learns one constraint via each generated negative query. Once a generated example is classified as negative, QuAcq calls the recursive function {\em FindScope}  to discover the scope of one of the violated constraints, as follows. It successively maps the problem of finding a constraint to a simpler problem by removing entire blocks of variables from the query and asking partial queries to the user. If after the removal of some variables the answer of the user to the partial query posted is ``yes'', then we know that the removed block contains at least one variable from the scope of a violated constraint. So, then {\em FindScope} focuses on this block. When, after repeatedly removing variables, the size of the considered block becomes 1 (i.e. the block contains a single variable), this variable certainly belongs to a violated constraint. {\em FindScope} achieves a logarithmic complexity in terms of the number of queries posted to the user by splitting the variables approximately in half after each removal. 

QuAcq (Algorithm \ref{alg:quacq}) starts with an empty $C_L$ and a bias $B$ containing all the possible constraints that can be built using the constraint language $\Gamma$. The algorithm iteratively posts queries to the user, in the form of complete assignments. According to the classification of each query, the learned network $C_L$ is augmented with a new constraint from $B$ or some constraints are removed from $B$. If the algorithm terminates having learned the target network, it has converged. Otherwise, it has collapsed. 

In more detail, QuAcq first checks if the currently learned network has at least one solution. This is done in case the problem that the user has in mind is unsolvable. If indeed it is, the acquisition process collapses (line 3). Otherwise, QuAcq generates a complete assignment $e$ which satisfies the currently built $C_L$ and is rejected by at least one constraint in $B$ (line 4). 
This is an important step that is not described in detail in the literature. We focus on the query generation step in Section~\ref{sec:query}. 
If no such example exists, then the system has converged to the target network. After generating a suitable example $e$, this example is posted as a membership query to the user. If $e$ is classified as positive (i.e. it is a solution) then all constraints that violate it are removed from $B$ (line 6), as they definitely cannot be part of the target network. If $e$ is negative, the algorithm tries to find one constraint that is violated by $e$ to add to $C_L$ by calling functions {\em FindScope} and {\em FindC}  (line 8). Once the system learns the constraint (line 10), if no collapse occurs (line 9), it returns to the query generation step.  

% Algorithm QuAcq -------------------------------------------------

\begin{algorithm}
\caption{QuAcq: Quick Acquisition}\label{alg:quacq}
\begin{footnotesize}
\begin{algorithmic}[1]

\Require $B$, $X$, $D$ ($B$: the bias, $X$: the set of variables, $D$: the set of domains)
\Ensure $C_L$ : a constraint network

\State $C_L \leftarrow \emptyset$;

\While {true}

	\If{ $sol(C_L) = \emptyset$ } \Return ``collapse'';
	\EndIf

	\State Generate $e$ in $D^X$ accepted by $C_L$ and rejected by $B$;

	\If{ $e$ = nil } \Return ``$C_L$ converged'';
	\EndIf

	\If{ $ASK(e)$ = ``yes'' }  $B \leftarrow B  \setminus  \kappa_B(e) $;
	\Else

		\State $c \leftarrow FindC(e, FindScope( e, \emptyset, X, false ) )$;

		\If{ $c$ = nil } \Return ``collapse'';
		\Else \quad $C_L \leftarrow C_L \cup \{c\} $;
		\EndIf

	\EndIf

\EndWhile

\end{algorithmic}
\end{footnotesize}
\end{algorithm}

% End of algorithm QuAcq ------------------------------------------

{\em FindScope} (Algorithm \ref{alg:findscope}) takes as parameters an example $e$ that violates at least one constraint from the bias, two sets of variables $R$ and $Y$,
and a Boolean variable $ask\_query$. In the first call to {\em FindScope} from QuAcq, $e$ is the example generated in line 4 of QuAcq that is classified as negative, $R$ is the empty set and $Y$ the set of all the variables of the problem. $ask\_query$ is set to false as we already know that $e$ is a negative example.

% Algorithm findScope -------------------------------------------------

\begin{algorithm}
\caption{{\em FindScope}}\label{alg:findscope}
\begin{algorithmic}[1]

\Require $e$, $R$, $Y$, $ask\_query$ ($e$: the example, $R$,$Y$: sets of variables, $ask\_query$: boolean)
\Ensure $Scope$ : a set of variables, the scope of a constraint in $C_T$

\Function{FindScope}{$e$, $R$, $Y$, $ask\_query$}	

	\If{ $ask\_query$ } 

		\If{ $ASK(e_R)$ = ``yes'' } $B \leftarrow B  \setminus  \kappa_B(e_R) $;
		\Else \quad \Return $\emptyset$;
		\EndIf

	\EndIf

	\If{ $|Y| = 1$ } \Return $Y$;
	\EndIf

	\State split $Y$ into $<Y_1, Y_2>$ such that $|Y_1| = \lceil |Y|/2 \rceil $;

	\State $S_1 \leftarrow FindScope(e,R \cup Y_1, Y_2, true)$;
	\State $S_2 \leftarrow FindScope(e,R \cup S_1, Y_1, (S_1 \neq \emptyset))$;

	\State \Return $S_1 \cup S_2$;

\EndFunction

\end{algorithmic}
\end{algorithm}

% End of algorithm findScope ------------------------------------------

An invariant of {\em FindScope} is that the example $e$ violates at least one constraint whose scope is a subset of $R \cup Y$. 
If {\em FindScope} is called with $ask\_query$ = true it asks the user if $e_R$ is positive or not (line 3). If the answer is yes, it removes all the constraints from the bias that reject $e_R$.
Otherwise, it returns the empty set (line 3). {\em FindScope} reaches line 5 only in the case where $e_R$ does not violate any constraint. Hence, because as mentioned above $e$ violates at least one constraint whose scope is a subset of $R \cup Y$, if $Y$ is a singleton, the variable it contains surely belongs to the scope of a constraint that is violated. 
That is because $e_R$ does not violate any constraint (because we have reached at this point), but we know that $e_{R \cup Y}$ is a negative example. 
In this case the function returns $Y$.

If none of the return conditions is satisfied, the set $Y$ is split in two balanced parts (line 6) and the algorithm searches recursively in the sets of variables built using these parts for the scope of a violated constraint, in a logarithmic number of steps (lines 7-9).
Function {\em FindScope} posts partial queries to the user until it finds the scope of a constraint that is violated. A potential deficiency is the fact that if a question to the user violates, say 3 constraints from $B$, and the answer was negative, then after removing some variables from $Y$, if the partial query is  still violating 3 constraints from $B$, {\em FindScope} will ask the user to classify the partial query again. However, there is no need for this because it is certain that the partial query will still be classified as negative. In Section~\ref{findScope2} we propose a fix to this problem.

After the system has located the scope of a violated constraint, it calls function {\em FindC} (Algorithm \ref{alg:findc}) to find the violated constraint. {\em FindC} asks partial queries to the user in the scope returned by {\em FindScope} to locate the violated constraint.
Function {\em FindC} takes as parameters $e$ and $Y$, with $e$ being the negative example in which {\em FindScope} found that there is a violated constraint from the target network $C_T$, and $Y$ being the scope of that constraint.

% Algorithm findC -------------------------------------------------

\begin{algorithm}
\caption{{\em FindC}}\label{alg:findc}
\begin{algorithmic}[1]

\Require $e$, $Y$ ($e$: the example, $Y$: The scope to search)
\Ensure $c$ : a constraint in $C_T$

\Function{FindC}{$e$, $Y$}	

	\State $B \leftarrow B \setminus \{c_Y 	\mid C_L \models c_Y \}$;

	\State $ \Delta \leftarrow \kappa_B(e_Y)$;

	\If{ $\Delta = \emptyset$ } \Return $\emptyset$;
	\EndIf

	\While {true}

		\State Generate $e^{\prime}$ in $D^Y$ accepted by $C_L$ and rejected by $\Delta$, with $\kappa_\Delta(e_Y) \neq |\Delta|$;

		\If{ $e^{\prime} = nil$ }
			\If{ $\Delta = \emptyset$ } \Return $\emptyset$;
			\Else \quad \Return random $c$ in $\Delta$ ;
			\EndIf

		\EndIf

		\If{ $ASK(e^{\prime})$ = ``yes'' } 
			\State $B \leftarrow B  \setminus  \kappa_B(e^{\prime}) $;
			\State $\Delta \leftarrow \Delta  \setminus  \kappa_\Delta(e^{\prime}) $;
		\Else \quad $\Delta \leftarrow \kappa_\Delta(e^{\prime}) $;
		\EndIf

	\EndWhile

\EndFunction

\end{algorithmic}
\end{algorithm}

% End of algorithm findC ------------------------------------------

In more detail, {\em FindC} first removes from the bias the constraints with scope $Y$ that are implied by the learned network $C_L$ (line 2). Next, set $\Delta$ is initialized to the candidate constraints, i.e. the constraints from $B$ with scope $Y$ that are violated by $e$ (line 3). If there are no candidate constraints then the empty set is returned (line 4) resulting in a collapse for QuAcq. In line 5 it enters its main loop in which it posts partial queries to the user. In line 6, a partial example $e^{\prime}$ is generated that is accepted by $C_L$ and rejected by $\Delta$ but not by all of its constraints. This is done to reduce the number of the candidate constraints whatever the answer of the user may be. If no such example exists (line 7), this means
that any of these constraints could be in $C_L$, so one constraint is randomly returned, except if $\Delta$ is empty (lines 8-9). If an example was found then it is posted as a query to the user (line 10). If the answer of the user is ``yes'' then all constraints rejecting it are removed from $B$ and $\Delta$ (lines 11-12), otherwise all constraints accepting it are removed from $\Delta$ (line 13).

Another version of the {\em FindC} function which fixes a problem of Algorithm \ref{alg:findc} is described in~\cite{bessiere2016new}. Namely, in case the target constraint network contains two constraints with scopes $S$ and $S^{\prime}$ such that $S \subset S^{\prime}$ then Algorithm \ref{alg:findc} is not correct. This is because if an example is classified as negative, then when {\em FindC} removes from $\Delta$ all the constraints accepting it, at line 13, it does this under the assumption that a constraint in the scope it currently searches violates the example. %This is done to focus on the block of constraints rejecting the example to search for a constraint of $C_T$. 
However, the example may have been rejected because of a constraint in a subscope and not by a constraint in the current scope that is searched. In this case, the constraints accepting the example should not be removed from $\Delta$, but Algorithm \ref{alg:findc} will remove them. The {\em FindC} function introduced in~\cite{bessiere2016new} fixes this problem.
These versions of {\em FindC} can deal only with normalized constraint networks. In order to handle non-normalized constraints a different version of {\em FindC} should be used. However, developing such a method is not within the scope of this paper. Thus, we assume that the target constraint network is normalized, following the literature. 

\subsection{MultiAcq}

Given that there may be several constraints from the target network that are violated by a generated membership query, it is very likely that the system can learn more information from a negative generated example (i.e. acquire more constraints). This is what MultiAcq tries to do, learning a maximum number of constraints from each negative example \cite{arcangioli2016multiple}. This is done by using function {\em FindAllScopes}. After a negative answer to a query $e_Y$, it posts a series of partial queries by removing one variable from $Y$ for each query. In case in all of these calls the example is positive then $Y$ is the scope of a violated constraint. Otherwise, it focuses on all the negative partial queries to find minimal scopes. 

In more detail, MultiAcq (see Algorithm \ref{alg:multiacq}) takes as input a bias $B$ and returns a constraint network $C_L$ equivalent to the target network $C_T$ like QuAcq does. It iteratively generates an example like QuAcq and then the function {\em FindAllScopes} is called to learn a maximum number of constraints violated by the specific example (line 8). Before the call of {\em FindAllScopes}, it initializes the set $MSes$, in which it will store the minimal scopes found, to the empty set.

% Algorithm MultiAcq -------------------------------------------------

\begin{algorithm}
\caption{MultiAcq: Multiple Acquisition}\label{alg:multiacq}
\begin{footnotesize}
\begin{algorithmic}[1]

\Require $B$, $X$, $D$ ($B$: the bias, $X$: the set of variables, $D$: the set of domains)
\Ensure $C_L$ : a constraint network

\State $C_L \leftarrow \emptyset$;

\While {true}

	\If{ $sol(C_L) = \emptyset$ } \Return ``collapse'';
	\EndIf

	\State Generate $e$ in $D^X$ accepted by $C_L$ and rejected by $B$;

	\If{ $e$ = nil } \Return ``$C_L$ converged'';
	\Else
		\State $MSes \leftarrow \emptyset$;
		\State $FindAllScopes(e, X, MSes)$

		\Foreach {$Y \in MSes$}

			\State $c_Y \leftarrow FindC(e, Y)$;

			\If{ $c_Y$ = nil } \Return ``collapse'';
			\Else \quad $C_L \leftarrow C_L \cup \{c_Y\} $;
			\EndIf

		\EndFor

	\EndIf

\EndWhile

\end{algorithmic}
\end{footnotesize}
\end{algorithm}

% End of algorithm MultiAcq ------------------------------------------

The recursive function {\em FindAllScopes} (Algorithm \ref{alg:findallscopes}) takes as input a complete example $e$, a subset of variables $Y$ (equal to $X$ for the first call) and the set of minimal scopes already found (the empty set in the first call). If $Y$ is not a minimal scope already found (line 1) and does not contain a minimal scope already learned (line 3) and $e_Y$ contains at least one violated constraint from the bias (line 2), the system asks the user to classify the (partial) example $e_Y$ (line 4). If the answer is ``yes'' the constraints from $B$ violating the example are removed (line 5) and false is returned (line 6), as $Y$ does not contain any minimal scope. If the answer is ``no'', it means that there still exist violated constraints from $C_T$ in $Y$. So then {\em FindAllScopes} is called on each subset of $Y$ built by removing one variable from $Y$ (lines 8-9). If in all of these calls the example is positive then $Y$ is the scope of a violated constraint and it is added to the set $MSes$ (line 10). Then {\em FindAllScopes} returns true, as it has found a minimal scope.
Function {\em FindC} is then called by MultiAcq to find the constraint(s), like in QuAcq.

% Algorithm FindAllScopes -------------------------------------------------

\begin{algorithm}
\caption{{\em FindAllScopes}}\label{alg:findallscopes}
\begin{footnotesize}
\begin{algorithmic}[1]

\Require $e$, $Y$, $MSes$ ($e$: an example, $Y$: a set of variables, $MSes$: the set of minimal scopes)
\Ensure a boolean : if $e_Y$ contains a minimal scope

	\If{$Y \in MSes$} \Return true;
	\EndIf

	\If{ $k_B(e_Y) = \emptyset$ } \Return false;
	\EndIf

	\If{ $\nexists M \in MSes$ $|$ $M \subset Y$}

		\If{ $ASK(e_Y)$ = ``yes'' } 
			\State $B \leftarrow B  \setminus  \kappa_B(e_R) $;
			\State \Return false;
		\EndIf

	\EndIf

	\State $flag \leftarrow$ false;

	\Foreach {$x_i \in Y$}
		\State $flag \leftarrow FindAllScopes(e, Y \setminus \{x_i\}, MSes) \lor flag$  
	\EndFor

	\If{ $\neg flag$ } $MSes \leftarrow MSes \cup \{Y\}$;
	\EndIf

	\State \Return true;

\end{algorithmic}
\end{footnotesize}
\end{algorithm}

% End of algorithm FindAllScopes ------------------------------------------

So, instead of focusing on the scope of only one constraint, MultiAcq learns all the constraints of the target network violated by a generated example. However, a disadvantage of MultiAcq is that it needs a linear number of queries in the size of the example to learn a constraint.

Now, we illustrate the behaviour of QuAcq and MultiAcq using our running example from Section~\ref{sec:inter}.

\begin{example}
\label{ex:quacq}

Recall that the vocabulary $(X,D)$ given to the system is $X = \{ x_1, ..., x_8 \}$ and $D = \{D(x_1), . . . , D(x_8)\}$ with $D(x_i) = \{ 1, ... , 8 \}$, the target network $C_T$ is the set $\{ \neq_{12}, \neq_{13}, \neq_{34} \}$ and $B = \{ \neq_{ij}$ $\mid$ $1 \leq i < 8 \land i < j \leq 8\}$. Assume that the example generated at line 4 of QuAcq or MultiAcq is $e = \{1,1,1,2,3,4,5,6\}$. 

QuAcq will directly post it as a query to the user. The answer will be ``no'' as it violates the constraints $\{ \neq_{12}, \neq_{13} \}$ from the target network. Next, {\em FindScope} is called to find the scope of a violated constraint. Table~\ref{ta:ex-quacq} shows the recursive calls of {\em FindScope}. A dash (-) in columns $e_R$ and {\em ASK} means that no query is posted to the user, due to the condition at line 2. Also recall that queries are always only on the variables in $R$. As we can see, after 4 queries, {\em FindScope} will find the scope $\{x_1, x_2\}$. Then {\em FindC} will immediately return the constraint $\{\neq_{12}\}$ as it is the only constraint in $B$ with this scope. After the constraint is added to $C_L$, QuAcq will go back to line 3 and as no collapse occurs, it will generate a new example.

\begin{table}
\centering
\caption{Recursive calls of {\em FindScope} in Example~\ref{ex:quacq}}
\label{ta:ex-quacq}
{
%\begin{tabular}{ |p{3cm}|p{3cm}|p{3cm}|  }
\resizebox{\textwidth}{!}{%
\begin{tabular}{ |l|l|l|l|c|c|  }
\hline
{\em call} & $R$ & $Y$ & $e_R$ & {\em ASK} & {\em return} \\
\hline
0 & $\emptyset$ & $x_1,x_2,x_3,x_4,x_5,x_6,x_7,x_8$ & - & - & $\{x_1,x_2\}$ \\
1 & $x_1, x_2, x_3, x_4$ & $x_5,x_6,x_7,x_8$ & $\{1,1,1,2,-,-,-,-\}$ & ``no'' & $\emptyset$ \\
2 & $\emptyset$ & $x_1, x_2, x_3, x_4$ & - & - & $\{x_1,x_2\}$ \\
2.1 & $x_1, x_2$ & $x_3, x_4$ & $\{1,1,-,-,-,-,-,-\}$ & ``no'' & $\emptyset$ \\
2.2 & $\emptyset$ & $x_1,x_2$ & - & - & $\{x_1,x_2\}$ \\
2.2.1 & $x_1$ & $x_2$ & $\{1,-,-,-,-,-,-,-\}$ & ``yes'' & $\{x_2\}$ \\
2.2.2 & $x_2$ & $x_1$ & $\{-,1,-,-,-,-,-,-\}$ & ``yes'' & $\{x_1\}$ \\
\hline
\end{tabular}
}
}
\end{table}

After generating the same example $e$, MultiAcq will directly give it to the function {\em FindAllScopes}. Table~\ref{ta:ex-multiacq} presents the trace of its recursive calls. After 8 queries it will find the scopes of both constraints from $C_T$. It will need one more query to remove the constraint $\{\neq_{23}\}$ from the bias and as no other constraint from $B$ is violated it will return. {\em FindC} will return immediately each of the constraints $\{\neq_{12}\}$ and $\{\neq_{13}\}$ as they are the only ones in $B$ with these scopes.

%MSes IS A SET OF SCOPES. SO IF IT CONTAINS ONE SCOPE $\{x_1,x_2\}$ IT SHOULD BE DESCRIBED AS $\{\{x_1,x_2\}\}$. CORRECT THIS.

\begin{table}
\centering
\caption{Recursive calls of {\em FindAllScopes} in Example~\ref{ex:quacq}. }
\label{ta:ex-multiacq}
{
%\begin{tabular}{ |p{3cm}|p{3cm}|p{3cm}|  }
\resizebox{\textwidth}{!}{%
\begin{tabular}{ |l|l|l|c|c|c|  }
\hline
{\em call} & $Y$ & $e_Y$ & {\em ASK} & $MSes$ & {\em return} \\
\hline
0 & $x_1,x_2,x_3,x_4,x_5,x_6,x_7,x_8$ & $\{1,1,1,2,3,4,5,6\}$ & ``no'' & $\emptyset$ & true\\
1 & $x_1,x_2,x_3,x_4,x_5,x_6,x_7$ & $\{1,1,1,2,3,4,5,-\}$ & ``no'' & $\emptyset$ & true\\
1.1 & $x_1,x_2,x_3,x_4,x_5,x_6$ & $\{1,1,1,2,3,4,-,-\}$ & ``no'' & $\emptyset$ & true\\
1.1.1 & $x_1,x_2,x_3,x_4,x_5$ & $\{1,1,1,2,3,-,-,-\}$ & ``no'' & $\emptyset$ & true\\
1.1.1.1 & $x_1,x_2,x_3,x_4$ & $\{1,1,1,2,-,-,-,-\}$ & ``no'' & $\emptyset$ & true\\
1.1.1.1.1 & $x_1,x_2,x_3$ & $\{1,1,1,-,-,-,-,-\}$ & ``no'' & $\emptyset$ & true\\
1.1.1.1.1.1 & $x_1,x_2$ & $\{1,1,-,-,-,-,-,-\}$ & ``no'' &  $\{\{x_1,x_2\}\}$ & true\\
1.1.1.1.1.1.1 & $x_1$ & $\{1,-,-,-,-,-,-,-\}$ & - & $\{\{x_1,x_2\}\}$ & false\\
1.1.1.1.1.1.2 & $x_2$ & $\{-,1,-,-,-,-,-,-\}$ & - & $\{\{x_1,x_2\}\}$ & false\\
1.1.1.1.1.2 & $x_1,x_3$ & $\{1,-,1,-,-,-,-,-\}$ & ``no'' & $\{\{x_1,x_2\},\{x_1,x_3\}\}$ & true\\
1.1.1.1.1.2.1 & $x_1$ & $\{1,-,-,-,-,-,-,-\}$ & - &  $\{\{x_1,x_2\},\{x_1,x_3\}\}$ & false\\
1.1.1.1.1.2.2 & $x_3$ & $\{-,-,1,-,-,-,-,-\}$ & - &  $\{\{x_1,x_2\},\{x_1,x_3\}\}$ & false\\
1.1.1.1.1.3 & $x_2,x_3$ & $\{-,1,1,-,-,-,-,-\}$ & ``yes'' &  $\{\{x_1,x_2\},\{x_1,x_3\}\}$ & false\\
\hline
\end{tabular}
}
}
\end{table}

\end{example}

In addition to the above, which are described in the relevant papers, QuAcq and MultiAcq take some extra steps during query generation\footnote{Personal communication with the authors of the algorithms.}. We detail these in  Section~\ref{sec:query}. %, as it is a very important step of the acquisition process that has not been described in detail in the literature.

% ----------------------------------------------------------------------------------------------------------------------------
% ----------------------------------------------------------------------------------------------------------------------------
% ----------------------------------------------------------------------------------------------------------------------------

\section{Efficient Multiple Constraint Acquisition}
\label{sec:mquacq}

As explained, the main difference between QuAcq and MultiAcq is that the latter tries to find multiple constraints that are violated once a query is classified as negative. However, MultiAcq needs a linear number of queries in the size of the example to locate the scope of each violated constraint. In contrast, QuAcq requries a logarithmic number of queries but finds only one violated constraint. Our intuition was to merge the idea of learning a maximum number of constraints from each generated negative example with the QuAcq reasoning of learning each constraint in a logarithmic number of steps.
Our resulting new algorithm, called Multi-QuAcq (MQuAcq for short), needs a logarithmic number of queries to discover each violated constraint, achieving the benefits of both QuAcq and MultiAcq.

% reviewer

\subsection{Multi-QuAcq description}

MQuAcq (Algorithm \ref{alg:all}) is an active learning algorithm which is based on QuAcq and extends it by incorporating the basic idea of MultiAcq. The main difference between QuAcq and MQuAcq is the fact that QuAcq finds one explanation (constraint) of why the user classified an example as negative, whereas MQuAcq tries to learn all the violated constraints. This is done by calling function {\em FindScope} (Algorithm~\ref{alg:findscope}) iteratively, while reducing the search space by removing variables from the scopes already found. 
The main difference between MQuAcq and MultiAcq is that the former uses the QuAcq search method to find each scope through function {\em FindScope}, and in this way avoids some redundant searches (which can be very time-consuming) as well as queries that MultiAcq makes with function {\em FindAllScopes}. Besides this, there are several other differences, particularly on how the algorithms operate to locate irredundant queries after learning a constraint from a generated negative example.
As a result, our proposed algorithm has a better complexity bound in terms of the number of queries. %\cite{arcangioli2016multiple}. 

MQuAcq finds all the violated constraints via the function {\em FindAllCons}. The main idea is that after finding a constraint $c$, using QuAcq's reasoning, we exploit the fact that for any other violated constraint $c^{\prime} \in C_T$, we have $scope(c) \setminus scope(c^{\prime}) \neq \emptyset$. This is because otherwise $scope(c)$ would not be a minimal scope. So, {\em FindAllCons} recursively acquires all the violated constraints from $e_{Y \setminus \{ x \}}$, for each $x \in scope(c)$. Hence, the reasoning of QuAcq is recursively used in these partial examples, in order to find multiple constraints, with the benefit (inherited from QuAcq) of finding the scope of each constraint with a logarithmic complexity.

% Algorithm MQuAcq new -------------------------------------------------

\begin{algorithm}
\caption{The MQuAcq Algorithm}\label{alg:all}
\begin{algorithmic}[1]

\Require $B$, $X$, $D$ ($B$: the bias, $X$: the set of variables, $D$: the set of domains)
\Ensure $C_L$ : a constraint network

\State $C_L \leftarrow \emptyset$;

\State $collapse \leftarrow$ false; 

\While {true}

	\If{ $sol(C_L) = \emptyset$ } \Return ``collapse'';
	\EndIf

	\State Generate $e$ in $D^X$ accepted by $C_L$ and rejected by $B$;

	\If{ $e$ = nil } \Return ``$C_L$ converged'';
	\EndIf

	\State $FindAllCons(e, X, \emptyset)$;

	\If{ $collapse = $ true } \Return ``collapse'';
	\EndIf

\EndWhile

\end{algorithmic}
\end{algorithm}

% End of algorithm MQuAcq new ------------------------------------------

MQuAcq starts by initializing the $C_L$ network to the empty set (line 1) and the global variable $collapse$ to false (line 2). This variable is used within function {\em FindAllCons} as explained below. Next, the algorithm enters its main loop (line 3).
If $C_L$ is unsatisfiable, the algorithm collapses (line 4). Otherwise, a complete assignment $e$ is generated (line 5), satisfying $C_L$ and violating at least one constraint in $B$. This step is explained in detail in Section~\ref{sec:query}.
If such an example does not exist then we have converged (line 6). Otherwise, the function {\em FindAllCons} is called to find all the constraints that are violated by the example $e$ (lines 7). If {\em findAllCons} has detected a collapse then the algorithm terminates (line 8).

The recursive function {\em FindAllCons} is presented in Algorithm \ref{alg:allcons}. It takes as parameters an example $e$, a set of variables $Y$ and a set $Scopes$, which contains the scopes of the violated constraints on $e_Y$ already learned. It returns the set $\operatorname{NScopes}$ consisting of the scopes of the constraints acquired. {\em FindAllCons} adds to $C_L$ all the constraints that are violated by $e$ in $Y$.
The sets $Scopes$ and $\operatorname{NScopes}$ are used to store all the scopes of the constraints that have been found in $e_Y$ to avoid searching and asking partial queries with the scope of a violated constraint that has been already learned. Specifically, the set $Scopes$ stores the scopes of the constraints learned before the current call of the function. On the other hand, the set $\operatorname{NScopes}$ stores the scopes of the constraints learned from the current call (or any sub-call).
For example, if we have acquired 3 constraints from $e_Y$, 2 from a previous call and 1 from the current call of {\em FindAllCons}, then the set $Scopes$ will contain the scopes of the 2 constraints previously learned and the set $\operatorname{NScopes}$ will contain the scope of the constraint learned from the current call.

Our proposed approach is to search for partial queries in the given example that do not contain any constraint already found, so that the answer will not be predictable. To achieve this, from each scope $S$ already found we make $|S|$ partial examples, one for each variable $x_i \in S$, with each such example involving variables $Y^{\prime} = Y \setminus \{x_i\}$. 
When a partial example that violates no constraint already learned but at least one from $B$ is found, {\em FindAllCons} uses  {\em FindScope}, as in QuAcq, to learn a constraint from $C_T$.

% Algorithm FindAllCons --------------------------------------------------------

\begin{algorithm}
\caption{FindAllCons}\label{alg:allcons}
\begin{algorithmic}[1]

\Require $e, Y, Scopes$ ($e$: the example, $Y$: set of variables, $Scopes$: a set of scopes already learned)
\Ensure $\operatorname{NScopes}$ : the set of scopes learned

\Function{FindAllCons}{$e$, $Y$, $Scopes$}	

	\If{ $collapse = $ true } \Return $\emptyset$;
	\EndIf

	\If { $\nexists scope(c) \neq S$ $|$ $c \in \kappa_{B \setminus C_L}(e_Y) \land S \in Scopes $ } \Return $\emptyset$;
	\EndIf

	\State $\operatorname{NScopes} \leftarrow \emptyset$;

	\If{ $Scopes \neq \emptyset$ }

		\State pick an $S \in Scopes$;

		\Foreach{ $x_i \in S$ }

			\State $\operatorname{NScopes} \leftarrow \operatorname{NScopes} \cup FindAllCons(e, Y \setminus \{x_i\}, \operatorname{NScopes} \cup (Scopes \setminus \{S\} ) )$;

		\EndFor

	\Else

		\If{ ASK($e_Y$) = ``yes'' }  $B \leftarrow B  \setminus  \kappa_B(e_Y) $;
		\Else

			\State $scope \leftarrow FindScope( e, \emptyset, Y, false ) $;

			\State $c \leftarrow FindC(e, scope)$;

			\If{ $c$ = nil }

				\State $collapse \leftarrow$ true;

				\State \Return $\emptyset$;

			\Else \quad $C_L \leftarrow C_L \cup \{c\} $;
			\EndIf

			\State $\operatorname{NScopes} \leftarrow \operatorname{NScopes} \cup \{scope\}$;

			\State $\operatorname{NScopes} \leftarrow \operatorname{NScopes} \cup FindAllCons(e, Y, \operatorname{NScopes})$;

		\EndIf

	\EndIf

	\State \Return $\operatorname{NScopes}$;

\EndFunction

\end{algorithmic}
\end{algorithm}

{\em FindAllCons} starts by checking if collapse has occurred. If this is the case the empty set is returned (line 2). Then  it checks if there exists any violated constraint in $B$ to learn, with a scope different to those of the constraints already acquired. If no constraint that can be learned exists, it is implied that ASK($e_Y$) = ``yes''. Thus, again we return the empty set (line 3) because we assume that the bias is expressive enough to learn a $C_L$ equivalent to the target network $C_T$. This check is important because as the recursive calls to {\em FindAllCons} remove variables from $Y$ (as explained below), we may end up in a case where $e_Y$ is surely positive and no search for a violated constraint is needed. This is because if ASK($e_Y$) = ``yes'' then for every $Y^{\prime} \subseteq Y$ we surely have ASK($e_{Y^{\prime}}$) = ``yes''. With this check the algorithm avoids a lot of redundant searches, reducing the number of nodes in the tree of recursive calls, and also avoids asking redundant queries.
In the case that neither of the two conditions is satisfied, {\em FindAllCons} will continue. At line 4, the set $\operatorname{NScopes}$ is initialized to the empty set.

After that, {\em FindAllCons} checks if the set $Scopes$ is not empty (line 5). If this is the case it means that we have not branched on all the scopes already found and we still have in $e_Y$ the scope of at least one violated constraint.
So we call {\em FindAllCons} recursively on each subset of $Y$ created by removing one of the variables of a scope $S \in Scopes$, removing the scope $S$ in which we branched from the set $Scopes$ given to the recursive calls (lines 6-8). We remove the scope in which we branched as this scope is no longer included in the set of variables given to to the recursive calls. Also we give to the function the set $\operatorname{NScopes} \cup (Scopes \setminus \{S\} )$ as although the set $\operatorname{NScopes}$ is empty at the first call, it may contain scopes found in the next recursive calls.

In the case that the set $Scopes$ is empty (line 9), it means that we have finished with branching and we have a partial example $e_Y$ that does not contain the scope of any violated constraint already learned. Hence, there must exist a partial query $e_Y$ that violates at least one constraint of $B$ (otherwise the algorithm would have returned at line 2) and no violated constraint already found exists in $Y$. Therefore, the system asks again the user to classify the partial example as positive or negative 
(line 10). If the answer is positive then the constraints in $B$ that reject $e$ are removed. Otherwise, function {\em FindScope} is called to find the scope of one of the violated constraints (line 12). {\em FindC} will then select a constraint from $B$ with the discovered scope that is violated by $e_Y$ (line 13). If no constraint is found then the algorithm collapses (line 14-16). Otherwise, the constraint returned by {\em FindC} is added to $C_L$ (line 17) and its scope is added to the set of found scopes (line 18). Then, we call again {\em FindAllCons} to continue searching in the partial examples created by removing the variables of the scope the function has just found.

%In the first call to {\em FindAllCons} the set $Scopes$ is set to the empty set so that a query is immediately posted to the user in line 10. The condition of line 3 will be false because $e$ is generated in such a way that it violates at least one constraint in $B$ (i.e. $|\kappa_{B \setminus C_L}(e_Y)| > 0$ and also $Scopes = \emptyset$). If the example is negative then a constraint is seeked using {\em FindScope} and then the recursive calls to {\em FindAllCons} start. If the example is positive, the constraints in $B$ that reject it are removed and true is returned. WHAT IS THIS PARAGRAPH DOING HERE? EITHER MOVE IT TO THE START OF THE ALGORITHM'S DESCRIPTION OR REMOVE IT COMPLETELY.

% the example for FindAllCons -------------------------------------------------------------------------------------------------

We now illustrate the behavior of {\em FindAllCons} in a simple problem using our running example.

\begin{example}
\label{ex:mquacq}

Recall that the problem consists of $8$ variables and suppose that the complete example $e = \{1,1,1,1,2,3,4,5\}$ is generated in line 5 of MQuAcq. The constraints from $C_T$ that are violated by $e$ are $\neq_{12}$, $\neq_{13}$ and $\neq_{34}$ (all the constraints from $C_T$ in this case). Table~\ref{ta:ex-mquacq} presents the trace of the recursive calls of {\em FindAllCons}.

% -----------------------------------------------------------------------------------------------------------------------------

\begin{table}
\centering
\caption{Recursive calls of {\em FindAllCons} in Example~\ref{ex:mquacq}}
\label{ta:ex-mquacq}
{
%\begin{tabular}{ |p{3cm}|p{3cm}|p{3cm}|  }
\resizebox{\textwidth}{!}{%
\begin{tabular}{ |l|l|l|c|c|c|  }
\hline
{\em call} & $Y$ & $e_Y$ & {\em ASK} & $Scopes$ & {\em return} \\
\hline
0 & $x_1,x_2,x_3,x_4,x_5,x_6,x_7,x_8$ & $\{1,1,1,1,2,3,4,5\}$ & ``no'' & $\emptyset$ & $\{\{x_1,x_2\}, \{x_3,x_4\}, \{x_1,x_3\}\}$\\
1 & $x_1,x_2,x_3,x_4,x_5,x_6,x_7,x_8$ & $\{1,1,1,1,2,3,4,5\}$ & - & $\{\{x_1,x_2\}\}$ & $\{\{x_3,x_4\}, \{x_1,x_3\}\}$\\
1.1 & $x_2,x_3,x_4,x_5,x_6,x_7,x_8$ & $\{-,1,1,1,2,3,4,5\}$ & ``no'' & $\emptyset$ & $\{\{x_3,x_4\}\}$\\
1.1.1 & $x_2,x_3,x_4,x_5,x_6,x_7,x_8$ & $\{-,1,1,1,2,3,4,5\}$ & - & $\{\{x_3,x_4\}\}$ & $\emptyset$\\
1.1.1.1 & $x_2,x_4,x_5,x_6,x_7,x_8$ & $\{-,1,-,1,2,3,4,5\}$ & ``yes'' & $\emptyset$ & $\emptyset$\\
1.1.1.2 & $x_2,x_3,x_5,x_6,x_7,x_8$ & $\{-,1,1,-,2,3,4,5\}$ & ``yes'' & $\emptyset$ & $\emptyset$\\
1.2 & $x_1,x_3,x_4,x_5,x_6,x_7,x_8$ & $\{1,-,1,1,2,3,4,5\}$ & - & $\{\{x_3,x_4\}\}$ & $\{\{x_1,x_3\}\}$\\
1.2.1 & $x_1,x_4,x_5,x_6,x_7,x_8$ & $\{1,-,-,1,2,3,4,5\}$ & ``yes'' & $\emptyset$ & $\emptyset$\\
1.2.2 & $x_1,x_3,x_5,x_6,x_7,x_8$ & $\{1,-,1,-,2,3,4,5\}$ & ``no'' & $\emptyset$ & $\{\{x_1,x_3\}\}$\\

\hline
\end{tabular}
}
}
\end{table}

% -----------------------------------------------------------------------------------------------------------------------------

In the first call (call 0) to {\em FindAllCons}, $e$ will be posted as a query to the user. After the user answers ``no'', the algorithm will find the constraint $\neq_{12}$ using functions {\em FindScope} and {\em FindC}. Next, {\em FindAllCons} will be called to continue searching for the remaining constraints that violate $e$. In the next call (call 1) we have $Y = X$ and $Scopes = \{\{x_1,x_2\}\}$. As $Scopes \neq \emptyset$, we know that the answer to ASK($e_Y$) (with $e_Y = e_X$) will be ``no''. So, no query is posted to the user and {\em FindAllCons} will be called recursively on each subset of $Y$ built by removing one variable from a scope $S \in Scopes$ (i.e. $\{x_1,x_2\}$ as this is the only one), and removing this scope from the set given to the recursive calls. 
Thus, in the first recursive call (call 1.1) we have $Y^{\prime} = Y \setminus \{x_1\}$ and $Scopes = \emptyset$. This means that we have branched on all scopes found until now. Hence, the example $e_{Y^{\prime}} = \{-,1,1,1,2,3,4,5\}$ will be posted as a query to the user and the constraint $\neq_{34}$ will be learned because it is the only constraint from $C_T$ that is violated by $e_{Y^{\prime} }$. In the next call (call 1.1.1) of {\em FindAllCons} in line 14 no further constraint will be found, as no constraint from $C_T$ is violated. 

So, we go back to the second call of line 5 (call 1.2). We have $Y^{\prime} = Y \setminus \{x_2\}$ and $Scopes = \{\{x_3,x_4\}\}$ (the scope of the constraint learned from the call 1.1). As $Scopes \neq \emptyset$, we have another scope in which we have to branch. Hence, {\em FindAllCons} will be called recursively on each subset of $Y^{\prime}$ built by removing one variable from a scope $S \in Scopes$ (i.e. $\{x_3,x_4\}$ as this is the only one). Also this scope is removed from the set of scopes given to the recursive calls.

In call 1.2.1, we have $Y^{\prime\prime} = Y^{\prime} \setminus \{x_3\}$ and $Scopes = \emptyset$. Because no constraint from $C_T$ is violated by $e_{Y^{\prime\prime}} = \{1,-,-,1,2,3,4,5\}$, the answer from the user will be ``yes'' and the empty set will be returned. In call 1.2.2, we have $Y^{\prime\prime} = Y^{\prime} \setminus \{x_4\}$ and $Scopes = \emptyset$. Thus, the example $e_{Y^{\prime\prime}} = \{1,-,1,-,2,3,4,5\}$ will be posted to the user and then the constraint $\neq_{13}$ will be learned. No further constraint will be found, as no other constraint from $C_T$ is violated by $e_Y$. 

\end{example}

\subsection{Analysis}

In this section we prove the correctness (i.e. soundness and completeness) of MQuAcq. To obtain this proof we first prove some properties of functions {\em FindScope} and {\em FindC}. We also study the complexity of MQuAcq in terms of the number of queries it needs to converge to the target network. 

\begin{lemma}
\label{lemma1}
If ASK($e_Y$) = ``yes'' then for any $Y^{\prime} \subseteq Y$ it holds that ASK($e_{Y^{\prime}}$) = ``yes''.
\end{lemma}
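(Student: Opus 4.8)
The statement is essentially a direct unfolding of the semantics of a partial query, so the plan is to reduce everything to the definition given in Section~\ref{sec:inter}. Recall that $ASK(e_Y)$ returns ``yes'' precisely when $e_Y$ is a partial solution, i.e. when every constraint $c \in C_T$ with $scope(c) \subseteq Y$ accepts the projection $e_{scope(c)}$. So I would start by restating this: $ASK(e_Y) = \text{``yes''}$ means $e_{scope(c)} \in rel(c)$ for all $c \in C_T$ with $scope(c) \subseteq Y$.

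Next I would fix an arbitrary $Y^{\prime} \subseteq Y$ and an arbitrary constraint $c \in C_T$ with $scope(c) \subseteq Y^{\prime}$, and argue that $c$ accepts $e_{Y^{\prime}}$. The key observation is the transitivity of $\subseteq$: since $scope(c) \subseteq Y^{\prime} \subseteq Y$, the constraint $c$ is one of the constraints witnessed by the hypothesis $ASK(e_Y) = \text{``yes''}$, hence $e_{scope(c)} \in rel(c)$. The second small point to make explicit is that the projection is well-defined independently of the ``ambient'' set: because $scope(c) \subseteq Y^{\prime}$, the projection of $e_{Y^{\prime}}$ onto $scope(c)$ is literally the same tuple as the projection of $e_Y$ onto $scope(c)$ (both just read off the values that $e$ assigns to the variables in $scope(c)$). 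Therefore $c$ accepts $e_{Y^{\prime}}$ as well.

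Since $c$ was an arbitrary constraint of $C_T$ whose scope is contained in $Y^{\prime}$, every such constraint accepts $e_{Y^{\prime}}$, which is exactly the definition of $e_{Y^{\prime}}$ being a partial solution; hence $ASK(e_{Y^{\prime}}) = \text{``yes''}$. There is no real obstacle here: the only thing one must be slightly careful about is not to conflate ``$e_Y$ is accepted by $C_T$'' with ``$e_Y$ is part of a complete solution'' (the excerpt explicitly warns these differ), but the lemma only concerns the former, so this subtlety does not arise. The argument is monotone in the obvious direction: shrinking the set of assigned variables can only remove constraints from consideration, never add any.
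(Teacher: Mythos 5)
Your proof is correct and follows essentially the same route as the paper: the paper states the monotonicity fact $\kappa_{C_T}(e_{Y^{\prime}}) \subseteq \kappa_{C_T}(e_Y)$ directly and concludes, while you simply unfold that same fact from the definition of a partial query (transitivity of $\subseteq$ on scopes plus agreement of projections). No gap; your version just makes the paper's one-line observation fully explicit.
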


\begin{proof}
We know that for every $Y^{\prime} \subseteq Y$, the set of constraints from $C_T$ that are violated by $e_{Y^{\prime}}$ is a subset of the set of constraints rejecting $e_Y$ 
(i.e. $\kappa_{C_T}(e_{Y^{\prime}}) \subseteq \kappa_{C_T}(e_Y) $). 
Thus, if we know that $\kappa_{C_T}(e_Y) = \emptyset$ (ASK($e_Y$) = ``yes'') then for every $Y^{\prime} \subseteq Y$ it holds that $\kappa_{C_T}(e_{Y^{\prime}}) = \emptyset$ which means that ASK($e_{Y^{\prime}}$) = ``yes''.
\end{proof}

\begin{lemma}
\label{lemma2}
If ASK($e_Y$) = ``no'' then for any $Y^{\prime} \supseteq Y$ it holds that ASK($e_{Y^{\prime}}$) = ``no''.
\end{lemma}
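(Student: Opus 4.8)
The plan is to derive this as the contrapositive of Lemma~\ref{lemma1}, exploiting the same monotonicity of the violated-constraint set under projection. First I would record the direction of monotonicity that is relevant here: if $Y \subseteq Y^{\prime}$, then $\kappa_{C_T}(e_Y) \subseteq \kappa_{C_T}(e_{Y^{\prime}})$, i.e. enlarging the set of assigned variables can only expose \emph{more} violated constraints of $C_T$, never fewer. This is immediate from the definition of rejection: a constraint $c \in C_T$ rejects $e_Y$ iff $scope(c) \subseteq Y$ and $e_{scope(c)} \notin rel(c)$; since $e$ is a fixed (complete) assignment, the projection $e_{scope(c)}$ does not depend on whether we look at $e_Y$ or $e_{Y^{\prime}}$, so $scope(c) \subseteq Y \subseteq Y^{\prime}$ means $c$ rejects $e_{Y^{\prime}}$ as well.

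From here the statement follows in one line. Assume ASK($e_Y$)~=~``no'', i.e. $\kappa_{C_T}(e_Y) \neq \emptyset$, and pick some $c \in \kappa_{C_T}(e_Y)$. By the monotonicity just noted, for any $Y^{\prime} \supseteq Y$ we have $c \in \kappa_{C_T}(e_{Y^{\prime}})$, hence $\kappa_{C_T}(e_{Y^{\prime}}) \neq \emptyset$, which means ASK($e_{Y^{\prime}}$)~=~``no''. Equivalently, one can phrase it as a direct contrapositive of Lemma~\ref{lemma1}: if some $Y^{\prime} \supseteq Y$ had ASK($e_{Y^{\prime}}$)~=~``yes'', then applying Lemma~\ref{lemma1} with the role of ``$Y$'' played by $Y^{\prime}$ and the subset being $Y$ would force ASK($e_Y$)~=~``yes'', contradicting the hypothesis.

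There is essentially no obstacle here; the only thing to be careful about is not to confuse the two directions of set inclusion (Lemma~\ref{lemma1} shrinks the variable set and preserves ``yes'', whereas this lemma grows it and preserves ``no''), and to make explicit that $e$ is one fixed complete assignment so that all projections $e_{scope(c)}$ are consistent across the various $e_{Y}$'s. I would present the short contrapositive argument, since it lets us cite Lemma~\ref{lemma1} directly rather than re-deriving the monotonicity inclusion.
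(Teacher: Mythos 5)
Your proof is correct and follows essentially the same route as the paper's: both rest on the monotonicity $\kappa_{C_T}(e_Y) \subseteq \kappa_{C_T}(e_{Y'})$ for $Y \subseteq Y'$, with non-emptiness of the violated set then transferring from $e_Y$ to $e_{Y'}$. Your extra justification of the monotonicity from the definition of rejection, and the alternative contrapositive phrasing via Lemma~\ref{lemma1}, are fine but add nothing beyond what the paper's argument already contains.
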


\begin{proof}
We know that for every $Y^{\prime} \supseteq Y$, the set of constraints from $C_T$ that are violated by $e_{Y^{\prime}}$ is a superset of the set of constraints rejecting $e_Y$ 
(i.e. $\kappa_{C_T}(e_{Y^{\prime}}) \supseteq \kappa_{C_T}(e_Y) $). 
Thus, if we know that $\kappa_{C_T}(e_Y) \neq \emptyset$ (ASK($e_Y$) = ``no'') then for every $Y^{\prime} \supseteq Y$ it holds that $\kappa_{C_T}(e_{Y^{\prime}}) \neq \emptyset$ which means that ASK($e_{Y^{\prime}}$) = ``no''.
\end{proof}

Lemmas~\ref{lemma1} and~\ref{lemma2} have also been proved in~\cite{arcangioli2016multiple}, albeit slightly differently.

\begin{prop}
\label{prop:findscope}

Given the assumption that $C_T$ is representable by $B$, if FindScope is given an example $e_Y$ and returns a scope $S$, then there exists a violated constraint $c \in C_T$ with $scope(c) = S$. Also $S$ is a minimal scope.

\end{prop}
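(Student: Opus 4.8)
The plan is to derive the proposition from a stronger claim about \emph{FindScope} proved by induction on $|Y|$, and then instantiate that claim at the top-level call $FindScope(e,\emptyset,Y,false)$ that the algorithm issues (inside \emph{FindAllCons}, and likewise inside QuAcq's main loop). The claim I would establish is: whenever a call $FindScope(e,R,Y,ask\_query)$ does not return at line~3 — equivalently, at that point $e_R$ is positive — and the stated invariant holds that some $c\in C_T$ rejecting $e$ has $scope(c)\subseteq R\cup Y$, then the returned set $S$ satisfies (a) $S\subseteq Y$; (b) $e_{R\cup S}$ is negative; and (c) for every $x\in S$, $e_{(R\cup S)\setminus\{x\}}$ is positive. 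Since $R$ and $Y$ are kept disjoint throughout the recursion, (b) together with ``$e_R$ positive'' already forces $S\neq\emptyset$.

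The base case $|Y|=1$ is immediate: if $e_R$ is positive but $e$ violates a $c\in C_T$ with $scope(c)\subseteq R\cup Y$, the single variable of $Y$ must lie in $scope(c)$, and (a)--(c) follow at once for the returned $S=Y$. For the inductive step I split $Y=Y_1\cup Y_2$ and analyze the two recursive calls. The observation that drives the argument is that the first call $FindScope(e,R\cup Y_1,Y_2,true)$ returns $\emptyset$ \emph{exactly} when $ASK(e_{R\cup Y_1})=$``no'' (it cannot return $\emptyset$ once it passes line~3, by the induction hypothesis), and in that case $e_{R\cup Y_1}$ is negative; otherwise $ASK(e_{R\cup Y_1})=$``yes'' and the induction hypothesis applies to that call, yielding $S_1\subseteq Y_2$ with $e_{R\cup Y_1\cup S_1}$ negative and property~(c) for $S_1$. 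In both situations one checks that the hypotheses needed for the second call $FindScope(e,R\cup S_1,Y_1,(S_1\neq\emptyset))$ hold — when $S_1\neq\emptyset$ one uses the set identity $R\cup Y_1\cup S_1=R\cup S_1\cup Y_1$, so that negativity of $e_{R\cup Y_1\cup S_1}$ supplies the invariant for the second call — and then assembles (a)--(c) for $S=S_1\cup S_2$ from the corresponding facts about $S_1$ and $S_2$, using that $S_1\subseteq Y_2$ and $S_2\subseteq Y_1$ are disjoint.

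I expect the only delicate point to be property~(c) in the branch $S_1\neq\emptyset$: after the second call appends $S_2\subseteq Y_1$, we must show that deleting any $x$ from $R\cup S_1\cup S_2$ — in particular any $x\in S_1$, not merely $x\in S_2$ — still leaves a positive example. For $x\in S_2$ this is precisely the induction hypothesis of the second call; for $x\in S_1$ it is not, and here Lemma~\ref{lemma1} does the work: the induction hypothesis of the first call gives that $e_{(R\cup Y_1\cup S_1)\setminus\{x\}}$ is positive, and since $(R\cup S_1\cup S_2)\setminus\{x\}\subseteq(R\cup Y_1\cup S_1)\setminus\{x\}$ (because $S_2\subseteq Y_1$), Lemma~\ref{lemma1} transfers positivity downward. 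The same lemma settles the analogous bookkeeping in the branch where $ASK(e_{R\cup S_1})=$``no'' and the second call returns $\emptyset$.

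Finally I would instantiate the claim at $R=\emptyset$: $e_\emptyset$ is trivially positive and $e_Y$ is negative by the way \emph{FindAllCons} (or QuAcq) invokes \emph{FindScope}, so the returned $S$ satisfies $ASK(e_S)=$``no'' and $ASK(e_{S\setminus\{x\}})=$``yes'' for every $x\in S$, i.e.\ $S$ is a minimal scope by definition. To pin down a constraint: since $e_S$ is negative there is a $c\in C_T$ rejecting $e$ with $scope(c)\subseteq S$; were the inclusion strict we could choose $x\in S\setminus scope(c)$, and then $e_{S\setminus\{x\}}$ would still be rejected by $c$, contradicting property~(c). Hence $scope(c)=S$, so $S$ is the scope of a violated target constraint and is minimal, as required. (The hypothesis that $C_T$ is representable by $B$ is carried over as a standing assumption of the algorithm; it is not actually exercised in this argument, since the queries answered inside \emph{FindScope} reflect $C_T$ directly and its return value never reads $B$.)
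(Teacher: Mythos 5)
Your proof is correct and follows the same route as the paper's: the paper's argument rests on exactly the invariant you formalize (that $e_{R\cup Y}$ is negative while $e_R$ is positive whenever line~5 is reached, so the singleton returns assemble into a set $S$ with $ASK(e_S)=$``no'' and $ASK(e_{S\setminus\{x\}})=$``yes'' for all $x\in S$), which by definition makes $S$ a minimal scope of a violated constraint of $C_T$. The only difference is one of rigor: the paper asserts these properties directly from the invariant, whereas you prove the strengthened statement by induction on $|Y|$ and use Lemma~\ref{lemma1} to handle the deletion of a variable of $S_1$ after $S_2$ has been appended, which is precisely the step the paper leaves implicit.
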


\begin{proof} 

Recall that an invariant of {\em FindScope} is that the example $e$ violates at least one constraint whose scope is a subset of $R \cup Y$ (i.e. ASK($R \cup Y$) = ``no''). Also, it reaches line 5 only in the case that $e_R$ does not violate any constraint from $C_T$ (i.e. ASK($e_R$) = ``yes'' from Lemma~\ref{lemma1}). In addition, in {\em FindScope} variables are returned only at line 5, in the case $Y$ is a singleton. Thus, for any $x_i \in S$ we know that ASK($S$) = ``no'' and ASK($S \setminus x_i$) = ``yes''. Hence, $S$ is a scope of a violated constraint from the target network. Also, as we have ASK($S \setminus x_i$) = ``yes' for any $x_i \in S$, it holds that $S$ is a minimal scope.

\end{proof}

\begin{prop}
\label{prop:findc}

Given an example $e$, the scope $Y$ of a violated constraint of $C_T$ and a bias $B$ that can represent $C_T$, FindC will return a constraint $c \in C_T$ with $scope(c) = Y$ under the assumption that $C_T$ does not contain any other constraint with scope $Y' \subseteq Y$.

\end{prop}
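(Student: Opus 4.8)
The plan is to establish three things: (i) \emph{FindC} terminates, (ii) when it exits it returns a constraint whose scope is exactly $Y$, and (iii) that constraint is in $C_T$. The key structural hypothesis — that $C_T$ contains no constraint with scope $Y'\subseteq Y$ other than the one on $Y$ — is what rules out the known soundness bug of Algorithm~\ref{alg:findc}: it guarantees that every partial query $e'\in D^Y$ classified ``no'' is rejected \emph{because of a constraint on scope $Y$ itself}, never because of a constraint on a proper subscope. I would state and use this observation explicitly as the first step, since line~13 of \emph{FindC} (which sets $\Delta \leftarrow \kappa_\Delta(e')$) is only correct under it.

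First I would set up the loop invariant for the \textbf{while} loop of \emph{FindC}: throughout, $\Delta$ contains every constraint $c\in C_T$ with $scope(c)=Y$ that rejects the original example $e$ (there is at least one, by the proposition's hypothesis and the fact that $Y$ is the scope of a violated constraint), and every $c\in\Delta$ has $scope(c)=Y$ and rejects $e_Y$. The initialization holds by lines~2--3: line~2 removes from $B$ the constraints on $Y$ implied by $C_L$ (these cannot be the sought $C_T$ constraint since they are already entailed, hence redundant), and line~3 sets $\Delta=\kappa_B(e_Y)$. For the inductive step, consider the query $ASK(e')$ with $e'\in D^Y$ generated at line~6: if the answer is ``yes'', then $e'$ is a partial solution, so every $c\in C_T$ with $scope(c)\subseteq Y$ accepts $e'$; removing $\kappa_\Delta(e')$ from $\Delta$ (line~12) therefore discards no $C_T$-constraint. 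If the answer is ``no'', then by the first-step observation some $c\in C_T$ with $scope(c)=Y$ rejects $e'$, so $c\in\kappa_\Delta(e')$; setting $\Delta\leftarrow\kappa_\Delta(e')$ (line~13) retains it. In both cases the invariant is preserved, and since the generation constraint $\kappa_\Delta(e')\neq|\Delta|$ forces $|\Delta|$ to strictly decrease each iteration, the loop terminates.

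Next I would handle termination exits. When line~6 fails to produce an $e'$, this means no example in $D^Y$ accepted by $C_L$ can split $\Delta$ — i.e.\ all constraints of $\Delta$ are equivalent on the solutions of $C_L$ — so any one of them is the correct answer; the invariant gives $\Delta\neq\emptyset$ (it still contains the genuine $C_T$-constraint), so line~9 returns a random $c\in\Delta$, which by the invariant satisfies $c\in C_T$ and $scope(c)=Y$. Conversely, when the loop exits because it cannot continue, the same analysis applies. The only remaining subtlety is why $\Delta$ never becomes empty: this is exactly where the hypothesis ``$B$ can represent $C_T$'' enters — the $C_T$-constraint on $Y$ lies in $B$, is not implied by $C_L$ at the point \emph{FindC} is called (else $Y$ would not have survived as a minimal scope), survives line~2, enters $\Delta$ at line~3, and is preserved by the invariant; so the $\Delta=\emptyset$ branches returning $\emptyset$ (collapse) are never reached.

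The main obstacle, as flagged above, is line~13: one must argue rigorously that a ``no'' answer to $ASK(e')$ is attributable to a scope-$Y$ constraint and not to a smaller one, and this is precisely what the normalization-plus-no-proper-subscope hypothesis buys. I would devote the bulk of the proof to that point, citing Lemma~\ref{lemma1} (downward closure of ``yes'') to justify that once $e_R$ — or here $e'$ — is accepted, all sub-assignments are, and Lemma~\ref{lemma2} dually; everything else is the routine loop-invariant bookkeeping sketched above.
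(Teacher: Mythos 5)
Your proposal is correct and follows essentially the same route as the paper's proof: show the genuine $C_T$ constraint on $Y$ enters $\Delta$ and is never removed (the ``yes'' case via downward closure of positive answers, the ``no'' case via the no-proper-subscope hypothesis), and argue that when no distinguishing example can be generated all remaining constraints in $\Delta$ are equivalent w.r.t.\ $C_L$, so the randomly returned one is the sought constraint. Your extra bookkeeping — explicit termination via strict decrease of $|\Delta|$, and checking that line~2 cannot delete the target constraint because it is not implied by $C_L$ — are fine refinements of details the paper leaves implicit, not a different argument.
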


\begin{proof} 

$\forall c \in \Delta$ we know that $c$ is violated by $e_Y$ as $c \in \kappa_{B}(e_Y)$ (line 2). Thus, as $B$ can represent $C_T$, if $Y$ is the scope of a violated constraint $c \in C_T$, this constraint is surely included in $\Delta$. Now we will prove that this constraint is never removed from $\Delta$ but the constraints not in $C_T$ are. Constraints are removed from $\Delta$ only at lines 12,13. 
When the user's answer to the generated query is ``yes'' then the constraint that we seek is not violated (Lemma~\ref{lemma1}), so $\Delta \leftarrow \Delta \setminus \kappa_\Delta(e^{\prime}) $ does not remove it. On the other hand, if the user's answer is ``no'' then the constraint $c \in C_T$ that we seek is surely violated as $C_T$ does not contain any other constraint with a scope $Y' \subseteq Y$. Therefore, the operation $\Delta \leftarrow \kappa_\Delta(e^{\prime})$ does not remove it. 

Thus, an invariant of {\em FindC} is that $\Delta$ surely includes a constraint that made the user to classify as negative the example $e_Y$ given to the function, as it is added to $\Delta$ and never removed from it. Hence, if an example accepting some constraints in $\Delta$ and rejecting others cannot be generated at line 6, all the constraints in $\Delta$ are equivalent wrt $C_L$. Thus, whichever among them (if more than one) is returned, this constraint $c$ is surely included in $C_T$.

\end{proof}

\begin{thm} 
\label{correctness}

%Given a bias $B$ built from a language $\Gamma$, with bounded arity constraints, and a target network $C_T$ representable by B, the function {\em FindAllCons} is correct.

Given a bias $B$ built from a language $\Gamma$, with bounded arity constraints, and a target network $C_T$ representable by $B$, MQuAcq is correct.

\end{thm}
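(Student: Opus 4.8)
The statement to establish is that MQuAcq always terminates and, on termination, is right in the sense of soundness and completeness: either it reports ``$C_L$ converged'' with $sol(C_L)=sol(C_T)$, or it reports ``collapse'' only when that is genuinely forced (here, when $sol(C_T)=\emptyset$; recall $C_T$ is assumed representable by $B$). The plan is to carry two invariants through the whole run, prove a covering lemma for \emph{FindAllCons}, and then read off termination and the exit behaviour.

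The two invariants I would maintain across the main loop of Algorithm~\ref{alg:all} and every nested call are: (i) $C_L\subseteq C_T$; and (ii) $C_T\subseteq C_L\cup B$, i.e. no constraint of $C_T$ ever leaves the bias. Invariant~(i) is immediate from Propositions~\ref{prop:findscope} and~\ref{prop:findc}: a constraint enters $C_L$ only as \emph{FindC}$(e,scope)$ with $scope$ returned by \emph{FindScope} on a negative $e$, and those propositions say this is exactly the (unique, by normalization) constraint of $C_T$ on that minimal scope. Invariant~(ii) holds because constraints leave $B$ only right after a ``yes'' answer to a (partial) query, and no constraint rejecting a positive assignment can belong to $C_T$; since $C_T$ is normalized and representable, every $c\in C_T$ is in $B$ initially and hence persists in $C_L\cup B$. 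In particular $sol(C_L)\supseteq sol(C_T)$ throughout.

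The core is a covering lemma for \emph{FindAllCons}: assuming no collapse occurs, a call \emph{FindAllCons}$(e,Y,Scopes)$ — where $Scopes$ records the minimal scopes of the $C_T$-constraints already placed in $C_L$ that are still relevant to $e_Y$ — returns having added to $C_L$ every constraint $c\in C_T$ violated by $e$ with $scope(c)\subseteq Y$ and not already present. I would prove this by well-founded induction on a measure that decreases along the recursion (roughly $|Y|$ together with the number of violated $C_T$-constraints still missing from $C_L$), and the argument has three ingredients. (a) The geometric heart: when \emph{FindScope}/\emph{FindC} have just learned a constraint $c$ with minimal scope $S$ (Propositions~\ref{prop:findscope},~\ref{prop:findc}), minimality of $S$ gives $S\setminus scope(c')\neq\emptyset$ for every other violated $c'\in C_T$, hence $scope(c')\subseteq Y\setminus\{x\}$ for some $x\in S$; so the branch of \emph{FindAllCons} over the sets $Y\setminus\{x_i\}$, $x_i\in S$, together with the $\operatorname{NScopes}$/$Scopes$ bookkeeping that carries forward the scopes found in sibling branches, reaches $c'$, where the induction hypothesis finishes. (b) Lemma~\ref{lemma2} justifies skipping the query when $Scopes\neq\emptyset$ (the answer on $e_Y$ would necessarily be ``no''), and Lemma~\ref{lemma1} underlies the two pruning returns (lines~2--3): when the line-3 test fires, every violated candidate of $B\setminus C_L$ on $e_Y$ already has a scope lying in $Scopes$, which — using invariant~(ii) and normalization — means every violated $C_T$-constraint on $e_Y$ is already in $C_L$, so returning loses nothing. (c) The recursion terminates by the same measure, giving termination of \emph{FindAllCons}.

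Termination of MQuAcq then follows: \emph{FindScope} and \emph{FindC} terminate by the standard QuAcq arguments (halving $Y$; shrinking $\Delta$), so each main-loop iteration terminates, and each iteration either deletes a constraint from $B$ (``yes'' to the generated complete $e$) or adds to $C_L$ a constraint violated by $e$ — hence, by~(i), not previously in $C_L$; so $|C_L|+(|B_{\mathrm{init}}|-|B|)$ strictly increases and is bounded, and the algorithm halts. On halting, if \emph{FindC} ever returned nil we would have $\Delta=\emptyset$ for a minimal scope of a violated $C_T$-constraint, contradicting Proposition~\ref{prop:findc} since $C_T$ is representable by $B$; so the only exits are ``converged'' or — when $sol(C_T)=\emptyset$ — the line-4 ``collapse'', which is the correct answer. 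For the ``converged'' exit, no complete $e$ accepted by $C_L$ and rejected by $B$ exists; if $sol(C_L)\neq sol(C_T)$ then by~(i) there is $e^\ast\in sol(C_L)\setminus sol(C_T)$ violating some $c\in C_T\setminus C_L$, and by~(ii) $c\in B$, so $e^\ast$ is accepted by $C_L$ and rejected by $B$ — a contradiction; hence $sol(C_L)=sol(C_T)$. I expect the main obstacle to be parts (a)--(b) of the covering lemma: pinning down the loop invariant that ties $Scopes$ and $\operatorname{NScopes}$ to the already-learned, still-relevant constraints exactly right, so that the branch over $Y\setminus\{x_i\}$ provably catches every remaining violated constraint and the line-2--3 pruning provably never discards one; the soundness invariants and the termination count are routine once that is settled.
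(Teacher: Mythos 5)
Your proof is correct in substance and its core coincides with the paper's: soundness comes from Propositions~\ref{prop:findscope} and~\ref{prop:findc} exactly as in the paper, and your covering lemma for \emph{FindAllCons} is the paper's completeness argument --- minimality of a learned scope $S$ guarantees $S \setminus scope(c') \neq \emptyset$ for every other violated $c' \in C_T$, so the branching over $Y\setminus\{x_i\}$, $x_i\in S$, reaches every remaining violated constraint, with Lemmas~\ref{lemma1} and~\ref{lemma2} licensing the skipped and pruned queries. Where you genuinely go beyond the paper is the wrap-up: the paper simply asserts that when no informative example can be generated the process has converged (i.e.\ $sol(C)=sol(C_L)$ for every agreeing $C\subseteq B$) and does not argue termination at all, whereas you derive $sol(C_L)=sol(C_T)$ at the ``converged'' exit from the two invariants ($C_L\subseteq C_T$ and $C_T\subseteq C_L\cup B$) by exhibiting a would-be informative example $e^\ast$, and you give an explicit progress measure for termination; this buys a self-contained justification of the convergence claim that the paper leaves to the definition. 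One small repair is needed: invariant~(ii) as you state it is not literally maintained, because line~2 of \emph{FindC} (Algorithm~\ref{alg:findc}) also deletes from $B$ constraints on the current scope that are implied by $C_L$, not only constraints refuted by ``yes'' answers; weaken (ii) to ``every $c\in C_T$ is in $C_L\cup B$ or is implied by $C_L$'', which still closes your final contradiction since an $e^\ast\in sol(C_L)$ cannot violate a constraint implied by $C_L$. With that adjustment (and the same implicit caveat the paper itself makes about nested scopes and the flawed \emph{FindC}), your argument goes through.
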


\begin{proof} 

{\em Soundness}. 
MQuAcq learns constraints only via the function {\em FindAllCons}. {\em FindAllCons} learns a constraint using the function {\em FindC} after finding the scope of the constraint with the function {\em FindScope}. Given the assumption that the user's answers are correct and that the target network $C_T$ is representable by $B$, {\em FindScope} returns the scope of a violated constraint from $C_T$ (Proposition~\ref{prop:findscope}). Also, as {\em FindC} is called with the scope returned from {\em FindScope} and the example classified as negative by the user, it will return a violated constraint $c \in C_T$ (Proposition~\ref{prop:findc}). Thus, {\em FindAllCons} is sound, which means that MQuAcq is sound, as for every constraint $c$ added to $C_L$ it holds that $c \in C_T$.

{\em Completeness}.
MQuAcq learns constraints only via the function {\em FindAllCons}. Thus, if given an example $e_Y$ {\em FindAllCons} can acquire any violated constraint $c \in C_T$ then MQuAcq is complete. That is because MQuAcq iteratively generates examples that violate constraints from $B$ and gives them to {\em FindAllCons}. It stops only if no example can be generated at line 5. If this is the case, it means that the system has converged as $C_L$ agrees with $E$ and for every other network $C \subseteq B$ that agrees with E, we have $sol(C) = sol(C_L)$.
Now we will prove that given an example $e_Y$ function {\em FindAllCons} can acquire any violated constraint $c \in C_T$.

Given the assumption that the target network $C_T$ is representable by $B$, if the condition at line 3 is satisfied then we know that no constraint from $C_T$ can be learned. Also, given an example $e_Y$, with $Y \subseteq X$, we know that if ASK($e_Y$) = ``yes'' then for any $Y^{\prime} \subseteq Y$ it is ASK($e_{Y^{\prime}}$) = ``yes'' (Lemma~\ref{lemma1}). Hence, in this case again no constraint from $C_T$ is violated by $e_Y$.
Thus, if a minimal scope $M$ exists in $Y$ then the condition at line 3 is not satisfied and the answer from the user at line 10 will be ``no'', as $Y \supseteq M$ (Lemma~\ref{lemma2}). Thus, if function {\em FindAllCons} is given an example $e_Y$ violating a constraint from $C_T$, it will reach lines 12-13 to learn the constraint using functions {\em FindScope} and {\em FindC}.
In addition, {\em FindAllCons} will surely search for any minimal scope $M$ in a $Y \supseteq M$. We know that if a constraint with scope $S$ is learned from an example, then {\em FindAllCons} will search for another violated constraint from $C_T$ in $Y \setminus \{x_i\}, \forall x_i \in S$. Thus, in each $e_{Y \setminus \{x_i\}}$ it will search for minimal scopes $M$ such that $x_i \not\in M$ and $M \subseteq Y \setminus \{x_i\}, \forall x_i \in S$. Generalizing this, function {\em FindAllCons} will find all the minimal scopes $M$ such that $S \nsubseteq M$ and $M \subseteq Y$. Hence, it will find all the minimal scopes in $Y$ and then learn the constraints by calling {\em FindC}.

\end{proof}

We now analyse the complexity of MQuAcq in terms of the number of queries it asks to the user. 

\begin{thm} 
\label{complexity}

Given a bias $B$ built from a language $\Gamma$, with bounded arity constraints, and a target network $C_T$, 
MQuAcq uses $O(|C_T| \cdot (log|X| + |\Gamma|))$ queries to find the target network or to collapse and $O(|B|)$ queries to prove convergence.

\end{thm}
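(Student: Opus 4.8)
The plan is to partition every query posted during a run of MQuAcq into three groups — those asked inside \emph{FindScope}, those asked inside \emph{FindC}, and those asked directly at line~10 of \emph{FindAllCons} (this last group includes the complete example generated at line~5 of MQuAcq, since that example is handed straight to the top-level call of \emph{FindAllCons}, where $Scopes=\emptyset$ and so line~10 is reached) — and to bound each group separately. Throughout I would rely on the soundness part of Theorem~\ref{correctness}: since $C_L\subseteq C_T$ at every point of the run and $C_T$ is normalized, at most $|C_T|$ distinct constraints are ever added to $C_L$.

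First I would establish the per-call costs. \emph{FindScope} here is exactly the function used by QuAcq, so its analysis carries over: by splitting $Y$ in half at each recursive step it locates a scope of size $s$ using $O(s\log(|X|/s))$ queries, and because $\Gamma$ consists of bounded-arity constraints $s$ is a constant, so this is $O(\log|X|)$ per invocation (Proposition~\ref{prop:findscope} guarantees the call indeed returns a minimal scope of $C_T$). For \emph{FindC}, note that $\Delta$ is initialised (line~3) to the constraints of $B$ rejecting $e_Y$; all of these have scope contained in the returned scope $Y$, and since $|Y|$ is bounded and the bias has at most $|\Gamma|$ constraints per scope, $|\Delta|=O(|\Gamma|)$ initially. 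Each loop iteration posts exactly one query and then strictly decreases $|\Delta|$: a ``yes'' removes the non-empty set $\kappa_\Delta(e')$, and a ``no'' replaces $\Delta$ by $\kappa_\Delta(e')$, a proper subset of $\Delta$ because by construction (line~6) $e'$ is rejected by some but not all constraints of $\Delta$. Hence each \emph{FindC} call uses $O(|\Gamma|)$ queries.

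Next I would count the number of \emph{FindScope}/\emph{FindC} invocations and the line-10 queries. A ``no'' answer at line~10 is always followed by exactly one call to \emph{FindScope} and one to \emph{FindC}. The crucial observation is that such a \emph{FindC} call cannot return a constraint already in $C_L$: line~2 of \emph{FindC} deletes from $B$ every scope-$Y$ constraint implied by $C_L$ (in particular any scope-$Y$ constraint already in $C_L$), while the constraint \emph{FindC} returns lies in $\Delta\subseteq B$ \emph{after} that deletion. Therefore each such invocation either strictly increases $|C_L|$ (at most $|C_T|$ times, by soundness) or, if \emph{FindC} returns nil, triggers a collapse and terminates the algorithm (at most once). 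Consequently there are at most $|C_T|+1$ ``no'' answers at line~10, hence at most $|C_T|+1$ calls of each of \emph{FindScope} and \emph{FindC}, contributing $O(|C_T|(\log|X|+|\Gamma|))$ queries, plus the $O(|C_T|)$ triggering ``no'' queries themselves. A ``yes'' answer at line~10, on the other hand, removes at least one constraint from $B$: by the guard at line~3 the queried $e_Y$ is rejected by some $c\in B\setminus C_L$, which is deleted from $B$ at line~11; so there are at most $|B|$ such positively-answered queries over the whole run. Summing the three groups yields the claimed $O(|C_T|\cdot(\log|X|+|\Gamma|))$ queries until $C_T$ is found or a collapse detected, while the at most $|B|$ positive queries are precisely the extra queries needed afterwards to exhaust the bias and certify convergence.

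The step I expect to be the main obstacle is the bookkeeping behind the bound of $|C_T|+1$ on the number of \emph{FindScope}/\emph{FindC} invocations. It relies on two points that need to be made precise: (i) \emph{FindScope} returns a true minimal scope of $C_T$ even when run on the partial examples $e_{Y\setminus\{x_i\}}$ generated by the recursion of \emph{FindAllCons}, so it never produces a scope properly containing the scope of an already-learned constraint (otherwise removing a variable outside that scope would still give a ``no'' answer, contradicting minimality); and (ii) the line-2 cleanup of \emph{FindC} rules out re-learning a constraint already in $C_L$. Pinning these down requires a careful look at the interaction between the recursive branching of \emph{FindAllCons}, the $Scopes$/$\operatorname{NScopes}$ bookkeeping, and the global sets $C_L$ and $B$; once that is settled, the rest is the routine per-call accounting sketched above, plus checking that in the collapse case the same counting applies since at most $|C_T|$ distinct minimal scopes of $C_T$ can be encountered before \emph{FindC} fails.
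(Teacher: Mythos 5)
Your proposal is correct and follows essentially the same route as the paper's proof: queries are split by where they are posted (line~10 of \emph{FindAllCons}, inside \emph{FindScope}, inside \emph{FindC}), each located scope costs $O(\log|X|)$ queries by the QuAcq analysis of \emph{FindScope}, each \emph{FindC} call costs $O(|\Gamma|)$, at most $|C_T|$ constraints (plus one collapsing call) account for these invocations, and the ``yes''-answered queries, each removing at least one constraint from $B$, give the $O(|B|)$ convergence bound. Your accounting is in fact more explicit than the paper's --- in particular the argument that each \emph{FindC} call strictly enlarges $C_L$ (via the line-2 cleanup) and the separate count of the triggering ``no'' queries at line~10 --- but these are refinements of the same argument rather than a different approach.
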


\begin{proof} %(sketch)
Queries are asked to the user in lines 10 of {\em FindAllCons}, 3 of {\em FindScope} and 9 of {\em FindC}.
We know that a scope of a constraint from $C_T$ is found in $O(|S| \cdot log|Y|)$ queries with the function {\em FindScope}, with $|S|$ being the arity of the scope and $|Y|$ the size of the example given to the function \cite{bessiere2013constraint}. As $Y \subseteq X$, {\em FindScope} needs at most $|S| \cdot log|X|$ queries to find the scope of a constraint, because in {\em FindAllCons}, in the worst case, only one constraint from $C_T$ will be violated by any complete example. Also, {\em FindC} needs at most $|\Gamma|$ queries to find a constraint from $C_T$ in the scope it takes as parameter, if one exists \cite{bessiere2013constraint}. If none exists, the system collapses. Hence, in the worst case, the number of queries necessary to find each constraint is 
$O(|S| \cdot log|X|+|\Gamma|)$. Thus, the number of queries for finding all the constraints in $C_T$ or collapsing is at most $C_T \cdot (|S| \cdot log|X|+|\Gamma|)$ which is $O(C_T \cdot (log|X|+|\Gamma|))$ because $|S|$ is bounded.
Convergence is proved when $B$ is empty or contains only redundant constraints. Constraints are removed from $B$ when the answer from the user is ``yes'' in a query. In the case that the example generated by the algorithm in line 5, contains only one violated constraint from $B$, it leads to at least one constraint removal in each query. This gives a total of $O(|B|)$ queries to prove convergence. 
\end{proof}

The complexities of QuAcq and MultiAcq to find the target network are $O(|C_T| \cdot (log|X| + |\Gamma|))$ and $O(|C_T| \cdot (|X| + |\Gamma|))$ respectively. Hence, we achieve the same bound as QuAcq but a better one than MultiAcq, while discovering all the violated constraints from a negative example.

%----------------------------------------- FindScope-2 ---------------------------------------------------------------

\section{FindScope-2}
\label{findScope2}

We now describe an optimization to function {\em FindScope}, aiming at asking fewer queries to the user by avoiding posting redundant queries. This results in a function we simply call {\em FindScope-2}, which can be used instead of {\em FindScope} either inside QuAcq or inside our proposed algorithm MQuAcq.

Let us first consider a simple example to show a deficiency of {\em FindScope}.

\begin{example}

Consider the behaviour of {\em FindScope} in the simple problem of the Example~\ref{ex:quacq}. The recursive calls of {\em FindScope} are illustrated in Table~\ref{ta:ex-quacq}. The negative example given to {\em FindScope} is $e = \{1,1,1,2,3,4,5,6\}$. The constraints from $B$ that it violates are $\kappa_B(e) = \{\neq_{12}, \neq_{13}, \neq_{23}\}$. After the first call to  {\em FindScope}, $R$ is equal to $x_1, x_2, x_3, x_4$, so the partial example $e_R$ that is then asked to the user is $e_R = \{ 1,1,1,2 \}$. 
As we can see, the constraints from $B$ that are violated are still $\kappa_B(e) = \{\neq_{12}, \neq_{13}, \neq_{23}\}$. Therefore, this partial example is negative as no violated constraint from $B$, that could be included to $C_T$, is removed. Thus, there is no point in posting it to the user.

In addition, given the assumption that the bias is expressive enough to learn $C_T$, in cases where $|\kappa_B(e_R)| = 0$ (i.e. there is no violated constraint in $B$), it is implied that ASK($e_R$) = ``yes''. For example, see the last two queries asked from {\em FindScope} in the current example. The are queries that include only one variable, but the bias does not include any unary constraint. Thus, it is implied that these are positive examples and thus, these queries were redundant.

\end{example}

To avoid such redundant queries made by {\em FindScope}, we modify this function (see Algorithm \ref{alg:findscope2}) adding a check that inspects if the number of violated constraints from the bias is the same as in the last query asked. This is implemented using a global variable $rej$ to store this number. This check is done in line 3. If this is the case, it is implied that the answer will still be no and therefore we return the empty set. Before the first call to {\em FindScope}, $rej$ must be initialized to the number of constraints from $B$ that are violated by the complete query. 

% Algorithm findScope - 2 -------------------------------------------------

\begin{algorithm}
\caption{FindScope-2}\label{alg:findscope2}
\begin{algorithmic}[1]

\Require $e$, $R$, $Y$, $ask\_query$ ($e$: the example, $R$,$Y$: sets of variables, $ask\_query$: boolean)
\Ensure $Scope$ : a set of variables, the scope of a constraint in $C_T$

\Function{FindScope-2}{$e$, $R$, $Y$, $ask\_query$}	

	\If{ $ask\_query \land |\kappa_B(e_R)| > 0$} 

		\If{ $rej \neq |\kappa_B(e_R)|$ } 

			\If{ ASK($e_R$) = ``yes'' } $B \leftarrow B  \setminus  \kappa_B(e_R) $;
			\Else 

				\State $ rej \leftarrow |\kappa_B(e_R)|$;

				\State \Return $\emptyset$;

			\EndIf

		\Else \quad \Return $\emptyset$;
		\EndIf

	\EndIf

	\If{ $|Y| = 1$ } \Return $Y$;
	\EndIf

	\State split $Y$ into $<Y_1, Y_2>$ such that $|Y_1| = \lceil |Y|/2 \rceil $;

	\State $S_1 \leftarrow FindScope-2(e,R \cup Y_1, Y_2, true)$;
	\State $S_2 \leftarrow FindScope-2(e,R \cup S_1, Y_1, (S_1 \neq \emptyset))$;

	\State \Return $S_1 \cup S_2$;

\EndFunction

\end{algorithmic}
\end{algorithm}

% End of algorithm findScope - 2 ------------------------------------------

As a further improvement to {\em FindScope}, in cases where $|\kappa_B(e_R)| = 0$ no query is asked to the user as is implied that ASK($e_R$) = ``yes''. So another check is performed in line 2. If the bias is not expressive enough to learn $C_T$, the system will collapse later, because it will not find any constraint to learn.

%-----------------------------------------------------------------------------------------------------------------------------
%-----------------------------------------------------------------------------------------------------------------------------
%-----------------------------------------------------------------------------------------------------------------------------

\subsection{FindScope-2 analysis}
\label{findScope2-analysis}

\begin{prop}
\label{prop:findscope-2}

Given the assumption that $C_T$ is representable by $B$, if {\em FindScope-2} is given an example $e_Y$ and returns a scope $S$, then there exists a violated constraint $c \in C_T$ with $scope(c) = S$. Also $S$ is a minimal scope.

\end{prop}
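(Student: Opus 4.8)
The plan is to mirror the proof of Proposition~\ref{prop:findscope} and argue that the modifications introduced in {\em FindScope-2} do not change which scope is returned, only which queries are posted. The key observation is that {\em FindScope-2} differs from {\em FindScope} exactly in that it skips posting $ASK(e_R)$ in two situations: (i) when $|\kappa_B(e_R)| = 0$ (line 2), and (ii) when $rej = |\kappa_B(e_R)|$ (line 3). In both cases I must show that the skipped query would have been answered ``yes'' (so returning $\emptyset$ would be wrong) — wait, actually in case (ii) the claim is the opposite: the answer would have been ``no'', so returning $\emptyset$ is the correct behaviour. So first I would split into these two cases and verify that the value {\em FindScope-2} substitutes for the missing user answer agrees with what the user would actually say.

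For case (i), the argument is: under the assumption that $C_T$ is representable by $B$, if no constraint of $B$ rejects $e_R$ then no constraint of $C_T$ rejects $e_R$ either (since $C_T \subseteq B$ after suitable identification, or more precisely every violated constraint of $C_T$ corresponds to a violated constraint of $B$), hence $ASK(e_R) = $ ``yes''; this matches the behaviour of {\em FindScope}, which in this case would post the query, get ``yes'', remove $\kappa_B(e_R) = \emptyset$ from $B$ (a no-op), and proceed exactly as {\em FindScope-2} does when it skips straight past. For case (ii), I would use the global invariant that $rej$ holds the number of bias constraints violated by the most recent negative example along the current recursion path, together with the monotonicity Lemmas~\ref{lemma1} and~\ref{lemma2}: since $e_R$ is obtained from a previous negative assignment by adding variables to $R$ — wait, $R$ grows along recursive calls — the set $\kappa_B(e_R)$ can only shrink or stay the same as $R$ shrinks; actually the relevant point is that $\kappa_{C_T}(e_R) \supseteq \kappa_{C_T}(e_{R'})$ for the superscope, so if $|\kappa_B(e_R)|$ equals the count recorded when the last ``no'' was obtained on a subscope, then no bias constraint (and in particular no $C_T$ constraint) has been ``lost'', so $e_R$ still violates a constraint of $C_T$ and $ASK(e_R) = $ ``no''. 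Hence returning $\emptyset$ is correct, exactly as {\em FindScope} would do after receiving a ``no''.

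Once these two cases are settled, the rest of the proof is identical to that of Proposition~\ref{prop:findscope}: the function returns variables only at the $|Y|=1$ line, the invariant ``$e$ violates at least one constraint of $C_T$ whose scope is a subset of $R \cup Y$'' is maintained across the recursive calls (lines splitting $Y$ and recursing), and when a singleton $Y$ is returned we have $ASK(e_R) = $ ``yes'' and $ASK(e_{R \cup Y}) = $ ``no'', so by Lemma~\ref{lemma1} every proper subset of the returned scope $S$ yields a ``yes'' answer while $S$ itself yields ``no'', making $S$ a minimal scope and hence the scope of a violated constraint of $C_T$.

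The main obstacle I anticipate is pinning down the invariant governing the global variable $rej$ precisely enough to justify case (ii): I need to be careful that $rej$ is updated (line 6) only on the ``no'' branch and is read consistently across sibling recursive calls, and that the comparison $rej = |\kappa_B(e_R)|$ genuinely certifies that the bias constraints counted are a superset of (or at least still contain) the $C_T$ constraint responsible for the earlier ``no''. Constraints are also removed from $B$ (line 4 on the ``yes'' branch and inside {\em FindC}/{\em FindAllCons}), so I should check that such removals only ever delete constraints already known not to be in $C_T$, which guarantees the count identity still implies persistence of a genuine $C_T$ violation. Everything else is bookkeeping inherited verbatim from the analysis of {\em FindScope}.
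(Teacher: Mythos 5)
Your proposal follows essentially the same route as the paper's own proof: you show that the two checks added in \emph{FindScope-2} never substitute a wrong answer (the line-2 skip corresponds to a guaranteed ``yes'' because $C_T$ is representable by $B$, and the $rej$-based skip at line 3 to a guaranteed ``no''), so the invariant of \emph{FindScope} is preserved and the minimal-scope conclusion is inherited verbatim from Proposition~\ref{prop:findscope} via Lemma~\ref{lemma1}. If anything, you are more explicit than the paper on the $rej$ case, which the paper simply asserts (``we know that the answer of the user would be negative'') without the inclusion-plus-cardinality argument you sketch.
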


\begin{proof} 

Let us first prove that the invariant of {\em FindScope} that the example $e$ violates at least one constraint whose scope is a subset of $R \cup Y$ (i.e. ASK($R \cup Y$) = ``no'') applies to {\em FindScope-2} as well. The check added at line 2 does not affect this property, as in the case $|\kappa_B(e_R)| = 0$ we know that for any $Y^\prime \subseteq Y$ it holds that ASK($e_{Y^\prime}$) = ``yes'' (Lemma~\ref{lemma1}). Thus, it reaches line 9 only in the case that $e_R$ does not violate any constraint from $C_T$. Focusing on the check added at line 3, in the case $rej = \kappa_B{e_R}$ it returns $\emptyset$, because we know that the answer of the user would be negative.

Thus, for the same reason as in {\em FindScope}, for any $x_i \in S$, we know that ASK($S$) = ``no'' and ASK($S \setminus x_i$) = ``yes''. Hence, $S$ is a scope of a violated constraint from the target network. Also, as we have ASK($S \setminus x_i$) = ``yes' for any $x_i \in S$, it holds that $S$ is a minimal scope.

\end{proof}

\begin{prop}
\label{prop:findscope-2_compl}

Given a negative example $e_Y$, {\em FindScope-2} posts $\Theta(|S| \cdot log|Y|)$ queries in order to find the scope of a violated constraint, in the worst case.

\end{prop}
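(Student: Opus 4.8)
The plan is to establish the $\Theta(|S| \cdot \log|Y|)$ bound by proving matching upper and lower bounds on the number of queries posted during a single top-level call $FindScope\text{-}2(e, \emptyset, Y, false)$ that returns a scope $S$. The upper bound $O(|S| \cdot \log|Y|)$ essentially follows from the analysis of the original $FindScope$ in \cite{bessiere2013constraint}, which I would invoke and then argue is not worsened by the two new checks: the check at line 2 ($|\kappa_B(e_R)| > 0$) and the check at line 3 ($rej \neq |\kappa_B(e_R)|$) only ever \emph{suppress} queries, never add them, so the worst-case count cannot increase. The more delicate part is the matching lower bound $\Omega(|S| \cdot \log|Y|)$, i.e. exhibiting (or arguing the existence of) an instance on which $FindScope\text{-}2$ genuinely needs that many queries.

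First I would recall the recursion structure: on the path from the root to each of the $|S|$ singleton leaves that get returned, the set $Y$ is roughly halved at each level (line 13 splits $Y$ with $|Y_1| = \lceil |Y|/2 \rceil$), so each such path has depth $\Theta(\log|Y|)$, and along it a constant number of partial queries are asked per level in the worst case. Summing over the $|S|$ returned variables (and noting that the recursion tree is, up to constant factors, the union of $|S|$ such root-to-leaf paths, since branches not leading to a scope variable terminate quickly once $e_R$ stops losing violated constraints) gives the $O(|S| \cdot \log|Y|)$ upper bound. For the lower bound, I would construct a worst-case scenario where the target network restricted to $Y$ contains exactly one violated constraint of arity $|S|$, whose $|S|$ variables are spread across the halving tree so that locating each of them requires traversing a full depth-$\Theta(\log|Y|)$ chain of ``yes''-answered partial queries, and where the new checks at lines 2--3 never trigger (e.g. the bias contains, for every relevant intermediate $R$, a violated constraint strictly contained in $R \cup Y$ so that $|\kappa_B(e_R)|$ strictly decreases along each chain, forcing a genuine query at each node). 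This shows the bound is tight.

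The main obstacle I anticipate is the lower bound, specifically showing that the optimization checks of $FindScope\text{-}2$ \emph{cannot} be exploited to avoid the $|S| \cdot \log|Y|$ work in the worst case. One must argue that an adversary (the user plus the bias) can arrange things so that at every internal node on every relevant path, $|\kappa_B(e_R)|$ differs from the previously recorded $rej$ value, defeating the line-3 shortcut, and $|\kappa_B(e_R)| > 0$, defeating the line-2 shortcut. A clean way to do this is to note that whenever the recursion descends into $Y_2$ via $FindScope\text{-}2(e, R \cup Y_1, Y_2, true)$, the enlarged $R$-part may have shed at least one bias constraint (otherwise the invariant and Lemma~\ref{lemma1} would already have settled the branch), so the count strictly changes; combined with an instance where no partial query is ever ``for free,'' this pins the lower bound. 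I would close by remarking that, although $FindScope\text{-}2$ matches $FindScope$ in worst-case asymptotic query count, the suppressed redundant queries yield a strict improvement in practice and in many non-worst-case instances, which is exactly the point of Section~\ref{findScope2}.

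\begin{proof}[Proof sketch]
\textbf{Upper bound.} Consider a top-level call $FindScope\text{-}2(e,\emptyset,Y,false)$ returning a scope $S$. By Proposition~\ref{prop:findscope-2}, $S$ is a minimal scope and $|S|$ is bounded by the maximum arity in $\Gamma$. The recursion splits $Y$ in halves at line 13, so the recursion tree has depth $O(\log|Y|)$. A branch that does not lead to a variable of $S$ is cut off as soon as the corresponding $e_R$ stops losing violated bias constraints (line 3 returns $\emptyset$) or has $|\kappa_B(e_R)|=0$ (line 2), hence contributes only $O(1)$ extra queries per branching point on an $S$-path. Thus the tree is, up to a constant factor, a union of $|S|$ root-to-leaf paths, each of length $O(\log|Y|)$, each node of which asks $O(1)$ queries; so at most $O(|S|\cdot\log|Y|)$ queries are asked. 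The two new tests only suppress queries, so this does not exceed the $FindScope$ bound of \cite{bessiere2013constraint}.

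\textbf{Lower bound.} It suffices to exhibit an instance attaining $\Omega(|S|\cdot\log|Y|)$. Take a target constraint $c\in C_T$ with $scope(c)=S\subseteq Y$, $|S|=k$, such that $e_Y$ violates exactly $c$ among constraints of $C_T$ with scope inside $Y$, and place the $k$ variables of $S$ so that no halving step isolates two of them together until depth $\Theta(\log|Y|)$. Arrange the bias so that for every intermediate set $R$ encountered, $|\kappa_B(e_R)|>0$ and $|\kappa_B(e_R)|$ differs from the value $rej$ recorded at the preceding call (e.g. by including, for each such $R$, a bias constraint with scope a proper subset of $R\cup Y$ that is lost precisely when moving one level deeper). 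Then neither the line-2 nor the line-3 shortcut ever fires on the relevant paths, so a genuine partial query is posted at each of the $\Theta(\log|Y|)$ levels along each of the $k$ chains leading to the variables of $S$, giving $\Omega(k\cdot\log|Y|)$ queries.

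Combining the two bounds, $FindScope\text{-}2$ posts $\Theta(|S|\cdot\log|Y|)$ queries in the worst case.
\end{proof}
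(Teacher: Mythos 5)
Your proof is correct and follows essentially the same route as the paper: counting the nodes of the recursion tree, observing that every $\emptyset$-returning call is a leaf so the tree is (up to constants) the union of the $|S|$ depth-$\log|Y|$ branches that return variables of $S$ plus one leaf sibling per node, with at most one query per node. The only difference is that you are more explicit about the tightness direction, sketching an adversarial instance on which the line-2 and line-3 shortcuts never fire, whereas the paper simply asserts that in the worst case a query is posted at every node; your added care here is harmless and, if anything, strengthens the argument.
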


\begin{proof}

The number of queries posted by {\em FindScope-2} is equal to the number of nodes of the tree of recursive calls in the worst case, in which a query is posted in each node. Now we will find the number of nodes of this tree in this case.
The branches of the tree that will find a variable in $S$ will have $log|Y|$ depth. The tree of recursive calls to {\em FindScope-2} will have $|S|$ such branches. Thus, in such a case the number of nodes of these branches is $n = |S| \cdot log|Y|$. 
{\em FindScope-2} either makes two recursive calls in each call or it returns $\emptyset$. Thus, for each node on a branch that finds a variable in $S$, we have one sibling that is either a leaf, or starts another branch that will find another variable in $S$. Hence, the leaves that do not return a variable in $S$ will be $n - |S|$. As a result, the number of nodes will be $2 \cdot |S| \cdot log|Y| - |S|$. As a result, {\em FindScope-2} posts $\Theta(|S| \cdot log|Y|)$ queries to the user in the worst case.

\end{proof}

\section{Query Generation in Constraint Acquisition}
\label{sec:query}

In constraint acquisition we would ideally want every generated query to contain as much ``information'' to be learned as possible. That is, we would like to generate queries that violate as many constraints as possible.
%This information can be extracted from a query whether a constraint belongs to $C_T$ or not. 
To acquire this information we want a constraint to be violated and then, via the user's answers, the algorithm will decide either to learn the constraint or to remove it from the bias. We want to maximize the constraints from $B$ that reject the generated example because this can reduce the number of queries required to converge. That is because after a positive query, all the constraints rejecting the example are removed from $B$. MQuAcq, like QuAcq, has a worst case complexity in terms of the number of queries in $O(|C_T| \cdot log|X| + O(|B|)$. Hence, it is desirable to prune from $B$ the constraints that are not included in $C_T$ with as few queries as possible. Ideally we want each positive query to violate a maximum number of constraints from $B$. With this objective, as proved in~\cite{bessiere2013constraint}, we can bridge the gap to $O(|C_T|log|X|)$ queries for some simple languages, and avoid needing a number of queries up to $|B|$ to prove convergence.

% reviewer !!

The standard technique for query generation in constraint acquisition systems, such as QuAcq and MultiAcq, is based on the following basic idea: find a solution of the learned constraint network ($C_L$) that violates a maximum number of constraints from the bias. Although the query generation step is a very important step of the acquisition process, it is not described in detail in the literature. Here we focus on query generation and explain it in detail for the first time. We then propose heuristics to enhance its efficiency. 

Query generation it typically viewed as an optimization problem that includes both {\em hard} and {\em soft} constraints. In general, a hard constraint represents a requirement that cannot be violated. All hard constraints must be mandatorily satisfied in a solution. Soft constraints are used to formalize desired properties, preferences that should be satisfied as much as possible.
%The central task is then to find an assignment that optimizes a specific criterion, which is called the {\em objective function}.

Query generation can be modelled in this way by considering the constraints from $C_L$ as {\em hard} constraints. The {\em soft} constraints can either (and equivalently) be the constraints from the bias or their complement (the set $\{\neg c $ $|$ $c \in B \setminus C_L $\}). In case the soft constraints are the complement of the constraints from $B$, the objective is to maximize the number of such constraints that are satisfied. Otherwise, the objective is to maximize the number of constraints from $B$ that are violated. 

As CP solvers, whether they can handle soft constraints or not, cannot express this objective naturally, we reformulate the problem as maximizing the the number of satisfied constraint negations.
Hence, we view the problem of query generation as a MAX-CSP that includes hard constraints (the ones from $C_L$) and soft ones (the complement of the constraints in $B$). 
Thus, this is a simple case of (unweighted) MAX-CSP~\cite{khanna2001approximability}, with the requirement that hard constraints must be satisfied, giving all the soft constraints the same importance and the goal is to maximize the number of the satisfied soft constraints.
In the case of query generation there is also another requirement, that is, at least one soft constraint must be satisfied. This is because we want to generate an irredundant query, in which we do not already know the answer.

This is also the approach taken by QuAcq and MultiAcq. When solving this optimization problem, both these algorithms try to find a solution that satisfies all the constraints in $C_L$ and maximizes the satisfaction of the complementary constraints from $B$. This is known as the {\em max} heuristic \cite{bessiere2013constraint,arcangioli2016multiple}.
Except form the above, there are some extra steps in the query generation. This is because the generation of a query that maximizes the violated constraints from $B$ is an NP-hard problem, and therefore may be very time-consuming. 
The query generation process (line 4 of QuAcq and MultiAcq, line 5 of MQuAcq and line 6 of {\em FindC}) is presented in Algorithm~\ref{alg:irr}. We denote the process described above as $QGen(C_h,C_s)$, with $C_h$ being the set of hard constraints and $C_s$ the set of soft constraints.

% Algorithm Query Generation -------------------------------------------------

\begin{algorithm}
\caption{IrrGen: Generate an Irredundant Query}\label{alg:irr}
\begin{algorithmic}[1]

\Require $C_L$, $B$, $V$, $D$  ($C_L$: the learned network, $B$: the bias, $V$: the variables, $D$: the domains of the variables)
\Ensure $q$: an irredundant query

	\State $C_s \leftarrow \{ \neg c $ $|$ $c \in B \setminus C_L  \}$;

	\State $e \leftarrow QGen(C_L, C_s)$;

	\If{ $e \neq nil$ } \Return $e$;
	\Else

		\ForAll{ $c \in C_s$} 
			\State Generate $e$ in $sol(C_L \cap c)$;
			\If{ $e \neq nil$ } \Return $e$;
			\EndIf
		\EndFor

	\EndIf

\State \Return nil;

\end{algorithmic}
\end{algorithm}

% End of algorithm Query Generation ------------------------------------------

First the set $C_s$ containing the soft constraints is initialized (line 1). Next (line 2), an example is generated that is a solution to the learned network $C_L$ and satisfies as many constraints as possible from $C_s$, (i.e. violating as many as possible from the bias).
There are two cutoffs imposed in {\em QGen}, to make sure that the query generation will run in acceptable time. We denote them as $cut_{min}$ and $cut_{max}$.  If the query generator has found a query violating at least one constraint and the first cutoff ($cut_{min}$) is triggered, the best query found is returned. If not, it tries until the maximum time (defined by $cut_{max}$) has been reached. 

If an irredundant query is found then it is returned (line 3). In case no example is found within this time limit then the system tries again, taking the constraints in $B$ one by one (lines 5-8). That is, for each constraint $c \in C_s$, it tries to find a solution of $C_L$ satisfying $c$. The second cutoff of {\em QGen} ($cut_{max}$) is again used for this process. This is done until a query violating at least one constraint from $B$ is found. However, setting any time limit to the query generation process means that the algorithm may reach {\em premature convergence}. That is because it is quite likely that no solution to $C_L$ that violates some constraints from $B$ is found within the time limit, at some point of the algorithm's execution, meaning that it has not been proved that $sol(C) = sol(C_L)$ for every other network $C \subseteq B$ that agrees with $E$.
As a result, all the algorithms suffer from this problem. This is something that has only very recently been pointed out \cite{addi2018time,mquacq}.

\section{Heuristics for boosting query generation}
\label{sec:heur}

In this section we propose heuristics to improve the performance of constraint acquisition systems. In Section~\ref{sec:par} we propose a heuristic for the generation of partial queries by the algorithms. In Sections~\ref{sec:var} and~\ref{sec:value} we propose heuristics for value and variable ordering when trying to generate queries. %, seeking a (partial) assignment $e$ that maximizes the number of constraints violated from $B$. 

\subsection{Exploiting partial queries}
\label{sec:par}

Let us first note that although both QuAcq and MultiAcq allow for the use of partial queries to focus on the violated constraint(s) after an example has been classified as negative, they both always aim to generate complete examples. %at the start of the query generation iteration (e.g. line 4 in Algorithm~\ref{alg:quacq}). 
However, as explained, generating a complete example requires finding a complete variable assignment that satisfies all constraints in $C_L$ and violates at least one constraint in $B$. Given that this is an NP-complete problem, the process can be very time-consuming, especially as the size of $C_L$ grows and the size of $B$ shrinks, i.e. when approaching convergence. 

Experimental results that we have obtained with both QuAcq and MultiAcq demonstrate that when no time limit to the query generation process is set then both algorithms can take several minutes (more than 30 minutes) to generate a query as convergence is approached, even for small problems such as the 9x9 Sudoku. 
This of course is unacceptable from the user's point of view, and therefore a time limit is necessary for the practical application of the algorithms. However, setting a time limit to the query generation process means that the algorithm may reach {\em premature convergence}, as explained before.

Another relevant issue is that of proving convergence in problems that contain redundant constraints. As the system cannot always know beforehand if some of the constraints in the bias are redundant, proving that no solution of $C_L$ violates at least one constraint in $B$ can be very time-consuming in the presence of redundant constraints. This is because if near the end of the process $B$ is left with redundant constraints only, no solution of $C_L$ can violate any of these constraints, simply because these constraints, being implied, will be surely satisfied.

Given the importance of query generation in the acquisition process, it is of primary importance that it is executed as efficiently as possible, and in a way such that the problem of premature convergence is avoided as much as possible. Towards this, we propose to exploit partial queries at this step of the process. Both QuAcq and MultiAcq, and also our proposed algorithm, assume that the user, be it human or machine, is able to answer partial queries, so there is no reason to limit the use of partial queries to the case where a complete query has been classified as negative. 

Our proposal is to model the query generation problem as an optimization problem, like in the previous section, in which we seek to find a (partial) assignment of the variables that maximizes the number of violated constraints in $B$. That is, we again have 
a MAX-CSP, with the difference being that the optimal solution does not necessarily involve an assignment to all the variables.  This optimization problem can be formally stated as ``search for $(e_Y$, $Y)$ with $e_Y \in sol(C_L[Y]) \land Y \subseteq X$, maximizing $\kappa_B(e_Y)$''. We call this heuristic {\em max$_B$}.  This is related to but is not the same as the {\em max} heuristic (described in the previous section) that was also used within QuAcq \cite{bessiere2013constraint}. As already explained, the {\em max} heuristic tries to generate a complete solution of $C_L$ that violates a maximum number of constraints from $B$. Hence, given a time limit, which is necessary for any algorithm to run in reasonable times as explained above, {\em max} will focus on finding complete assignments that satisfy all the constraints in $C_L$ and violate as many as possible from $B$, while {\em max$_B$} will focus on violating as many constraints as possible from $B$ without necessarily building a complete variable assignment.

Of course, finding a partial assignment $e_Y$ on a set of variables $Y \subseteq X$ not rejected by $C_L$ and violating constraints from the bias, does not mean that $e_Y$ can be extended to a solution of $C_L$, but this is not a problem under the assumption that the user can classify partial queries correctly.

Although the difference between  {\em max$_B$} and {\em max} may not seem substantial, experimental results given below show that the use of {\em max$_B$} largely alleviates the danger of premature convergence and can have a significant impact on the total run time of the acquisition algorithm. This is because by using {\em max$_B$}, the system can also learn redundant constraints, thus it does not have to prove that a constraint cannot be violated. Learning the redundant constraints is necessary if we want to guarantee that the system will always converge.
% reviewer !!

\subsection{Variable ordering heuristic}
\label{sec:var}

Given that query generation is an optimization problem that is solved using a CP solver, an important question that must be answered is which variable/value ordering heuristic to use for this problem. One way is to simply apply the default strategies of the CP solver used. For example, dom/wdeg \cite{boussemart2004boosting} or a simpler heuristic like dom/ddeg (or even dom) 
for variable ordering, and random or lexicographic value ordering. This is the path taken by all acquisition algorithms so far. The reasoning behind it is that standard heuristics will help find a complete solution to $C_L$ quite fast, as this problem is a typical CSP, and then the maximum number of violating constraints in $B$ will be seeked within the time limit as a secondary requirement.

But, as explained before, in  constraint acquisition we would ideally want every generated query to contain as much ``information'' as possible.
Given that the focus of our proposed heuristic {\em max$_B$}, is on violating as many constraints from $B$ as possible, and not on building a complete solution to $C_L$, it is likely that traditional variable and value ordering heuristics, that are efficient when seeking a solution to a CSP, are not the best choice. 
This is because these heuristics focus on information (conflicts, degrees, domain sizes) obtained from the variables and constraints of $C_L$. But since {\em max$_B$} primarily focuses on the bias $B$, meaning that finding a complete solution to $C_L$ is not paramount, it is reasonable to use a variable ordering heuristic that exploits information obtained from the variables and constraints of $B$. Towards this, we propose a simple variable ordering heuristic that maximizes the constraint violations in $B$.  

This heuristic, which we call {\em bdeg} (degree of variables in the bias $B$), selects the variable which participates in the maximum number of constraints present in $B \setminus C_L$. It can be seen as analogue to the classic variable ordering heuristic {\em deg} for CSPs, which selects the variable with the largest degree. But in contrast to standard CSPs where deg is not competitive at all, bdeg is very efficient when used for query generation, especially near convergence where it manages to drastically cut down the waiting times for the user. 

Comparing bdeg to heuristics like dom/wdeg, we note that bdeg prefers variables belonging to the scope of many constraints from $B$, ignoring $C_L$, whereas standard heuristics do not treat the constraints from $C_L$ and $B$ differently when computing their metric, or even focus only on constraints from $C_L$. Hence, for example, a variable involved in many constraint from $C_L$ and only few from $B$ is very likely to be preferred. 

\subsection{Value Ordering Heuristic}
\label{sec:value}

First recall that a generated query should violate a large number of constraints from $B$ and some of these constraints may be added to $C_L$, while others may be removed from $B$, depending on the answers of the user to the partial queries posted by {\em FindAllCons}. 
So, using a value ordering heuristic that picks values that are involved in a large number of conflicts (i.e. violate many constraints) in $B$ makes sense for the query generation step.  This is because picking a value which violates many constraints from $B$ quickly leads to a (partial) assignment with a lot of information to be extracted.

Hence, we propose the {\em max$_v$} heuristic for value ordering, which selects the value that maximizes the number of conflicts (constraint violations) between the currently selected variable and the variables that have been already assigned.
We consider instantiated variables only because if {\em max$_B$} is used then the query generated may not include all the variables of the problem. 

To better understand the practical effect of the {\em max$_v$} heuristic, let us consider its behaviour when generating a query in the problem from our running example.

\begin{example}

For the purposes of this example, the variable ordering heuristic is not important. Assume that variables are ordered lexicographically. In the beginning of the query generation process, the use of {\em max$_v$} will not have any effect at the first variable's instantiation. Assume that the value selected for the first variable is $x_1 = 1$. From now on, when choosing a value for the following variables, {\em max$_v$} will keep on choosing value 1, as in this example it is the only value that violates some constraints from $B$. Thus, the generated example will be $e = \{1,1,1,1,1,1,1,1\}$, which in this case violates all the constraints from the bias.

\end{example}

An important factor to consider about the effect of value ordering regards the generation of the first query. 
In this case all the variables are involved in the same number of constraints in $B$ (because nothing has been removed from $B$ yet) and $C_L$ is empty. Thus, the variable ordering is not important. On the other hand, value ordering seems to be very important.
Experiments have shown that using {\em max$_v$} for the generation of the first query often leads to a query which violates {\em all} of the constraints from $C_T$. Also, when the acquisition process is near convergence and only a few constraints are left in $B$, we want queries that remove a maximum number of constraints after a positive answer from the user. So, again the {\em max$_v$} heuristic is the best choice.

Given the importance of selecting values that maximize constraint violations, the lexico value ordering heuristic, which simply selects values in their lexicographic order and is very commonly used by CP solvers, 
is not a good idea because the values that maximize conflicts may appear in the middle or near the end of a domain, meaning that values with low conflicts may be preferred instead. On the other hand, random value ordering is a better idea but still not as good as focusing on the values that maximize conflicts, as {\em max$_v$} does. As a downside, {\em max$_v$} is more expensive to compute as the conflicts caused by each value must be calculated before the selection is made.

\section{Experimental Evaluation}
\label{sec:experiments}

To evaluate our proposed methods, we ran experiments on a system carrying an Intel(R) Xeon(R) E5-2667 CPU, 2.9 GHz,
with 8 Gb of RAM. We compared the proposed methods to both QuAcq and MultiAcq, which were implemented as efficiently as possible using the strategies described in \cite{bessiere2013constraint,arcangioli2016multiple}.

To be precise:

\begin{itemize}

\item
The ``flawed'' {\em FindC} function of \cite{bessiere2013constraint} described in Section~\ref{sec:quacq} is used in all our methods. This does not affect our results as a situation in which it may fail (analyzed in Section~\ref{sec:quacq}) %, where two constraints with scopes $S$ and $S^{\prime}$ 
does not appear in any of the studied benchmarks. However, we have also implemented the corrected version of \cite{bessiere2016new} to deal with such cases when they arise.

\item
In all our methods, and also in QuAcq and MultiAcq, we set the $cut_{max}$ cutoffs of the query generation step (described in Section~\ref{sec:query}) to 5 seconds. This means that if no query is found within 5 seconds, Function {\em QGen} returns. Also, we set the $cut_{min}$ cutoff to 1 second, returning the best example found within this time limit, if any is found.

\item
To maximize the performance of MultiAcq we used the heuristic proposed in \cite{arcangioli2016multiple}: 
A cutoff of 5 seconds is used in function {\em FindAllScopes}. After triggering the cutoff for the first time, {\em FindAllScopes} is called again on the same complete example with a reverse order of the variables. If the cutoff is triggered for a second time, we generate a new example and shuffle the variables' order. To ensure termination, {\em FindAllScopes} is forced to return at least one scope before cutting off. 

\item
To evaluate our proposed variable and value ordering heuristics, we implemented them within QuAcq and MQuAcq and ran experiments using benchmark instances. We compared them with the use of {\em dom/wdeg} as variable ordering heuristic and {\em random} for value ordering, which are the standard options for existing acquisition algorithms.

\item In order to compare all the algorithms on the same scenario, all the experiments concern the extreme case where no background knowledge is used and thus $C_L$ is initially empty. This extreme scenario results in an overall number of queries that may seem too large for human users to answer without making errors. However, in real applications background knowledge can be used either by giving a frame of basic constraints to the system or by using some other method to extract some constraints from known solutions and non-solutions of the problem, e.g. ModelSeeker~\cite{beldiceanu2012model}.
% reviewer !!

\item
Each method was run 10 times and the means are presented.

\end{itemize}

We used the following benchmarks in our study:

\textbf{Sudoku}. The Sudoku puzzle is a 9 $\times$ 9 grid. It must be completed in such a way that all the rows, all the columns and the 9 non-overlapping 3 $\times$ 3 squares contain the numbers 1 to 9. The {\em vocabulary} for this problem has 81 variables and domains of size 9. The target network has 810 binary $\neq$ constraints on rows, columns and squares. The bias was initialized with 12.960 binary constraints from the language $ \Gamma = \{=, \neq, >, < \}$.

\textbf{Greater than Sudoku} (GTSudoku). This is a variant of Sudoku where instead of having only cliques of $\neq$ constraints, some neighboring variables are related via $>, <$ constraints. The number of variables and the maximum domain size are the same as in Sudoku, but there are no prefilled squares (i.e. assigned variables). We used the instance shown in Figure~\ref{fig:gtsudoku} and the same language and bias as in Sudoku. What is interesting with GTSudoku is that the introduction of inequality constraints breaks up the regular structure of Sudoku. 

\begin{figure}[h]

\centerline{\includegraphics[width=2.2in]{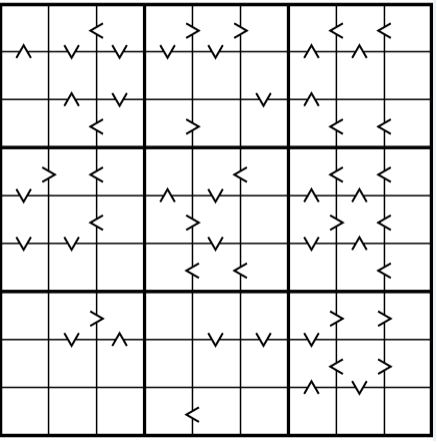}}
\caption{Greater than Sudoku instance used in the experiments}

\label{fig:gtsudoku}

%\vspace{-4mm}
\end{figure}

\textbf{Latin Square}. The Latin square problem consists of a n $\times$ n table in which each element occurs once in every row and column. In our experiments we set n to 10, meaning that we have 100 variables with domains of size 10. The target network has 900 binary $\neq$ constraints on rows and columns. The system was initialized with a bias of 19.800 binary constraints created from the language $\Gamma = \{=, \neq, >, < \}$.

\textbf{Zebra}. The Zebra problem has a single solution. The problem 
consists of 25 variables of domain size of 5. The target network contains 50 $\neq$ constraints and 12 additional constraints given in the description of the problem. The bias was initialized with 1200 binary constraints from the language $\Gamma = \{=, \neq, >, <, x_i - x_j = 1, |x_i - x_j| = 1\}$.

\textbf{Murder}. Someone was murdered and there are 5 suspects, each one having an item, an activity and a motive for the crime. The problem is to find the murderer. 
This problem consists of 20 variables (the 5 suspects and their items, activities and motives) 
with domains of size 5. The target network contains 4 cliques of $\neq$ constraints and 12 additional binary constraints, given as clues in the description of the problem.  The bias was initialized with 760 constraints based on the language $\Gamma = \{=, \neq, >, < \}$.

\textbf{Purdey’s general store}~\cite{purdey}. Four families stopped by Purdey’s general store, each one aiming to buy a different item and paying in a different way. Under a set of additional constraints given in the description of the problem, the goal is to match each family with the item they bought and how they paid for it. It has a single solution. It is modelled with 12 variables (for families, items and paying methods), with domains of size 4. The target network consists of 27 constraints. The bias was initialized with 264 constraints based on the language $\Gamma = \{=, \neq, >, < \}$.

\textbf{Allergy}. A problem crafted by the XCSP team. There are 4 people having allergies, and 8 products are given in the description of the problem. Based on the constraints given, the goal is to find who has an allergy on which product. It consists of 12 variables with domains of size 4 and 26 constraints. The bias was initialized with 264 constraints based on the language $\Gamma = \{=, \neq, >, < \}$.

\textbf{Golomb rulers}. The problem is to find a ruler where the distance between any two marks is different from that between any other two marks. We built a simplified version of a Golomb ruler with 12 marks, with the target network consisting only of quaternary constraints\footnote{The ternary constraints derived when $i = k$ or $j = l$ in $|x_i - x_j| \neq |x_k - x_l|$ were excluded from the target network}. In total $C_T$ consists of 495 constraints.
The bias was created with the language $\Gamma = \{=, \neq, >, <, |x_i - x_j| = |x_k - x_l|, |x_i - x_j| \neq |x_k - x_l| \}$, including binary and quaternary constraints. In total $B$ contained 1254 constraints.

\textbf{Exam Timetabling Problem} (Exam TT). We used a simplified version of the exam timetabling problem of the Department of Electrical and Computer Engineering of the University of Western Macedonia, Greece. We considered 24 courses and 2 weeks of exams, meaning that there are 10 possible days for each course to be assigned. We assumed that there are 3 timeslots available in each day. This resulted in a model with 24 variables and domains of size 30. There are $\neq$ constraints between any two courses, assuming that only one course is examined during each time slot, i.e. there is only one available room for examination. Also, there exist constraints prohibiting courses of the same semester being examined on the same day. As we assumed that there are 3 timeslots in each day, the constraint preventing 2 courses of the same semester being in the same day was modeled as $|\lfloor x_i/3 \rfloor - \lfloor x_j/3 \rfloor| > 0$. Hence, the language use was $\Gamma = \{=, \neq, >, <, |\lfloor x_i/3 \rfloor - \lfloor x_j/3 \rfloor| > y\}$, with 5 different values for $y$. This resulted in a bias of 3864 constraints.

\textbf{Radio Link Frequency Assignment Problem}.
The RLFAP is the problem of providing communication channels from limited spectral resources~\cite{cabon1999radio}. We use a simplified version of the problem, which consists of 50 variables with domains of size 40. The target network contains 125 binary distance constraints. We built the bias using a language of 2 basic distance constraints ($\{|x_i - x_j| > y, |x_i - x_j| = y\}$) with 5 different possible values for $y$. This led to a language of 10 different distance constraints.
%which are: $\{ |x_i - x_j| > 238, |x_i - x_j| = 238, |x_i - x_j| > 84, |x_i - x_j| = 84, |x_i - x_j| > 56, |x_i - x_j| = 56, |x_i - x_j| > 12, |x_i - x_j| = 12, |x_i - x_j| > 14, |x_i - x_j| = 14\}$. 
In total, $B$ contained 12250 constraints.

In our experiments we measure the size of the learned network $C_L$, the total number of queries $\#q$, the average size $\bar{q}$ of all queries, the number of complete queries $\#q_c$, the average waiting time $\bar{T}$ (in secs) for the user, the maximum waiting time $T_{max}$ (in secs) for the user, the time $T_{queries}$ taken from the start of the process until the last query and the total time needed (to converge) $T_{total}$. The difference between $T_{total}$ and $T_{queries}$ is the time needed to prove convergence or to reach premature convergence (because of the cutoffs). The size of $C_L$ in some cases is smaller than the size of the target network $C_T$ due to the presence of redundant constraints that some methods learn and others do not. In addition, we counted the times each method triggers any of the two cutoffs.  

We first demonstrate the performance of MQuAcq and {\em FindScope-2} on these benchmarks, compared to the existing methods (Section~\ref{sec:mquacq-eval}). Then in Section~\ref{sec:heur-eval} we evaluate the proposed heuristics. In Section~\ref{sec:bias} we evaluate the effect of the size of the bias on the performance of MQuAcq. Finally, in Section~\ref{sec:scaling} we investigate our algorithm's scalability.

\subsection{MQuAcq and {\em FindScope-2} evaluation}
\label{sec:mquacq-eval}

For the experiments presented here all the methods compared, including QuAcq and MultiAcq, use the {\em max} heuristic (described in Section~\ref{sec:query}) for the query generation step, with {\em dom/wdeg} for variable ordering and {\em random} value ordering. In Table \ref{res:comp} we evaluate our proposed algorithm MQuAcq and the new function {\em FindScope-2} and we compare them against the existing methods. Hence, we give results from QuAcq, MultiAcq, MQuAcq, QuAcq with {\em FindScope-2} instead of {\em FindScope} and MQuAcq with {\em FindScope-2}. 

We do not present results from the RLFAP benchmark for MultiAcq, as it did not manage to converge after running for 24 hours. This was due not only to its linear complexity in terms of the number of queries, but also because the bias contains many constraints in each possible scope and thus the condition at line 2 of {\em FindAllScopes} does not help to avoid redundant searches. 
The other algorithms also suffer from high cpu times, but only when they are trying to generate queries near convergence. The large number of constraints in each scope, and particularly the existence of constraints that are implied by other constraints (e.g. $\{|x_i - x_j| > y_1$ implies $\{|x_i - x_j| > y_2$ if $y_1 > y_2$) can cause the appearance of a large number of constraints in the bias that cannot be violated, resulting in high convergence times. A similar problem is present in GTSudoku, again because of implied constraints that appear in the bias.
As we explain in Section~\ref{sec:heur-eval}, our proposed heuristics from Section~\ref{sec:heur} can alleviate this problem.

\begin{table}[htbp]
\begin{footnotesize}
\centering
\caption{Results of MQuAcq and {\em FindScope-2}}
{
\resizebox{\textwidth}{!}{% 
\begin{tabular}{ |l|l|r|r|r|r|r|r|r|r|  }
\hline
Benchmark & Algorithm & $|C_L|$ & $\#q$ & $\bar{q}$ & $\#q_c$ & $\bar{T}$ & $T_{max}$ & $T_{queries}$ & $T_{total}$ \\
\hline
 & QuAcq & 648 & 11529 & 35 & 659 & 0.061 & 1.14 & 708.76 & 1529.78 \\ 
 & MultiAcq & 796 & 14508 & 10 & 39 & 0.071 & 36.76 & 1034.52 & 1119.69 \\ 
 & MQuAcq & 803 & 14935 & 26 & 37 & 0.010 & 20.49 & 154.47 & 194.57 \\ 
 & QuAcq + {\em FindScope-2} & 648 & 5960 & 43 & 659 & 0.119 & 1.15 & 710.57 & 1531.58 \\ 
\multirow{-5}{*}{Sudoku} & MQuAcq + {\em FindScope-2} & 801 & 6865 & 32 & 40 & 0.026 & 15.33 & 175.14 & 225.15 \\
\hline
 & QuAcq & 634 & 11325 & 35 & 649 & 0.82 & 1140.45 & 9235.54 & 11217.63 \\
 & MultiAcq & 747 & 16324 & 15 & 70 & 0.78 & 1383.72 & 12522.16 & 13917.67 \\
 & MQuAcq & 732 & 13912 & 26 & 45 & 0.40 & 905.68 & 5564.73 & 6959.69 \\
 & QuAcq + {\em FindScope-2} & 636 & 5950 & 42 & 653 & 1.51 & 1582.52 & 8987.09 & 10920.54 \\
\multirow{-5}{*}{GTSudoku} & MQuAcq + {\em FindScope-2} & 742 & 6663 & 31 & 52 & 0.86 & 970.77 & 5720.09 & 7003.27 \\
\hline
 & QuAcq & 855 & 15489 & 46 & 870 & 0.066 & 10.17 & 1020.83 & 1251.22 \\ 
 & MultiAcq & 899 & 21079 & 11 & 52 & 0.163 & 20.27 & 3429.16 & 3439.18 \\ 
 & MQuAcq & 899 & 17842 & 37 & 49 & 0.010 & 5.23 & 171.75 & 181.77 \\ 
 & QuAcq + {\em FindScope-2} & 855 & 8115 & 55 & 873 & 0.127 & 10.15 & 1028.85 & 1259.23 \\ 
\multirow{-5}{*}{Latin} & MQuAcq + {\em FindScope-2} & 899 & 8228 & 46 & 50 & 0.023 & 10.30 & 189.34 & 199.38 \\ 
\hline
 & QuAcq & 60 & 775 & 11 & 60 & 0.069 & 1.03 & 53.68 & 53.69 \\ 
 & MultiAcq & 57 & 975 & 6 & 8 & 0.264 & 127.62 & 257.77 & 257.78 \\ 
 & MQuAcq & 59 & 783 & 8 & 8 & 0.006 & 1.03 & 4.37 & 4.37 \\ 
 & QuAcq + {\em FindScope-2} & 60 & 496 & 12 & 60 & 0.109 & 1.03 & 54.08 & 54.03 \\ 
\multirow{-5}{*}{Zebra} & MQuAcq + {\em FindScope-2} & 59 & 469 & 10 & 7 & 0.009 & 1.03 & 4.08 & 4.09 \\ 
\hline
 & QuAcq & 52 & 599 & 9 & 52 & 0.085 & 1.01 & 51.16 & 51.48 \\ 
 & MultiAcq & 52 & 704 & 5 & 8 & 0.025 & 3.65 & 17.34 & 17.65 \\ 
 & MQuAcq & 52 & 619 & 6 & 8 & 0.012 & 1.01 & 7.28 & 7.75 \\ 
 & QuAcq + {\em FindScope-2} & 52 & 356 & 10 & 52 & 0.144 & 1.01 & 51.31 & 51.63 \\ 
\multirow{-5}{*}{Murder} & MQuAcq + {\em FindScope-2} & 52 & 374 & 8 & 7 & 0.028 & 1.01 & 6.56 & 6.89 \\ 
\hline
 & QuAcq & 26 & 282 & 5 & 26 & 0.07 & 1.00 & 19.15 & 19.16 \\
 & MultiAcq & 27 & 234 & 4 & 5 & 0.01 & 1.01 & 2.21 & 2.22 \\
 & MQuAcq & 26 & 269 & 4 & 5 & 0.01 & 1.00 & 2.06 & 2.08 \\
 & QuAcq + {\em FindScope-2} & 26 & 170 & 6 & 26 & 0.11 & 1.00 & 19.23 & 19.25 \\
\multirow{-5}{*}{Purdey} & MQuAcq + {\em FindScope-2} & 26 & 149 & 5 & 5 & 0.01 & 1.00 & 2.13 & 2.14 \\
\hline
 & QuAcq & 26 & 283 & 5 & 26 & 0.06 & 1.00 & 17.83 & 17.85 \\
 & MultiAcq & 26 & 226 & 4 & 4.8 & 0.01 & 1.00 & 2.16 & 2.18 \\
 & MQuAcq & 26 & 267 & 4 & 5 & 0.01 & 1.00 & 1.94 & 2.06 \\
 & QuAcq + {\em FindScope-2} & 26 & 169 & 6 & 26 & 0.11 & 1.00 & 17.87 & 17.88 \\
\multirow{-5}{*}{Allergy} & MQuAcq + {\em FindScope-2} & 26 & 151 & 4 & 5 & 0.01 & 1.00 & 2.11 & 2.11 \\
\hline
 & QuAcq & 495 & 7585 & 6 & 496 & 0.069 & 1.17 & 525.98 & 526.08 \\ 
 & MultiAcq & 495 & 2368 & 6 & 64 & 0.029 & 1.18 & 68.82 & 68.92 \\ 
 & MQuAcq & 495 & 6350 & 5 & 72 & 0.012 & 1.18 & 78.76 & 78.86 \\ 
 & QuAcq + {\em FindScope-2} & 495 & 1552 & 9 & 496 & 0.338 & 1.18 & 524.97 & 525.07 \\ 
\multirow{-5}{*}{Golomb-12} & MQuAcq + {\em FindScope-2} & 495 & 961 & 8 & 69 & 0.082 & 3.87 & 79.10 & 79.20 \\ 
\hline
 & QuAcq & 276 & 3856 & 11 & 277 & 0.07 & 1.14 & 281.11 & 576.17 \\
 & MultiAcq & 276 & 3086 & 7 & 35 & 0.73 & 341.76 & 2264.81 & 2553.40 \\
 & MQuAcq & 276 & 3747 & 9 & 36 & 0.08 & 126.13 & 311.70 & 592.94 \\
 & QuAcq + {\em FindScope-2} & 276 & 1451 & 14 & 277 & 0.19 & 1.04 & 281.66 & 576.86 \\
\multirow{-5}{*}{Exam TT} & MQuAcq + {\em FindScope-2} & 276 & 1222 & 11 & 36 & 0.24 & 100.57 & 296.69 & 584.93 \\
\hline
 & QuAcq & 102 & 1705 & 26 & 166 & 3.657 & 890.60 & 6,235.19 & 7,513.17 \\ 
 & MultiAcq & - & - & - & - & - & - & - & - \\
 & MQuAcq & 122 & 2492 & 24 & 107 & 2.067 & 933.00 & 5,150.23 & 6,308.14 \\ 
 & QuAcq + {\em FindScope-2} & 102 & 1096 & 29 & 167 & 6.163 & 896.26 & 6,755.14 & 7,629.21 \\ 
\multirow{-5}{*}{RLFAP} & MQuAcq + {\em FindScope-2} & 122 & 1442 & 25 & 107 & 3.380 & 932.00 & 4,873.96 & 6,204.14 \\ 
\hline
\end{tabular}}
}
\label{res:comp}
\end{footnotesize}
\end{table}

Looking at the performance of MQuAcq, and comparing it to QuAcq, we observe that the use of {\em FindAllCons} to learn all the violated constraints from a negative example reduces significantly the average waiting time per query for the user and the total time of the execution in all benchmarks except RLFAP and GTSudoku (due to the nature of the problems, as mentioned above), where the time needed is still reduced but only by a little. Also, in Exam TT, QuAcq and MQuAcq have similar performance in terms of average time and total time. This is because although MQuAcq learns faster most of the constraints, it needs a lot more time for the generation of the last queries, due to the structure of the problem, as several constraints from the target network not learned yet (i.e. they are still in the bias) are difficult to be violated when the learned network is satisfied. This is confirmed by considering the maximum time that the user had to wait for a query to be generated and posted.

Regarding the rest of the problems, QuAcq is 8 times slower than MQuAcq in Sudoku and Allergy, 7 times in Latin square, 12 times in Zebra, 9 times in Purdey, and 6.5 times in Murder and Golomb rulers. This is due to the fewer generations of new examples in line 5 of MQuAcq, because the algorithm is able to learn a maximum number of violated constraints from each negative example. This is validated by looking at column $\#q_c$, which shows that far fewer complete queries are generated. As a downside, MQuAcq requires more queries in total than QuAcq to converge in most cases, and the difference is more evident on Sudoku, GTSudoku and Latin. However, as we can see on these problems MQuAcq learns a greater number of constraints of the target network than QuAcq, and the average size of the queries posted by MQuAcq is smaller. Also, we can observe that in Golomb rulers, which contains quaternary constraints, the queries posted to the user by MQuAcq were fewer.

Comparing MQuAcq to MultiAcq, it is clear that the redundant searches made by MultiAcq greatly affect the average time per query and total time needed for the system to converge. MQuAcq needs far less time to ask a query to the user, and requires posting fewer queries to converge, on most problems. On the other hand, on Golomb Rulers, MultiAcq displays better performance both in number of queries and in total time. This can be explained as {\em FindScope}, that is used by MQuAcq, posts a lot of redundant queries to the user and also the problem consists of only 12 variables, so the branching of MultiAcq is not very time-consuming.

Focusing on {\em FindScope-2} when used inside QuAcq, we can see that the number of queries posted to the user 
were significantly lower compared to standard QuAcq with {\em FindScope}, because the former avoids asking several redundant queries. In terms of the number of queries, {\em FindScope-2} gives a gain of $35\%$ on the RLFAP problem, $36\%$ on the Zebra problem, $40\%$ on Murder, Purdey and Allergy, $48\%$ on Sudoku, GTSudoku and Latin square, $62\%$ on Exam TT and up to $80\%$ on Golomb Rulers. 
Interestingly, it seems that the more variables are present in a problem, the bigger is the gain in avoided queries. As a downside, {\em FindScope-2} increases the average waiting time between the queries, but not the total time required to converge. The average time is increased simply because some queries are not posted because they would be redundant. In addition, as we can observe from the results from Golomb Rulers, the reduction in the number of queries in problems with higher arity constraints is even bigger.

The results obtained from MQuAcq with {\em FindScope-2} show that the use of {\em FindScope-2} has the same effect on MQuAcq as on QuAcq, cutting down the number of queries significantly, from $40\%$ (in Murder) up to $85\%$ (in Golomb). Comparing to MultiAcq, now the number of queries posted to the user is considerable lower, from $33\%$ (in Allergy) up to $61\%$ (in Latin squares).

Regarding the cutoffs, %not shown in Table~\ref{res:comp}, 
neither of the two cutoffs  was triggered by any method on Zebra, Murder, Purdey, Allergy and Golomb. On Sudoku, QuAcq (resp. MultiAcq) triggered the first cutoff 2 (resp. 3) times on average and the second 170 (resp. 26) times. On Latin squares these numbers were 9 and 46 for QuAcq and 16 and 19 for MultiAcq. MQuAcq triggered the first cutoff 5 times on average on Sudoku and the second also 5 times. On Latin square these numbers were 11 and 5 respectively. Given that the triggering of the cutoffs is associated with the problem of premature convergence, as we explain at the end of Section~\ref{sec:quacq}, the lower numbers for MQuAcq indicate that it is less likely to terminate with premature convergence. On the other hand, on Exam TT, QuAcq triggered the first cutoff only once, while MQuAcq and MultiAcq triggered it 5 times. The second cutoff was triggered 57 times from QuAcq, 98 times from MultiAcq and 104 from MQuAcq.
On RLFAP and GTSudoku, the cutoffs were triggered too many times due to the reasons explained above. On average, QuAcq triggered the first cutoff 21 times and the second 1444 times on RLFAP (resp. 98 and 1757 on GTSudoku). The corresponding numbers for MQuAcq were 20 and 1195 on RLFAP (resp. 17 and 1356 on GTSudoku). MultiAcq triggered the first cutoff 27 times and the second 2598 times on GTSudoku.

In the remainder of the experimental evaluation we will compare our methods only against QuAcq, as it is clear that learning a maximum number of constraints from each generated query using MQuAcq is more efficient than with MultiAcq. Also, we will present the results of both QuAcq and MQuAcq with the use {\em FindScope-2} instead of {\em FindScope}.

\subsection{Evaluation of heuristics}
\label{sec:heur-eval}

In this section we first evaluate the heuristic {\em max$_B$} for the query generation step in tandem with {\em bdeg} for variable ordering. Next, we focus on the performance of the proposed value ordering heuristic. %Finally, we analyze the behaviour of our proposed approaches. 

\subsubsection{{\em max$_B$} for the query generation step}
\label{sec:maxb-eval}

Recall that the objective of the {\em max$_B$} heuristic is to find a (partial) assignment that maximizes the number of violated constraints from $B$ instead of focusing on finding a complete solution of $C_L$ as {\em max} does. Hence, if {\em max$_B$} is used to generate queries, the variable ordering heuristic should comply with this objective. Our intuition behind the proposed variable ordering heuristic {\em bdeg} is that standard heuristics like {\em dom/wdeg} are not suitable for use in conjunction with {\em max$_B$}.
On the other hand, such heuristics are better suited to be used in tandem with {\em max} whose objective is to find a complete solution of $C_L$ quickly. %That is because {\em max} demands a complete solution of $C_L$ to be found.

We use Sudoku as a sample problem to confirm the above assumptions. In Figures~\ref{fig:max-maxb-quacq} and~\ref{fig:max-maxb-mquacq} we report the cpu time performance of the QuAcq and MQuAcq algorithms using {\em max} and {\em max$_B$} with {\em bdeg} and {\em dom/wdeg}. In all cases we use random value ordering. Specifically, the figures depict the cpu time required by each combination of heuristics to learn an increasing portion of the target network (the x-axis gives the number of constraints learned). 

The results confirm our intuition. When {\em max} is used for query generation within QuAcq (Figure~\ref{fig:max-maxb-quacq}a), the algorithm is by far faster with {\em dom/wdeg} compared to {\em bdeg}. In contrast, when  {\em max$_B$} is used (Figure~\ref{fig:max-maxb-quacq}b) then the choice of variable ordering heuristic does not affect the run time initially, but as convergence is approached, {\em bdeg} speeds up the process considerably because {\em dom/wdeg} takes too long to generate the last few queries compared to {\em bdeg} which finds partial assignments really fast. Considering MQuAcq, when {\em max} is used (Figure~\ref{fig:max-maxb-mquacq}a), {\em bdeg} is slightly faster initially, but is outperformed by {\em dom/wdeg} near convergence. On the other hand, when {\em max$_B$} is used  (Figure~\ref{fig:max-maxb-mquacq}b), {\em bdeg} and {\em dom/wdeg} are very close initially, but the former is again faster near convergence.

%When using a variable ordering heuristic that complies with the objective of {\em max$_B$}, the performance of the acquisition system is far better. We can see that for both the algorithm, when {\em dom/wdeg} with {\em max$_B$} is used it is difficult to generate a query near convergence. This happens because {\em dom/wdeg} takes too long to generate the last few queries compared to {\em bdeg} which finds an assignment really fast.

%In contrast, {\em bdeg} does not have the same effect when used with {\em max}, as it is not designed to find complete solutions. So it cannot build a complete query quickly. Thus, the time performace of both QuAcq and MQuAcq is significantly worsened when {\em bdeg} is used instead of {\em dom/wdeg} in tandem with {\em max}.

%To summarize, {\em bdeg} should be used only in conjunction with {\em max$_B$}, as they both serve the same objective, without necesary building a complete solution of $C_L$.

\begin{figure}[h]

\subfloat[]{\includegraphics[width=2.2in]{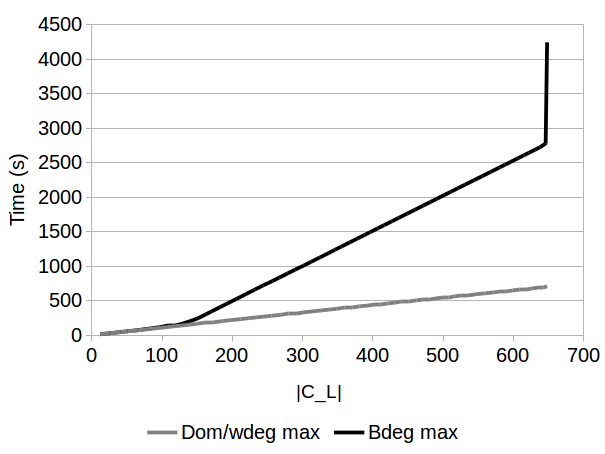}}
\qquad
\subfloat[]{\includegraphics[width=2.2in]{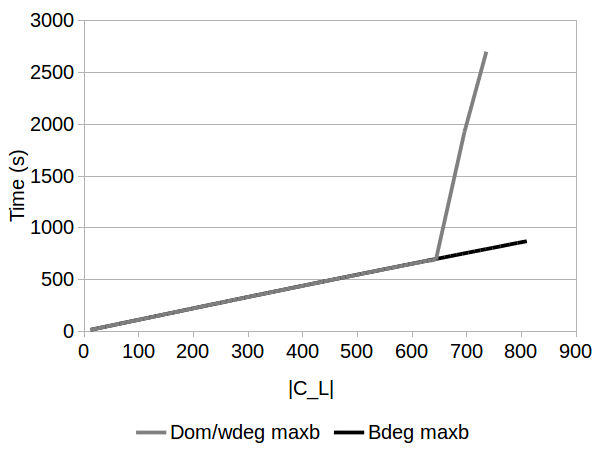}}

\caption{QuAcq using {\em max} and {\em max$_B$} with {\em bdeg} and {\em dom/wdeg} in the Sudoku problem}

\label{fig:max-maxb-quacq}

%\vspace{-4mm}
\end{figure}

\begin{figure}[h]

\subfloat[]{\includegraphics[width=2.2in]{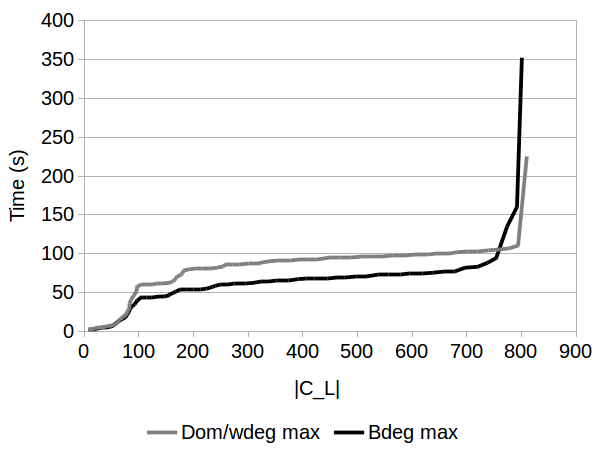}}
\qquad
\subfloat[]{\includegraphics[width=2.2in]{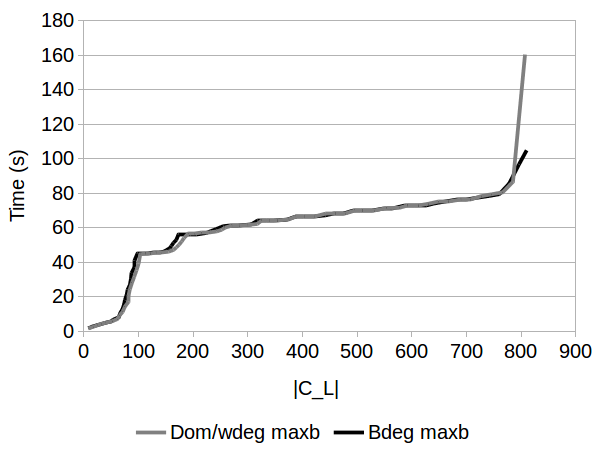}}

\caption{MQuAcq using {\em max} and {\em max$_B$} with {\em bdeg} and {\em dom/wdeg} in the Sudoku problem}

\label{fig:max-maxb-mquacq}

\end{figure}

For a closer look at the difference between {\em bdeg} and {\em dom/wdeg} near convergence, Figure~\ref{fig:heur} displays the number of constraints from $B$ that are violated (y-axis) during each of the last 20 generated queries (x-axis). This data was obtained by applying MQuAcq with {\em max$_B$} on the Sudoku benchmark. It is clear that {\em bdeg}, as a heuristic that orders the variables with information obtained from $B$, violates considerably more constraints than {\em dom/wdeg}. Hence, the queries generated using {\em bdeg} are more ``informative'', which explains its good performance near convergence when used in tandem with {\em max$_B$}. 

\begin{figure}[H]
\centerline{\includegraphics[width=3.5in]{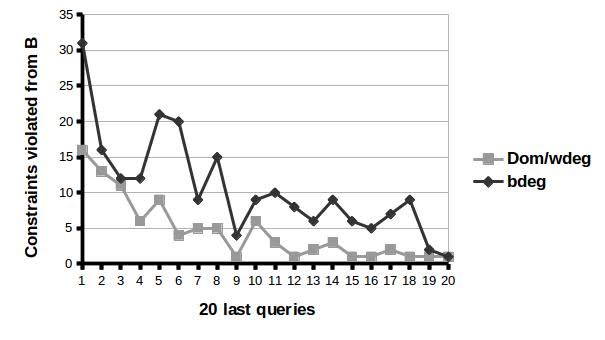}}
\caption{Number of constraints from the Bias that are violated when MQuAcq generates the last 20 queries in Sudoku.}
\label{fig:heur}
\end{figure}

To summarize, we have established that the use of {\em max$_B$} to generate queries requires the use of {\em bdeg} for variable ordering in order to maximize the performance of the acquisition algorithm, while if {\em max} is used to generate queries then {\em dom/wdeg} is a better option. We now compare these two strategies on all the considered benchmarks.

Table~\ref{res:maxb} displays the performance of %the proposed heuristic 
{\em max$_B$} (with {\em bdeg}) when used inside QuAcq and MQuAcq compared to {\em max} (with {\em dom/wdeg}). We can see that on small problems (Zebra, Murder, Purdey, Allergy and Golomb) {\em max$_B$} has similar performance to {\em max}. This is because such problems have only a few variables, meaning that in most cases both {\em max$_B$} and {\em max} can find complete solutions to $C_L$ that violate many constraints in $B$ within the time limit.

\begin{table}[htbp]
\begin{footnotesize}
\centering
\caption{Comparing {\em max$_B$} to {\em max}.}
{
\resizebox{\textwidth}{!}{% 
\begin{tabular}{ |l|l|r|r|r|r|r|r|r|r|  }
\hline
Benchmark & Algorithm & $|C_L|$ & $\#q$ & $\bar{q}$ & $\#q_c$ & $\bar{T}$ & $T_{max}$ & $T_{queries}$ & $T_{total}$ \\
\hline
 & QuAcq {\em max} & 648 & 5960 & 43 & 659 & 0.119 & 1.15 & 710.57 & 1531.58 \\
 & MQuAcq {\em max} & 801 & 6865 & 32 & 40 & 0.026 & 15.33 & 175.14 & 225.15 \\ 
 & QuAcq {\em max$_B$} & 810 & 6657 & 38 & 510 & 0.131 & 1.15 & 869.67 & 869.68 \\ 
\multirow{-4}{*}{Sudoku} & MQuAcq {\em max$_B$} & 810 & 6858 & 32 & 14 & 0.015 & 1.12 & 104.89 & 104.90 \\
\hline
 & QuAcq {\em max} & 636 & 5950 & 42 & 653 & 1.51 & 1582.52 & 8987.09 & 10920.54 \\
 & MQuAcq {\em max} & 742 & 6663 & 31 & 52 & 0.86 & 970.77 & 5720.09 & 7003.27 \\
 & QuAcq {\em max$_B$}  & 786 & 6493 & 24 & 278 & 0.19 & 5.37 & 1216.01 & 1219.98 \\
\multirow{-4}{*}{GTSudoku} & MQuAcq {\em max$_B$} & 787 & 6813 & 29 & 12 & 0.11 & 5.28 & 735.99 & 738.67 \\
\hline
 & QuAcq {\em max} & 855 & 8115 & 55 & 873 & 0.127 & 10.15 & 1028.85 & 1259.23 \\ 
 & MQuAcq {\em max} & 899 & 8228 & 46 & 50 & 0.023 & 10.30 & 189.34 & 199.38 \\ 
 & QuAcq {\em max$_B$} & 900 & 7946 & 54 & 793 & 0.126 & 1.19 & 999.17 & 999.18 \\ 
\multirow{-4}{*}{Latin} & MQuAcq {\em max$_B$} & 900 & 8411 & 46 & 17 & 0.017 & 1.16 & 142.85 & 142.86 \\ 
\hline
 & QuAcq {\em max} & 60 & 496 & 12 & 60 & 0.109 & 1.03 & 54.08 & 54.03 \\ 
 & MQuAcq {\em max} & 59 & 469 & 10 & 7 & 0.009 & 1.03 & 4.08 & 4.09 \\ 
 & QuAcq {\em max$_B$} & 60 & 481 & 12 & 56 & 0.110 & 1.03 & 52.84 & 52.84 \\ 
\multirow{-4}{*}{Zebra} & MQuAcq {\em max$_B$} & 60 & 480 & 10 & 6 & 0.009 & 1.03 & 4.53 & 4.54 \\ 
\hline
 & QuAcq {\em max} & 52 & 356 & 10 & 52 & 0.144 & 1.01 & 51.31 & 51.63 \\
 & MQuAcq {\em max} & 52 & 374 & 8 & 7 & 0.028 & 1.01 & 6.56 & 6.89 \\
 & QuAcq {\em max$_B$} & 52 & 370 & 10 & 47 & 0.136 & 1.01 & 50.23 & 50.44 \\ 
\multirow{-4}{*}{Murder} & MQuAcq {\em max$_B$} & 52 & 357 & 8 & 4 & 0.019 & 1.01 & 6.82 & 7.07 \\  
\hline
 & QuAcq {\em max} & 26 & 170 & 6 & 26 & 0.11 & 1.00 & 19.23 & 19.25 \\
 & MQuAcq {\em max} & 26 & 149 & 5 & 5 & 0.01 & 1.00 & 2.13 & 2.14 \\
 & QuAcq {\em max$_B$} & 26 & 171 & 6 & 23 & 0.12 & 1.00 & 20.13 & 20.13 \\
\multirow{-4}{*}{Purdey} & MQuAcq {\em max$_B$} & 26 & 152 & 5 & 3 & 0.02 & 1.00 & 2.29 & 2.30 \\
\hline
 & QuAcq {\em max} & 26 & 169 & 6 & 26 & 0.11 & 1.00 & 17.87 & 17.88 \\
 & MQuAcq {\em max} & 26 & 151 & 4 & 5 & 0.01 & 1.00 & 2.11 & 2.11 \\
 & QuAcq {\em max$_B$} & 26 & 169 & 6 & 23 & 0.12 & 1.00 & 19.46 & 19.47 \\
\multirow{-4}{*}{Allergy} & MQuAcq {\em max$_B$} & 26 & 150 & 4 & 3 & 0.01 & 1.00 & 2.13 & 2.13 \\
\hline
 & QuAcq {\em max} & 495 & 1552 & 9 & 496 & 0.338 & 1.18 & 524.97 & 525.07 \\ 
 & MQuAcq {\em max} & 495 & 961 & 8 & 69 & 0.082 & 3.87 & 79.10 & 79.20 \\ 
 & QuAcq {\em max$_B$} & 495 & 1789 & 9 & 438 & 0.294 & 1.19 & 526.77 & 526.89 \\ 
\multirow{-4}{*}{Golomb-12} & MQuAcq {\em max$_B$} & 495 & 970 & 8 & 49 & 0.086 & 1.18 & 83.00 & 83.12 \\ 
\hline
 & QuAcq {\em max} & 276 & 1451 & 14 & 277 & 0.19 & 1.04 & 281.66 & 576.86 \\
 & MQuAcq {\em max} & 276 & 1222 & 11 & 36 & 0.24 & 100.57 & 296.69 & 584.93 \\
 & QuAcq {\em max$_B$} & 276 & 1468 & 13 & 230 & 0.22 & 6.06 & 316.19 & 321.32 \\
\multirow{-4}{*}{Exam TT} & MQuAcq {\em max$_B$} & 276 & 1237 & 11 & 14 & 0.08 & 5.90 & 94.04 & 100.56 \\
\hline
 & QuAcq {\em max} & 102 & 1096 & 29 & 167 & 6.163 & 896.26 & 6,755.14 & 7,629.21 \\ 
 & MQuAcq {\em max} & 122 & 1442 & 25 & 107 & 3.380 & 932.00 & 4,873.96 & 6,204.14 \\ 
 & QuAcq {\em max$_B$} & 106 & 1094 & 26 & 77 & 0.242 & 6.63 & 264.62 & 268.32 \\ 
\multirow{-4}{*}{RLFAP} & MQuAcq {\em max$_B$} & 124 & 1445 & 24 & 25 & 0.115 & 6.55 & 165.76 & 173.90 \\ 
\hline
\end{tabular}}
}
\label{res:maxb}
\end{footnotesize}
\end{table}

On the other hand, on the bigger and harder problems (Sudoku, GTSudoku, Latin, Exam TT and RLFAP) the average and maximum time per query of both QuAcq and MQuAcq are all reduced when {\em max$_B$} is used, and so is the number of complete queries posted to the user. Also, the differences in the maximum time per query are quite large. 
Another observation is that for both QuAcq and MQuAcq, as column $|C_L|$ demonstrates, the use of {\em max$_B$} helps to not only learn the complete target network, but also redundant constraints. Inadvertently, this results in more queries being asked in some cases and greater $T_{queries}$ (e.g. QuAcq in Sudoku). On the other hand $T_{total}$ is significantly reduced. 
We can see that the use of {\em max$_B$} in QuAcq (resp. in MQuAcq) reduces the total time by $43\%$ ($53\%$) in Sudoku, $89\%$ ($89\%$) in GTSudoku, $20\%$ ($24\%$) in Latin, $44\%$ ($83\%$) in Exam TT and $96\%$ ($97\%$) in RLFAP.

These gains in average time per query and total cpu time can be explained because in Sudoku and Latin, any method that used {\em max$_B$} never triggered any cutoff, meaning that an irredundant query was always found in time. Accordingly, in GTSudoku, Exam TT and RLFAP, when {\em max} is used the cutoffs are triggered too many times, resulting in very high cpu times. On the other hand, when {\em max$_B$} is used, the second cutoff was never triggered in any of these problems while the first cutoff was triggered in average 40 times by QuAcq and 53 times by MQuAcq in GTSudoku only 7 times by both QuAcq and MQuAcq in EXAM TT and 15 times by QuAcq and 11 times by MQuAcq on RLFAP.

An issue that is not clearly visible from the data in the table is that of premature convergence. The difference between $T_{total}$ and $T_{queries}$ is in fact the time needed to reach (premature) convergence, because of the cutoffs. In general, the use of {\em max$_B$} alleviates the problem of premature convergence, as in all the benchmarks both the algorithms proved convergence because having learned the redundant constraints during the process, $B$ is empty in the end, and therefore the system does not have to prove that no solution of $C_L$ violates them.

\subsubsection{{\em max$_v$} for value ordering}
\label{sec:maxv-eval}

Now, let us focus on the use of the {\em max$_v$} value ordering heuristic. Table~\ref{res:maxv} illustrates the results obtained using random and {\em max$_v$} for value ordering alongside {\em bdeg} for variable ordering, in tandem with {\em max$_B$}.

\begin{table}[htbp]
\begin{footnotesize}
\centering
\caption{Comparing random value ordering to {\em max$_v$}.}
{
\resizebox{\textwidth}{!}{% 
\begin{tabular}{ |l|l|r|r|r|r|r|r|r|r|  }
\hline
Benchmark & Algorithm & $|C_L|$ & $\#q$ & $\bar{q}$ & $\#q_c$ & $\bar{T}$ & $T_{max}$ & $T_{queries}$ & $T_{total}$ \\
\hline
 & QuAcq {\em rand} & 810 & 6657 & 38 & 510 & 0.131 & 1.15 & 869.67 & 869.68 \\ 
 & MQuAcq {\em rand} & 810 & 6858 & 32 & 14 & 0.015 & 1.12 & 104.89 & 104.90 \\
 & QuAcq {\em max$_v$} & 810 & 7074 & 37 & 555 & 0.123 & 1.14 & 868.22 & 868.23 \\ 
\multirow{-4}{*}{Sudoku} & MQuAcq {\em max$_v$} & 810 & 5101 & 4 & 3 & 0.215 & 1.30 & 1,095.18 & 1,095.20 \\ 
\hline
 & QuAcq {\em rand} & 786 & 6493 & 24 & 278 & 0.19 & 5.37 & 1216.01 & 1219.98 \\
 & MQuAcq {\em rand} & 787 & 6813 & 29 & 12 & 0.11 & 5.28 & 735.99 & 738.67 \\
 & QuAcq  {\em max$_v$} & 776 & 6598 & 23 & 259 & 0.18 & 5.37 & 1217.91 & 1223.17 \\
\multirow{-4}{*}{GTSudoku} & MQuAcq  {\em max$_v$} & 810 & 5144 & 4 & 2 & 0.29 & 5.18 & 1481.40 & 1486.71 \\
\hline
 & QuAcq {\em rand} & 900 & 7946 & 54 & 793 & 0.126 & 1.19 & 999.17 & 999.18 \\ 
 & MQuAcq {\em rand} & 900 & 8411 & 46 & 17 & 0.017 & 1.16 & 142.85 & 142.86 \\ 
 & QuAcq {\em max$_v$} & 900 & 8309 & 51 & 817 & 0.120 & 1.20 & 996.92 & 996.93 \\ 
\multirow{-4}{*}{Latin} & MQuAcq {\em max$_v$} & 900 & 6968 & 4 & 3 & 0.356 & 1.84 & 2,478.57 & 2,478.58 \\ 
\hline
 & QuAcq {\em rand} & 60 & 481 & 12 & 56 & 0.110 & 1.03 & 52.84 & 52.84 \\ 
 & MQuAcq {\em rand} & 60 & 480 & 10 & 6 & 0.009 & 1.03 & 4.53 & 4.54 \\ 
 & QuAcq {\em max$_v$} & 60 & 494 & 12 & 59 & 0.112 & 1.03 & 55.21 & 55.21 \\ 
\multirow{-4}{*}{Zebra} & MQuAcq {\em max$_v$} & 61 & 491 & 6 & 3 & 0.008 & 1.03 & 3.69 & 3.69 \\ 
\hline 
 & QuAcq {\em rand} & 52 & 370 & 10 & 47 & 0.136 & 1.01 & 50.23 & 50.44 \\ 
 & MQuAcq {\em rand} & 52 & 357 & 8 & 4 & 0.019 & 1.01 & 6.82 & 7.07 \\  
 & QuAcq {\em max$_v$} & 52 & 367 & 10 & 49 & 0.138 & 1.01 & 50.76 & 51.00 \\ 
\multirow{-4}{*}{Murder} & MQuAcq {\em max$_v$} & 52 & 365 & 4 & 3 & 0.007 & 1.01 & 2.67 & 2.67 \\ 
\hline
 & QuAcq {\em rand} & 26 & 171 & 6 & 23 & 0.12 & 1.00 & 20.13 & 20.13 \\
 & MQuAcq {\em rand} & 26 & 152 & 5 & 3 & 0.02 & 1.00 & 2.29 & 2.30 \\
 & QuAcq  {\em max$_v$} & 26 & 169 & 6 & 23 & 0.12 & 1.00 & 20.39 & 20.40 \\
\multirow{-4}{*}{Purdey} & MQuAcq  {\em max$_v$} & 27 & 146 & 3 & 2 & 0.01 & 1.00 & 1.66 & 1.66 \\
\hline
 & QuAcq {\em rand} & 26 & 169 & 6 & 23 & 0.12 & 1.00 & 19.46 & 19.47 \\
 & MQuAcq {\em rand} & 26 & 150 & 4 & 3 & 0.01 & 1.00 & 1.85 & 1.89 \\
 & QuAcq  {\em max$_v$} & 26 & 176 & 6 & 24 & 0.11 & 1.00 & 19.87 & 19.88 \\
\multirow{-4}{*}{Allergy} & MQuAcq  {\em max$_v$} & 26 & 149 & 3 & 3 & 0.01 & 1.00 & 1.72 & 1.72 \\
\hline
 & QuAcq {\em rand} & 495 & 1789 & 9 & 438 & 0.294 & 1.19 & 526.77 & 526.89 \\ 
 & MQuAcq {\em rand} & 495 & 970 & 8 & 49 & 0.086 & 1.18 & 83.00 & 83.12 \\ 
 & QuAcq {\em max$_v$} & 495 & 1740 & 9 & 473 & 0.306 & 3.26 & 532.96 & 533.06 \\ 
\multirow{-4}{*}{Golomb-12} & MQuAcq {\em max$_v$} & 495 & 567 & 3 & 2 & 0.109 & 1.25 & 61.78 & 61.87 \\ 
\hline
 & QuAcq {\em rand} & 276 & 1468 & 13 & 230 & 0.22 & 6.06 & 316.19 & 321.32 \\
 & MQuAcq {\em rand} & 276 & 1237 & 11 & 14 & 0.08 & 5.90 & 94.04 & 100.56 \\
 & QuAcq  {\em max$_v$} & 276 & 1808 & 13 & 277 & 0.16 & 2.99 & 285.23 & 293.84 \\
\multirow{-4}{*}{Exam TT} & MQuAcq  {\em max$_v$} & 276 & 1268 & 7 & 14 & 0.01 & 1.05 & 18.06 & 29.16 \\
\hline
 & QuAcq {\em rand} & 106 & 1094 & 26 & 77 & 0.242 & 6.63 & 264.62 & 268.32 \\ 
 & MQuAcq {\em rand} & 124 & 1445 & 24 & 25 & 0.115 & 6.55 & 165.76 & 173.90 \\ 
 & QuAcq {\em max$_v$} & 108 & 1073 & 23 & 91 & 0.152 & 8.00 & 163.33 & 170.04 \\ 
\multirow{-4}{*}{RLFAP} & MQuAcq {\em max$_v$} & 123 & 1377 & 9 & 1 & 0.042 & 7.63 & 57.12 & 70.43 \\ 
\hline
\end{tabular}}
}

\label{res:maxv}
\end{footnotesize}
\end{table}

Comparing against the results of random value ordering, we can see that the use of {\em max$_v$} does not affect the results of QuAcq significantly in terms of cpu time, but it has a negative effect on the number of queries required for the larger problems. This is because QuAcq does not use all the information included in each generated query, learning only one violated constraint.

With respect to MQuAcq, we observe that in RLFAP, Exam TT, Zebra and Murder, which include a small number of variables, the use of {\em max$_v$} reduces the total time and the average time per query (up to $70\%$ in Exam TT). In addition, in Golomb rulers the number of queries is reduced significantly ($42\%$) and the total time of the acquisition process is also reduced. In Purdey and Allergy, which are the smallest problems, there is no difference.
In contrast, in the other three benchmarks, which have a much larger $C_T$ (810 for Sudoku and GTSudoku and 900 for Latin), we observe that although the number of queries posted to the user is considerably reduced (by $25\%$ for Sudoku and GTSudoku and $17.2\%$ for Latin), 
the total time of the acquisition process and the average time per query are one order of magnitude higher compared to random value ordering. However, it can be seen that the maximum time the user has to wait for a query is not much higher. The total time and the average time per query are increased because of the branching that {\em FindAllCons} performs.
Another observation is that the use of {\em max$_v$} significantly reduces the average size per query as well as the number of complete queries posted to the user. 

For a closer look at the behaviour of QuAcq and MQuAcq with different value ordering heuristics for query generation, we evaluated their performance in terms of the time elapsed and the size of the learned network $C_L$ in relation to the number of queries posted to the user. 
Figures~\ref{fig:beh1} and~\ref{fig:beh2} illustrate the performance of the two algorithm on Sudoku and Latin when using random value ordering, while Figures~\ref{fig:beh3} and~\ref{fig:beh4} illustrate their performance when using {\em max$_v$} for value ordering.

\begin{figure}[h]

\subfloat[]{\includegraphics[width=2.2in]{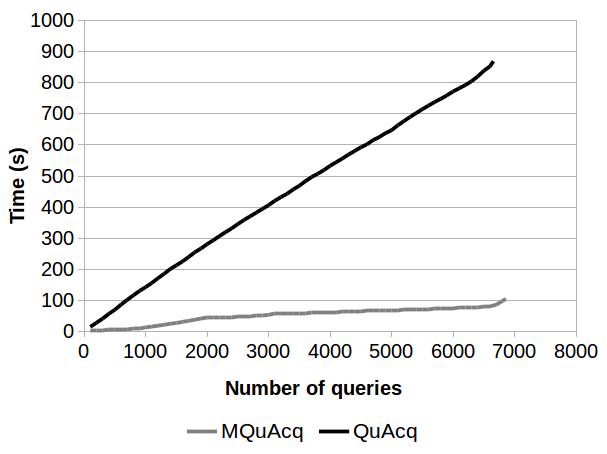}}
\qquad
\subfloat[]{\includegraphics[width=2.2in]{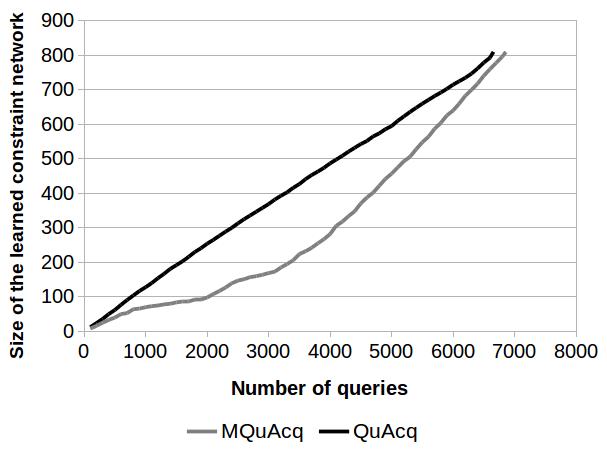}}

\caption{The behaviour of QuAcq and MQuAcq in Sudoku, using the {\em max$_B$} heuristic, {\em bdeg} for variable ordering and random value ordering.}

\label{fig:beh1}
\end{figure}

\begin{figure}[h]

\subfloat[]{\includegraphics[width=2.2in]{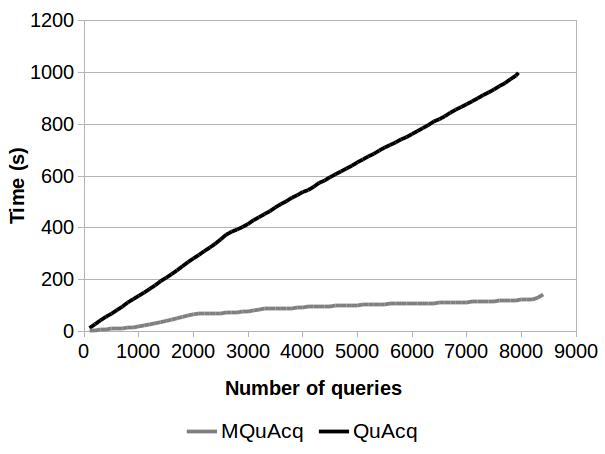}}
\qquad
\subfloat[]{\includegraphics[width=2.2in]{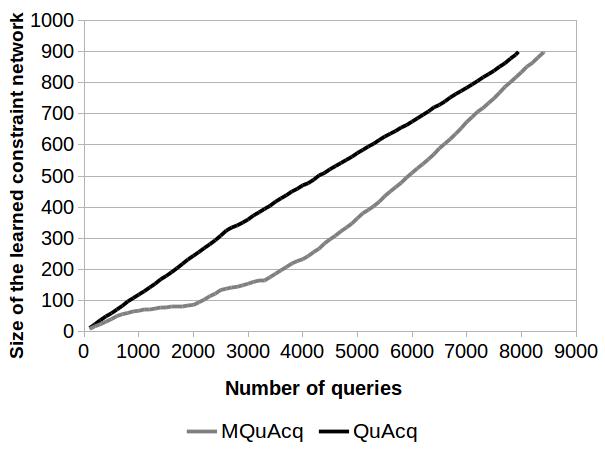}}

\caption{The behaviour of QuAcq and MQuAcq in Latin, using the {\em max$_B$} heuristic, {\em bdeg} for variable ordering and random value ordering.}

\label{fig:beh2}
\end{figure}

\begin{figure}[h]

\subfloat[]{\includegraphics[width=2.2in]{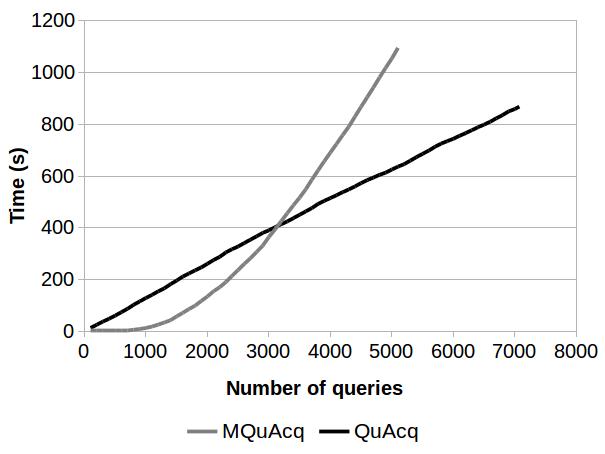}}
\qquad
\subfloat[]{\includegraphics[width=2.2in]{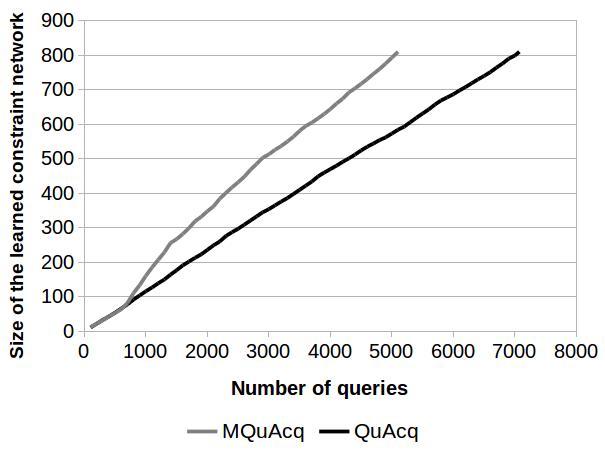}}

\caption{The behaviour of QuAcq and MQuAcq in Sudoku, using the {\em max$_B$} heuristic, {\em bdeg} for variable ordering and {\em max$_v$} for value ordering.}

\label{fig:beh3}
\end{figure}

\begin{figure}[h]

\subfloat[]{\includegraphics[width=2.2in]{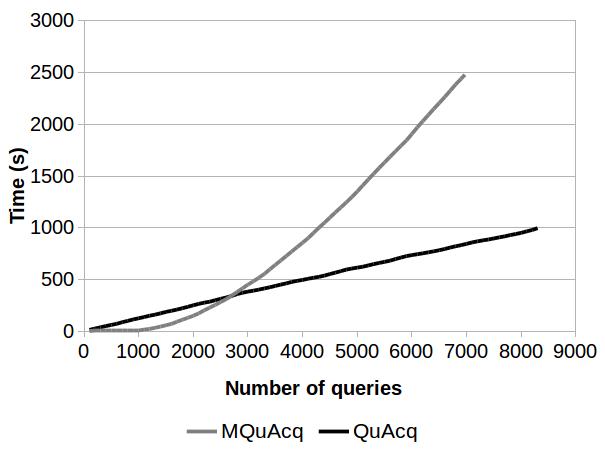}}
\qquad
\subfloat[]{\includegraphics[width=2.2in]{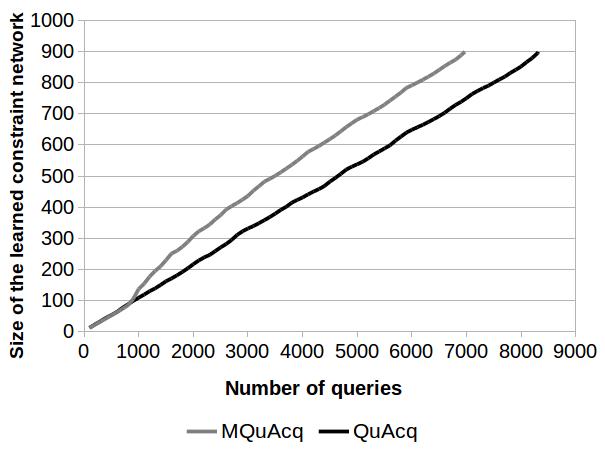}}

\caption{The behaviour of QuAcq and MQuAcq in Latin, using the {\em max$_B$} heuristic, {\em bdeg} for variable ordering and {\em max$_v$} for value ordering.}

\label{fig:beh4}
\end{figure}

In Figures~\ref{fig:beh1} and~\ref{fig:beh2} we can observe that using random value ordering QuAcq needs fewer queries to learn a higher proportion of the target network than MQuAcq. In contrast, MQuAcq needs far less time as it learns all the violated constraints from the target network from each generated query. 
When using {\em max$_v$} for value ordering, these results are reversed (Figures~\ref{fig:beh3} and~\ref{fig:beh4}). That is, MQuAcq takes more time than QuAcq as the process unfolds, but it requires fewer queries to learn the target network. This reversal occurs because with the {\em max$_v$} heuristic MQuAcq can acquire more information (i.e. more constraints) from each generated query. This leads to fewer queries but at the same time it needs more time due to the branching of {\em FindAllCons} as explained before.

A generic remark we can make regarding the value ordering heuristic in MQuAcq is that it can be selected depending on which metric of the constraint acquisition process is viewed as critical. If the only important factors are the number of queries posted to the user and the size of the queries, the {\em max$_v$} heuristic is a better option than random ordering. This is often the case when the user is human. On the other hand, if speeding up the acquisition process is more important, random value ordering should be preferred. This can occur in cases where the user is an existing software system. Concerning QuAcq, we can see that {\em max$_v$} does not improve the acquisition process in any metric. On the contrary, it increases the number of queries, as locating the scope of a violated constraint can end up in a lot small positive queries.

\subsection{The effect of the size of the bias}
\label{sec:bias}

We evaluated the effect of the size of the bias on MQuAcq, in terms of the number of queries posted and the time needed to converge. We used the constraint relations needed for each problem and increased the size of the bias progressively using the language $\{=, \neq, >, <, \leq, \geq, x_i - x_j = 1, |x_i - x_j| = 1, |x_i - x_j| > y, |x_i - x_j| = y, |\lfloor x_i/3 \rfloor - \lfloor x_j/3 \rfloor| > y\}$. We used ExamTT (Figure~\ref{fig:bias_tt}), Latin (Figure~\ref{fig:bias_latin}), Sudoku (Figure~\ref{fig:bias_sudoku}) and Golomb (Figure~\ref{fig:bias_golomb}) to evaluate the effect of the bias' size in different problems.

\begin{figure}[H]

\subfloat[]{\includegraphics[width=2.2in]{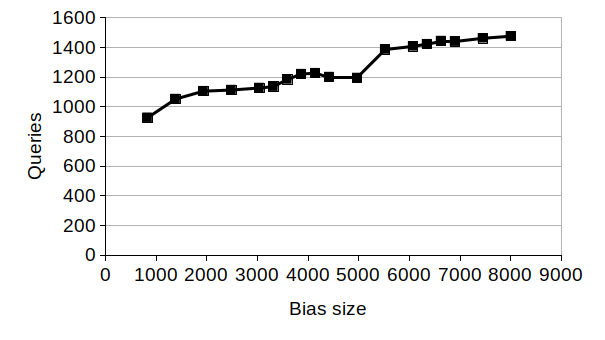}}
\qquad
\subfloat[]{\includegraphics[width=2.2in]{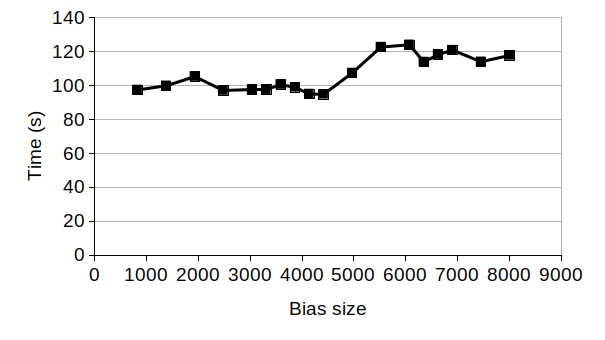}}

\caption{Performance of MQuAcq in Exam Timetabling with bias of different sizes.}

\label{fig:bias_tt}
\end{figure}

\begin{figure}[H]

\subfloat[]{\includegraphics[width=2.2in]{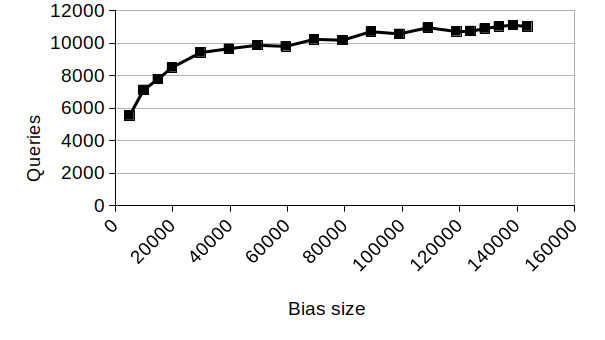}}
\qquad
\subfloat[]{\includegraphics[width=2.2in]{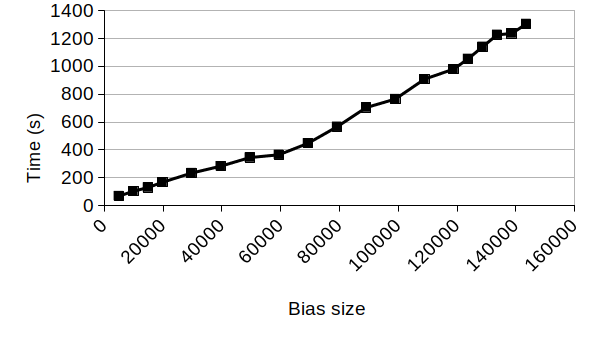}}

\caption{Performance of MQuAcq in Latin squares with bias of different sizes.}

\label{fig:bias_latin}
\end{figure}

\begin{figure}[h]

\subfloat[]{\includegraphics[width=2.2in]{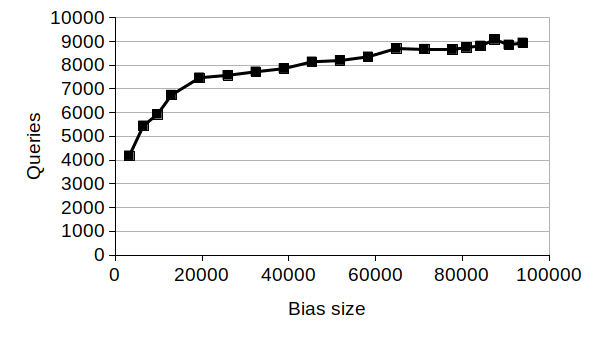}}
\qquad
\subfloat[]{\includegraphics[width=2.2in]{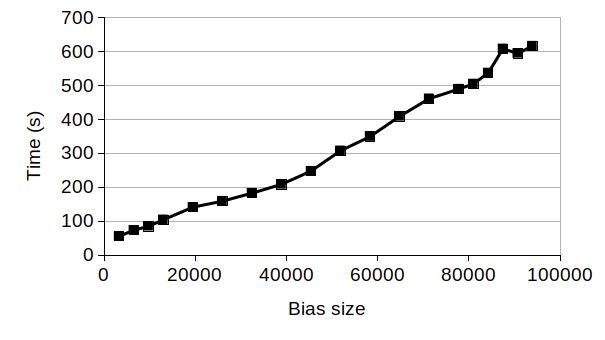}}

\caption{Performance of MQuAcq in Sudoku with bias of different sizes.}

\label{fig:bias_sudoku}
\end{figure}

\begin{figure}[h]

\subfloat[]{\includegraphics[width=2.2in]{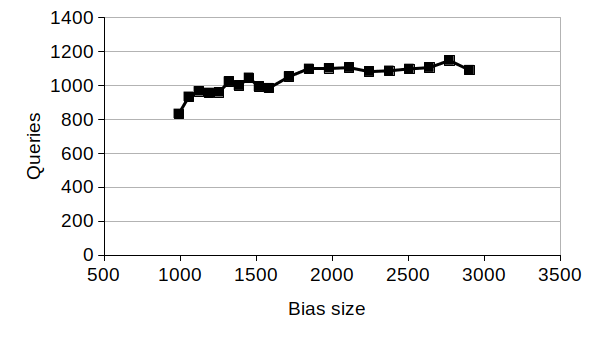}}
\qquad
\subfloat[]{\includegraphics[width=2.2in]{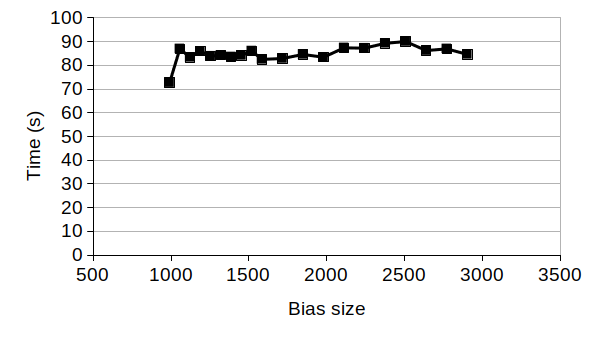}}

\caption{Performance of MQuAcq in Golomb rulers with bias of different sizes.}

\label{fig:bias_golomb}
\end{figure}

Looking at the effect of the bias on the number of queries, we can observe that it does not affect it considerably. In all the benchmarks, the number of queries increases logarithmically as the size of the bias is increased. This is more visible in Latin and Sudoku, where the increase in the bias size is larger, because of the larger number of variables in these benchmarks. These results agree with the corresponding results given in \cite{bessiere2013constraint}. The increase in the number of queries is very mild because although in the worst case each positive query will remove only one constraint from $B$ (in which case the increase in the number of queries would be substantial), in practice, each positive query removes several constraints from $B$, even in the same scope. Thus, as the number of constraints in $B$ increases, so does the average number of constraints removed by each positive query, resulting in a mild increase in the total number of queries.

Regarding the effect of the bias on the time required by MQuAcq, in problems with fewer variables, where the number of constraints in the bias is low even when the whole language is considered, the increase in the time needed to converge is very small (around 25s in Exam Timetabling and 20s in Golomb). However, in Sudoku and Latin, where the larger number of variables means that the size of the bias increases considerably when taking into account more relations in the language used, the increase in the time needed is sharper. But still, this increase is manageable. Overall, the results show that learning problems with expressive biases scales well, even when using a large language to construct the bias, especially regarding the number of generated queries.

\subsection{Scalability Analysis}
\label{sec:scaling}

Finally, we ran experiments to investigate the scalability of MQuAcq as the problem size increases. Towards this we used the following benchmarks, with instances of different sizes:

\textbf{Latin Square}. We used instances with the number of rows/columns $n = 6, ..., 12$. The language used is the same as above, i.e. $\Gamma = \{=, \neq, >, < \}$. Thus, the number of variables varied from 36 to 144 and the size of the target network from 180 to 1584 constraints.

\textbf{Exam Timetabling}. We used instances with the number of variables (number of courses) varying from 24 to 54. The size of the target network varied from 276 to 1431 constraints. The language used is the same as the one described above.

\textbf{Radio Link Frequency Assignment Problem}.
We used simplified versions of the problem, with 40, 45, 50, 55 and 60 variables. The size of the target network varied from 52 to 170 %containing
 binary distance constraints. The language used was the same as above ($\{|x_i - x_j| > y, |x_i - x_j| = y\}$, with 5 different possible values for $y$).

We ran MQuAcq with {\em FindScope-2}, using max$_B$ as the optimization heuristic. We used {\em bdeg} for variable ordering with {\em random} value ordering. We evaluated our algorithm in terms of the number of queries posted to the user and its time performance. The results are shown in Figure~\ref{fig:scale_latin} for Latin, Figure~\ref{fig:scale_rlfap} for RLFAP and Figure~\ref{fig:scale_tt} for the Exam Timetabling problem.

\begin{figure}[h]

\subfloat[]{\includegraphics[width=2.2in]{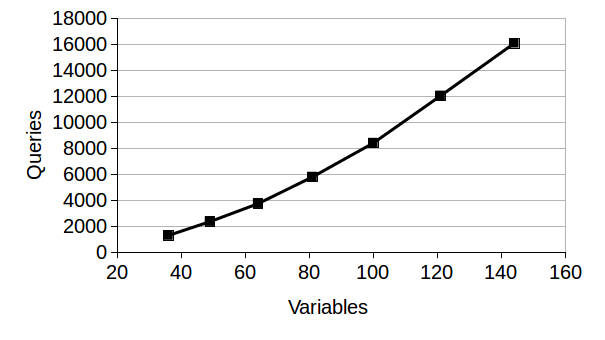}}
\qquad
\subfloat[]{\includegraphics[width=2.2in]{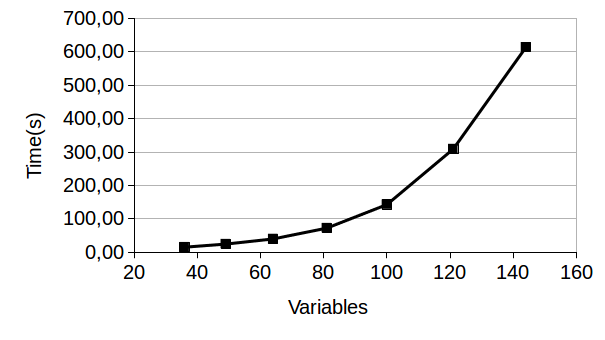}}

\caption{Performance of MQuAcq in Latin instances of different size}

\label{fig:scale_latin}
\end{figure}

\begin{figure}[h]

\subfloat[]{\includegraphics[width=2.2in]{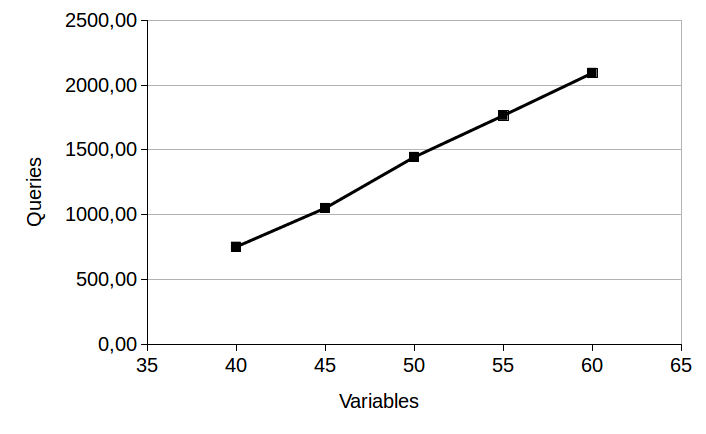}}
\qquad
\subfloat[]{\includegraphics[width=2.2in]{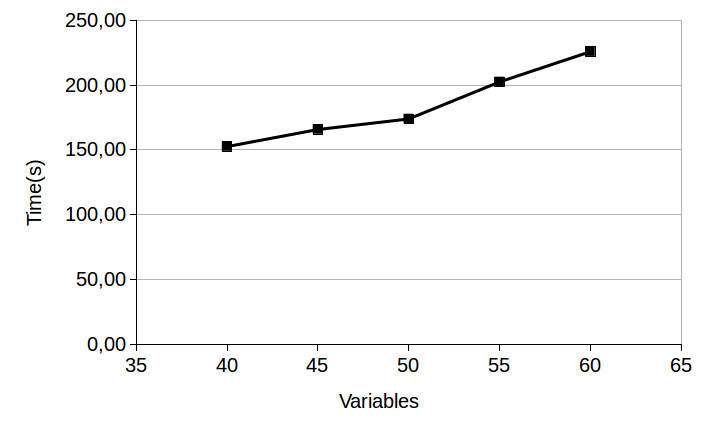}}

\caption{Performance of MQuAcq in RLFAP instances of different size}

\label{fig:scale_rlfap}
\end{figure}

\begin{figure}[h]

\subfloat[]{\includegraphics[width=2.2in]{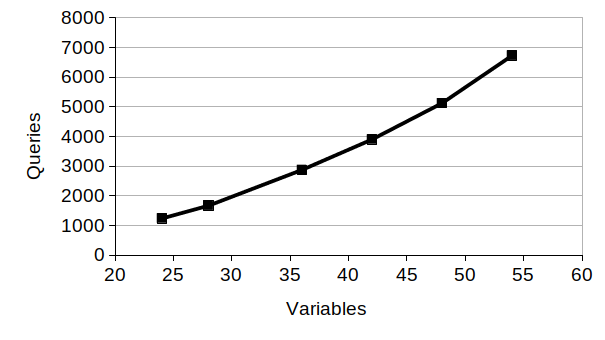}}
\qquad
\subfloat[]{\includegraphics[width=2.2in]{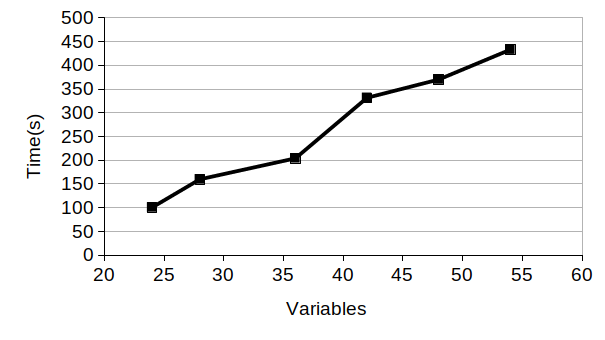}}

\caption{Performance of MQuAcq in Exam Timetabling instances of different size}

\label{fig:scale_tt}
\end{figure}

As we can observe, the increase in the number of queries is proportional to the increase in the number of variables for all benchmarks. This confirms our theoretical analysis. Focusing on the time performance, we can see that the time needed for convergence in Latin problems rises sharply as the number of variables grows beyond 100. This can be explained by the substantial increase in the number of constraints of the target network in these instances. On the other hand, in RLFAP, the increase in time is not very significant because the number of constraints remains relatively small. In the Exam Timetabling problem we see that the time needed is analogous to the number of queries, and grows proportionally to the number of variables present in the problem.

Hence, our proposed algorithm scales up quite well in terms of the number of queries required, while the time performance, being highly dependant on the size of the target network, can rise sharply, and even become unmanageable, for target networks with large numbers of constraints. We believe that methods that try to exploit the structure of the problem being learned may help alleviate this problem, and we intend to work on this in the immediate future.

\section{Discussion}
\label{sec:disc}

We now discuss certain aspects of MQuAcq in relevance to its performance, theoretical guarantees, and applicability. First, we discuss the importance of partial queries, which is a strong point of the algorithm. Then we elaborate on a negative result by proving that MQuAcq cannot learn constraint networks with an optimal number of queries even for very simple languages. Finally, we discuss a weakness of all the proposed constraint acquisition algorithms which paves the way for future work.

\subsection{On partial queries}

Given the importance of partial queries in MQuAcq (and QuAcq), a question that arises is whether such queries are easier or harder for the user to classify than complete ones. 

First of all, a partial example does not have to be part of a complete solution to be classified as positive. The user only needs to decide if the example at hand violates any requirement (constraint). Hence, it can be easier for the user to classify small examples with only a few variables instead of full assignments, simply because inspecting if a full assignment satisfies all the requirements can be very tedious. This is especially true when the problem is large, consisting of many variables. %, then classifying a complete query can be really hard for a human user. 
Hence, partial queries can make the acquisition process easier for the user.
In addition, the smaller negative examples posted to the user require fewer queries from {\em FindScope-2} to locate the scope of the violated constraint.

Another important factor that supports the argument that partial queries are easier to classify is that many partial queries are subsets of the same complete negative query. Thus, they may be easier for the user to classify simply because the user has already seen the full query and has determined that it is not a solution. 
As a result, generating a new example is not always the best choice if we have not acquired the desired information from the previous generated one. 

Considering the above, the way MQuAcq operates, posting more partial sub-queries than generating new ones, is favorable for the user, not only because of the reduced waiting time but also because it makes it easier to answer the queries.

A drawback of MQuAcq is that searching for all the violated constraints of each generated example can lead to posting a lot of relatively small positive queries that violate only a few constraints from the bias. However, it is desirable to prune the bias from the constraints that are not included in $C_T$ with as few queries as possible. Thus, ideally, we want each positive query to violate a maximum number of constraints from $B$ to prune it with just a few queries. This drawback of MQuAcq is the main reason for the slightly increased number of queries it posts compared to QuAcq when both of them learn the complete target network including the redundant constraints with the {\em max$_B$} heuristic (Table~\ref{res:maxb}). 

This problem could be avoided if the algorithm focused on some of the violated constraints by the generated example, instead of trying to acquire all of them. Non-random problems usually display some structure/pattern in their constraint network. However, this is not taken into account by the existing constraint acquisition algorithms. As future work, as we mentioned above, we plan to adjust the acquisition process to take into account the structure that is revealed as constraints are learned and hence target specific constraints.

\subsection{On the (non-)optimality of MQuAcq}

% To change !!!!!!!!!!!!!!!!!!!!!!!!!!!!!!!!!!!!!!!!!!!!

An interesting question about concept learning algorithms is whether they can learn certain types of concepts with an optimal number of queries. In~\cite{bessiere2013constraint} it was proved that QuAcq is guaranteed to converge after $O(|X| \cdot log(|X|))$ queries in the languages $\{=, \neq\}$ and $\{>\}$ on the Boolean domain. However, this is not the case for MQuAcq.
We now show that MQuAcq is not optimal even for very simple languages like the one that includes a basic binary relation. 

\begin{prop} 
\label{optimality-mquacq}

MQuAcq, using max for the generation of the queries, does not learn Boolean networks on the language $\{=\}$ with an optimal number of queries.

\end{prop}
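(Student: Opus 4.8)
The plan is to exhibit a target network over the Boolean domain with the singleton language $\Gamma=\{=\}$, together with a legal run of MQuAcq using the \emph{max} heuristic, that asks asymptotically more queries than any correct learner needs. For the benchmark: over $\{=\}$ the normalised networks on $|X|$ Boolean variables are, up to $sol$-equivalence, in bijection with the partitions of $X$, so there are $\mathcal{B}_{|X|}$ of them; since each query returns a single bit, $\log_2 \mathcal{B}_{|X|}=\Theta(|X|\log|X|)$ is a lower bound on the number of queries, and this bound is matched by QuAcq (the $O(|X|\cdot\log|X|)$ guarantee of~\cite{bessiere2013constraint} for $\{=,\neq\}$ carries over to the weaker language $\{=\}$). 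Hence ``optimal'' here means $\Theta(|X|\log|X|)$, and it suffices to produce one target and one valid execution of MQuAcq(max) that uses $\omega(|X|\log|X|)$ queries.

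The construction I would try starts from a target made of a linear number of pairwise variable-disjoint equality constraints, a ``matching'' $C_T=\{=_{1,2},=_{3,4},\dots\}$ with $|C_T|=\Theta(|X|)$, padded (with extra variables and/or further scopes whose bias constraints are not deleted too early) so that the still-``live'' part of the bias stays large throughout the run. First I would note that the \emph{max} heuristic, whose objective $|\kappa_B(e)|$ on a complete bias depends only on the sizes of the two colour classes of $e$, is free to return the ``checkerboard'' assignment: it is balanced, it maximises $|\kappa_B(e)|$, and it violates \emph{every} constraint of $C_T$ at once. The user answers ``no'', and \textit{FindAllCons}$(e,X,\emptyset)$ is entered; it learns one pair via \textit{FindScope}/\textit{FindC} and then branches on each of that pair's two variables, and since every other pair survives the deletion of a single variable, the call tree contains a chain of ``branch on an already-found scope'' steps of length $\Theta(|X|)$, each step spawning two recursive calls, whose leaves (those where the accumulated scope set has emptied) reach the ASK at line~10 of \textit{FindAllCons}. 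The task is then to count those ASK calls — the negative ones, which drive further learnings, and the positive ones, which only prune the bias — and to show the total is $\omega(|X|\log|X|)$, indeed $\Omega(|X|^2)$ if the padding is tuned.

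The main obstacle is exactly this count, because two mechanisms keep MQuAcq cheap and must be neutralised by the construction. First, the line~3 emptiness test aborts a recursive call the moment every still-unlearned violated constraint of the current partial example already has a scope in $Scopes$. Second, a positive answer — whether at line~10 of \textit{FindAllCons} or to a ``yes'' sub-query of \textit{FindScope} — removes the entire set $\kappa_B(e_Y)$ from $B$, which for a large partial assignment $e_Y$ is a quadratic-sized chunk of the bias; so if the bias drains after only $O(\log|X|)$ such deletions the remainder of the recursion is pruned and the whole run costs only $O(|X|\log|X|)$. A correct proof must therefore engineer the target so that the partial queries posted deep in the recursion keep violating fresh, not-yet-removed bias constraints — intuitively, by arranging that successive leaf queries violate essentially disjoint subsets of $B$, so that each is genuinely informative, must be asked, and yet cannot empty $B$ faster than the $\Theta(|X|)$-deep recursion unfolds. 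Once that is in place, the query count of this single negative example is superlinearithmic, contradicting the optimality guarantee of QuAcq and proving the proposition. Should a parameterised separation prove too delicate to pin down cleanly, it also suffices to fix one small target on which MQuAcq(max) demonstrably asks strictly more queries than the minimum, since the proposition only claims the absence of an optimality guarantee.
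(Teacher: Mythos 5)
There is a genuine gap, and it sits exactly at the step you flag yourself: the count of ASK calls is never carried out, and the target (``matching plus padding'') that is supposed to defeat the two pruning mechanisms you correctly identify (the line-3 test of \emph{FindAllCons} and the wholesale removal of $\kappa_B(e_Y)$ after every ``yes'') is never exhibited. Without that, the argument does not go through, and it is far from clear that it can: a matching target contains \emph{no} redundant constraints, so MQuAcq learns only the $\Theta(|X|)$ constraints of $C_T$ from it, at a cost of $O(\log|X|)$ queries each, and the whole superlinearithmic blow-up would have to come from the positive leaf queries of the recursion tree --- precisely the queries that prune huge chunks of $B$ and thereby shut down the remaining branches via the line-3 check. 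You concede this tension but leave it unresolved, which is the heart of the claim.

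The paper's proof sidesteps this difficulty entirely by exploiting redundancy instead of recursion-tree growth: take $C_T$ to be the full clique of $=$ constraints (all variables equal). This network is equivalent to one with only $|X|-1$ constraints and is learnable in $O(|X|\cdot\log|X|)$ queries, matching the $\Omega(|X|\cdot\log|X|)$ lower bound of~\cite{bessiere2013constraint}. But a negative example makes every violated pair a minimal scope, and by the completeness argument of Theorem~\ref{correctness} \emph{FindAllCons} learns \emph{every} minimal scope it encounters; hence MQuAcq acquires $\Theta(|X|^2)$ constraints --- redundant ones included --- each costing queries (up to $2\cdot|S|\cdot\log|X|$ by Proposition~\ref{prop:findscope-2_compl}, and in any case at least one), giving $\Omega(|X|^2\cdot\log|X|)$ queries in the worst case, hence non-optimality. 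Your proposal misses this mechanism, which is what makes the result essentially one paragraph long. Finally, your fallback --- exhibiting a single small target on which MQuAcq asks strictly more queries than the minimum --- does not prove the proposition: optimality here is asymptotic ($\Theta(|X|\cdot\log|X|)$), so a constant-factor loss on one fixed instance establishes nothing; one needs a family of instances with a growing separation, which the clique construction provides.
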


\begin{proof} 

In~\cite{bessiere2013constraint}, it is proved that the minimum number of queries required to learn a constraint network in this language is in $\Omega(|X| \cdot log|X|)$. In such a network, the maximum number of constraints is equal to the number of $2$-combinations of the $|X|$ variables which is $\frac{|X|\cdot(|X|-1)}{2}$.  For each constraint a number of queries up to $2 \cdot |S| \cdot log|X|$ may be needed by {\em FindScope-2} (Proposition~\ref{prop:findscope-2_compl}). Assume that MQuAcq generates an example that violates all the constraints. {\em FindAllCons} will find all the minimal scopes and learn all the constraints. Thus, in this case it will learn both the redundant and the non-redundant constraints, i.e. it will learn $\frac{|X|\cdot(|X|-1)}{2}$ constraints. As a result, in the worst case MQuAcq's number of queries to learn the constraint network is in $\Omega(|X|^2 \cdot log|X|)$, which is not optimal.  

\end{proof}

The following example illustrates this behaviour of MQuAcq, contrasting it to QuAcq.

\begin{example}
\label{ex:optim}

Consider a problem consisting of $4$ variables with domains $\{ 1, 2 \}$. Also, assume that the target network consists of a single clique of $=$ constraints, i.e. $C_T = \{ =_{12}, =_{13}, =_{14}, =_{23}, =_{24}, =_{34} \}$. Note that there exist equivalent networks to $C_T$ with fewer constraints. For instance, the first three constraints are enough to form an equivalent network, and in this case, the other three are implied (i.e. they are redundant).

As MQuAcq will learn the entire target constraint network, including the redundant constraints, it will need $|C_T| * log(|X|) = 6*2 = 12$ queries to find the scopes of the constraints.  %and learn them. 
No query is needed to be made by FindC, as we have $|\Gamma| = 1$. On the other hand, QuAcq will not learn the redundant constraints. %A constraint network consisting only of $=$ constraints, without inlcuding redundant constraints, is tree-structured. %That is because in this language if a constraint creates a cycle is redundant. 
The number of non-redundant constraints in the above constraint network is equal to $|X| -1 = 3$. Thus, QuAcq will need $3*2 = 6$ queries to learn the constraint network and converge.

\end{example}

We now prove that if $max_B$ is used by either QuAcq or MQuAcq, allowing them to generate partial queries (e.g. at line 5 of MQuAcq or line 4 of QuAcq), then these algorithms are not optimal in terms of the number of queries posted to the user even on the very simple language $\{=\}$, on which QuAcq (with {\em max}) is optimal.

\begin{prop} 
\label{optimality-partial}

Constraint acquisition algorithms QuAcq and MQuAcq do not learn Boolean networks on the language $\{=\}$ with an optimal number of queries, if partial queries can be generated.

\end{prop}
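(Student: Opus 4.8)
The plan is to re-use the $=$-clique instance of Example~\ref{ex:optim} and Proposition~\ref{optimality-mquacq}: fix $X=\{x_1,\dots,x_n\}$ with Boolean domains, $\Gamma=\{=\}$, and let $C_T$ be the clique $\{=_{ij}\mid i<j\}$, whose solutions are exactly the two constant assignments. Then $B$ contains all $\binom{n}{2}$ equalities, an equivalent network needs only $n-1$ of them, and by~\cite{bessiere2013constraint} the worst-case optimal query complexity for this language is $\Theta(n\log n)$ (matched by QuAcq with {\em max}). I will show that when partial queries may be generated (i.e.\ when $\max_B$ is the generation heuristic) both algorithms spend $\Omega(n^2)=\omega(n\log n)$ queries on this instance, hence are not optimal. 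For MQuAcq this is essentially Proposition~\ref{optimality-mquacq}: a balanced Boolean assignment of $X$ is a legal $\max_B$ query, since it is complete (accepted by $C_L[X]=C_L=\emptyset$) and violates the maximum possible $\lfloor n/2\rfloor\lceil n/2\rceil$ equalities, so $\max_B$ may output it exactly as {\em max} does; fed to {\em FindAllCons} it forces that function to locate every minimal scope among the violated pairs and learn all of them through {\em FindC}, so $|C_L|$ grows to $\Omega(n^2)$.

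The substantive case is QuAcq, which is optimal on $\{=\}$ when restricted to complete queries, so the partial queries must be what breaks optimality. The key observation is an invariant on $B$: a constraint $=_{ij}$ can leave $B$ only by being added to $C_L$, never through a positive answer, because any positive (partial) query on a variable set containing $\{x_i,x_j\}$ necessarily satisfies $=_{ij}\in C_T$ and hence does not belong to its $\kappa_B$. Moreover, while some $=_{ij}$ is still in $B$, the two-variable partial query $e_Y$ with $Y=\{x_i,x_j\}$ and $e_{x_i}\neq e_{x_j}$ is irredundant — it is accepted by $C_L[Y]=\emptyset$ (the language is unary-free and $=_{ij}\notin C_L$) and violates $=_{ij}\in B$ — so $\max_B$ always has an irredundant query to return and the algorithm cannot converge; when such a query is generated the user answers ``no'', {\em FindScope} returns the minimal scope $\{x_i,x_j\}$, and {\em FindC} returns the unique bias constraint on that scope, whose line-2 implication test inspects only constraints of scope contained in $\{x_i,x_j\}$ (of which $C_L$ has none) and therefore does not discard $=_{ij}$. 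Combining these, every cutoff-free execution that converges must have driven $B$ to $\emptyset$, hence learned all $\binom{n}{2}$ equalities; since QuAcq adds at most one constraint to $C_L$ per generated query, it posts at least $\binom{n}{2}=\Omega(n^2)$ queries, and the identical $\binom{n}{2}$-constraint blow-up recovers the MQuAcq bound, completing the proof.

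I expect the main obstacle to be pinning down the end-game rigorously. One has to argue that, once $C_L$ already entails all of $C_T$, the optimization solved by $\max_B$ really does return a query that violates a now-redundant bias constraint — the small two-variable query above, or a larger partial query on a variable set inducing a disconnected subforest of $C_L$ — rather than the generator reporting failure (which it theoretically cannot, but this must be stated), and that the ensuing {\em FindScope}/{\em FindC} pass indeed learns the constraint rather than triggering a spurious ``collapse''. This last point hinges on reading {\em FindC}'s line-2 test as the local redundancy check used in the literature and in this paper; a full entailment test would make {\em FindC} return $nil$ and the algorithm report collapse — an even worse outcome, but a different statement. Once these points and the absence of (cutoff-free) premature convergence are fixed, the counting ``$\binom{n}{2}$ constraints $\Rightarrow \Omega(n^2)=\omega(n\log n)$ queries'' is routine and identical for both algorithms.
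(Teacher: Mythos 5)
Your proposal is correct and takes essentially the same route as the paper's own (much terser) proof: both argue that once partial queries can be generated, the redundant constraints of the Boolean $=$-clique are also learned, so the algorithms acquire $\Theta(|X|^2)$ constraints and hence post $\omega(|X|\log|X|)$ queries, exceeding the $O(|X|\log|X|)$ optimum of~\cite{bessiere2013constraint}. You merely flesh out what the paper asserts in one sentence (``redundant constraints can be learned too'') with the invariant that target constraints never leave $B$ via positive answers and with the caveat about \emph{FindC}'s implication test, both of which are consistent with the paper's intended behaviour of {\em max$_B$}.
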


\begin{proof} 

The minimum number of queries required to learn a constraint network in this language is in $\Omega(|X| \cdot log|X|)$~\cite{bessiere2013constraint}. The maximum number of constraints is equal to the number of $2$-combinations of the $|X|$ variables which is $\frac{|X|\cdot(|X|-1)}{2}$.  For each constraint a number of queries up to $2 \cdot |S| \cdot log|X|$ is needed by {\em FindScope-2} (Proposition~\ref{prop:findscope-2_compl}). In case that partial queries can be generated, redundant constraints can be learned too, so in the worst case the number of queries to learn the constraint network is in $\Omega(|X|^2 \cdot log|X|)$, which is not optimal.  

\end{proof}

\subsection{On errors and omissions}

One significant issue that has not been addressed in the context of constraint acquisition is the possibility of omissions and/or errors in the answers of the user to the posted queries. All the constraint acquisition algorithms that have been proposed are guaranteed to operate only under the assumption that the queries are answered correctly. %, though the presence of omissions and/or errors have not been studied.

In the context of concept learning the existence of omissions or errors has been studied for some classes of concepts. Angluin et. al presented an algorithm that can learn the target concept function by using equivalence and incomplete membership queries ~\cite{angluin1994randomly} . In this model, the answers to some of the learner's membership queries may be unavailable. Extending this, in the exact learning model defined in~\cite{angluin1997malicious} the learning system can learn exactly a target concept using equivalence and membership queries with at most some number $l$ of errors or omissions in the answers of the user to the membership queries posted. 

In this model the {\em limited} membership queries and the {\em malicious} membership queries are introduced. A limited membership query may be answered either by classifying the example (correctly), or with a special answer i.e. ``I don't know'', while in a malicious membership query the classification by the user may be wrong.
%The examples answered with “I don’t know” are allowed to be classified arbitrarily by the final hypothesis of a learning algorithm
Although equivalence queries are more difficult to be answered than membership queries and more expensive~\cite{bshouty1996asking}, in the above model the assumption is that the answers to equivalence queries remain correct, meaning that any counterexample returned is indeed a counterexample to the hypothesis of the learning algorithm.

Extending the above models to more classes, \cite{bisht2008learning} showed that for concepts that are closed under projection both models are equivalent to the exact learning model without omission and errors. In addition, the presented system can also handle errors in the equivalence queries, i.e. the {\em malicious} equivalence query (MEQ) is introduced, in which the user can return a wrong counterexample (an assignment that is not a counterexample in the hypothesis) for at most $l$ different assignments.

One relevant question regarding the answers to the queries is whether the omissions or errors are {\em persistent} or not. They are persistent if the same query to the same examples always returns the same answer (even if the answer is an omission or if it is wrong). In the above models the assumption is that the answers are persistent. 
A model of non-persistent errors is defined by Sakakibara~\cite{sakakibara1991learning}, in which the answer to each query may be wrong with some given probability. In this model, repeated membership queries for the same example are considered as independent events, so the answer may be different. In this model, a general technique of repeating each query sufficiently often to establish the correct answer with high probability is introduced.

Although in the constraint acquisition context we only have membership and partial queries (as explained, equivalence queries are considered too hard to be answered by the user), the presence of omissions or errors has not been studied yet. We plan to deal with this significant issue in the future. Of course in this context limited and malicious partial queries have to be examined too.

\section{Conclusion}
\label{sec:conclusion}

Constraint acquisition has started to receive increasing attention as a useful tool for automated problem modeling in CP. As a result, a number of both passive and active acquisition algorithms have been proposed, with QuAcq and MultiAcq being prime examples of active algorithms. However, two bottlenecks of such algorithms are the large number of queries required to converge to the target network, and the high cpu times needed to generate queries, especially near convergence. An additional side effect of the latter is the often occurrence of premature convergence in constraint acquisition systems.

We have presented new methods that can boost the performance of active constraint acquisition systems. We proposed the MQuAcq algorithm which extends QuAcq to discover all the violated constraints from a negative example, just like MultiAcq does, but with a better complexity bound in terms of the number of queries. We also proposed an optimization on the process of locating scopes that, as experiments demonstrate, helps reduce the number of queries by up to $85\%$ in some cases.

Another contribution of our work is that we focus on query generation which is a very important but rather overlooked part of the acquisition process. We described the algorithmic query generation process of standard interactive acquisition systems in detail, and we proposed several heuristics that can be applied during query generation to boost the performance of constraint acquisition algorithms.

Experimental results demonstrate that an algorithm which integrates all our methods significantly outperforms the state-of-the-art active constraint acquisition algorithms on all the important metrics. It does not only generate considerably fewer queries than QuAcq and MultiAcq, but it is also by far faster than both of them, both in average query generation time and in total run time.
Last but not least, our proposed heuristics for the query generation process support the generation of more ``informative'' queries and also largely alleviate the premature convergence problem.

As future work, it would be very interesting if a hybrid system that integrades a passive learning method, specifically ModelSeeker, and an active one, such as MQuAcq, was designed and built. We believe that the two approaches are orthogonal, and combining their strengths may prove very beneficial in practice. ModelSeeker can learn constraints in highly structured problems using only very few examples, but this is not the case in problems with irregular structure. On the other hand, active methods, such as our own, require to generate a much larger number of examples to learn problems like Sudoku, but can handle irregularly structured problems. So ideally, in the future we would like to have a hybrid system that takes as input a (small) set of examples, runs ModelSeeker to learn the basic constraints, and then completes the model using an active technique.

% BibTeX users please use one of
\bibliographystyle{spbasic}      % basic style, author-year citations
\bibliography{paper}   % name your BibTeX data base

\end{document}